\documentclass{article} 
\usepackage{preprint_style,times}

\usepackage{tikz}

\newcommand{\martiniclass}{%
  \raisebox{-0.3ex}{%
    \begin{tikzpicture}[scale=0.24, baseline=(current bounding box.south), line width=0.7pt]
      \fill[yellow!30!white, opacity=0.4] (-1.3,1.4) -- (1.3,1.4) -- (0,0.2) -- cycle;
      \draw[line width=0.4pt, gray!80] (-1.3,1.4) -- (1.3,1.4); 
      
      \draw (-1.5,1.6) -- (1.5,1.6) -- (0,0.2) -- cycle; 
      \draw (0,0.2) -- (0,-0.8);                         
      \draw (-0.8,-0.8) -- (0.8,-0.8);                   
      
      \draw[line width=0.4pt, brown] (0.2,0.6) -- (1.3,1.7); 
      
      \fill[green!60!black] (0.72,1.25) circle (0.25);
      \fill[red!90!black] (0.77,1.3) circle (0.08); 
    \end{tikzpicture}
  }%
}

\usepackage{algorithm}
\usepackage{algpseudocode}  
\usepackage{wrapfig}
\usepackage{float}

\usepackage[utf8]{inputenc} 
\usepackage[T1]{fontenc}    
\usepackage{hyperref}       
\usepackage{url}            
\usepackage{longtable}
\usepackage{booktabs}       
\usepackage{enumitem}

\usepackage{amsfonts}       
\usepackage{nicefrac}       
\usepackage{microtype}      
\usepackage[table]{xcolor}  
\usepackage{arydshln}

\usepackage{xspace}
\usepackage{lineno}

\usepackage{mathtools}
\usepackage{todonotes}
\usepackage{amssymb,fge}
\usepackage{thm-restate}
\usepackage{wrapfig}
\usepackage{bbm}
\usepackage{mathrsfs}
\usepackage{soul}
\usepackage{array}
\usepackage{multirow}
\usepackage{algorithm}
\usepackage[commentColor=black,beginLComment=/*~, endLComment=~*/]{algpseudocodex}
\usepackage{subcaption}
\usepackage[font=small]{caption}
\usepackage{pifont}
\usepackage{tikz-3dplot}
\usepackage{pgfplots}
\pgfplotsset{compat=newest}
\usepackage{tikzscale}
\usepackage{comment} 
\PassOptionsToPackage{final,pagebackref=true}{hyperref}
\usepackage{hyperref} 
\usepackage{cleveref}
\crefname{figure}{Figure}{Figures}
\crefname{table}{Table}{Tables}
\crefname{equation}{Equation}{Equations}
\usepackage{wasysym}
\usepackage{acronym}
\usepackage{xfrac}
\usepackage{multibib}
\usepackage{float}

\DeclareMathOperator{\diag}{diag}
\newcommand{\PL}{\mathrm{PL}}

\newcommand{\PCC}{\mathrm{PCC}}

\newcommand*{\backrefalt}[4]{%
    \ifcase #1 \footnotesize{(Not cited.)}%
    \or        \footnotesize{(Cited on page~#2)}%
    \else      \footnotesize{(Cited on pages~#2)}%
    \fi}
\newcolumntype{P}[1]{>{\centering\arraybackslash}p{#1}}

\hypersetup{
	colorlinks=true,       
	linkcolor=blue,        
	citecolor=blue,        
	filecolor=magenta,     
	urlcolor=blue         
}

\usepackage{amsmath, amsthm, amsfonts, bm}
\usepackage{mathtools}
\usepackage{thmtools}
\usepackage{thm-restate}
\usepackage{xcolor}
\usepackage{xspace}
\usepackage[framemethod=TikZ]{mdframed}

\newtheorem{proposition}{Proposition}
\newtheorem{corollary}{Corollary}

\newtheorem{lemma}{Lemma}

\newcommand{\gopt}{g^{\mathrm{opt}}}

\def\1{\bm{1}}

\DeclareMathAlphabet{\mathsfit}{\encodingdefault}{\sfdefault}{m}{sl}
\SetMathAlphabet{\mathsfit}{bold}{\encodingdefault}{\sfdefault}{bx}{n}

\def\gD{{\mathcal{D}}}

\def\gG{{\mathcal{G}}}

\def\gL{{\mathcal{L}}}

\def\gN{{\mathcal{N}}}

\def\gX{{\mathcal{X}}}

\def\sP{{\mathbb{P}}}

\newcommand{\E}{\mathbb{E}}

\newcommand{\R}{\mathbb{R}}

\newcommand{\KL}{{\mathrm{KL}}}
\newcommand{\Var}{\mathrm{Var}}

\newcommand{\Cov}{\mathrm{Cov}}

\DeclareMathOperator*{\argmin}{arg\,min}

\newcommand{\xhdr}[1]{{\noindent\bfseries #1}.}
\newcommand{\cut}[1]{}

\newcommand{\namelong}{\textbf{M}ixing \textbf{A}nd \textbf{R}eward \textbf{T}ilting for \textbf{IN}ference-time \textbf{I}mprovement\xspace}
\newcommand{\nameshort}{\textsc{Martini}\xspace}

\mdfdefinestyle{MyFrame}{%
    linecolor=black,
    outerlinewidth=.3pt,
    roundcorner=5pt,
    innertopmargin=1pt,
    innerbottommargin=1pt,
    innerrightmargin=1pt,
    innerleftmargin=1pt,
    backgroundcolor=black!0!white}

\mdfdefinestyle{MyFrame2}{%
    linecolor=white,
    outerlinewidth=1pt,
    roundcorner=1pt,
    innertopmargin=3pt,
    innerbottommargin=2pt,
    innerrightmargin=7pt,
    innerleftmargin=7pt,
    backgroundcolor=black!3!white}

\mdfdefinestyle{MyFrameEq}{%
    linecolor=white,
    outerlinewidth=0pt,
    roundcorner=0pt,
    innertopmargin=0pt,
    innerbottommargin=0pt,
    innerrightmargin=7pt,
    innerleftmargin=7pt,
    backgroundcolor=black!3!white}

\makeatletter
\@ifundefined{assumption}{%
  \theoremstyle{definition}
  \newtheorem{assumption}{Assumption}[section]
  \theoremstyle{plain}
}{}
\makeatother

\crefname{assumption}{Assumption}{Assumptions}
\Crefname{assumption}{Assumption}{Assumptions}

\title{Steered Not Stirred: Effective Reward Tilting using Surrogate Reward Mixing \protect\martiniclass}
\makeatletter
\let\original@fnsymbol\@fnsymbol
\renewcommand{\@fnsymbol}[1]{%
  \ifnum#1=2 **\else\original@fnsymbol{#1}\fi}
\makeatother

\author{%
  \textbf{Alec Sargood}$^{1,2}$\thanks{\texttt{alec.sargood.23@ucl.ac.uk}}
  \quad \textbf{Olivia Gallup}$^{1,3}$
  \quad \textbf{Georg Maierhofer}$^{4}$ \\
  \textbf{Noam Ghenassia}$^{1}$
  \quad \textbf{Paul Duckworth}$^{1}$
  \quad \textbf{Krisztina Sinkovics}$^{1}$\thanks{Joint last authors.}
  \quad \textbf{Avishek Joey Bose}$^{5,6}$\footnotemark[2] \\[1ex]
  {\small
  $^{1}$InstaDeep \quad
  $^{2}$University College London \quad
  $^{3}$University of Oxford} \\
  {\small
  $^{4}$University of Cambridge \quad
  $^{5}$Imperial College London \quad
  $^{6}$Mila--Qu\'{e}bec AI Institute}
}

\iclrfinalcopy 
\begin{document}

\maketitle

\begin{abstract}
\looseness=-1
Continuous-time generative models, from diffusion models, flow matching, and now flow maps, owe their practical value to high-fidelity generation and steerability. In many applications, a ground-truth latent ``oracle'' reward $R^*$ is unavailable or expensive to evaluate, e.g., human judgments or experimental measurements. Although the oracle reward is not directly observable, it can often be approximated through multiple cheaper surrogates that, individually, provide incomplete characterizations but collectively reveal substantial information about the downstream task. In this paper, we present a method for composing such surrogates to steer a pretrained model toward the distribution tilted by an unavailable $R^*$. Given a small feedback set of oracle rankings or numerical rewards, we introduce \namelong (\nameshort), which learns a surrogate composition for plug-and-play use with standard inference-time steering methods. Our theory characterizes surrogate information limits and relates population recovery error to ideal-target KL at small tilt strength. For preferences, the bound additionally accounts for an irreducible mismatch between the Plackett--Luce population target and the conditional mean. Experiments demonstrate improved oracle-tilt fidelity in language generation and ImageNet alignment, and improved joint pose accuracy and validity in protein--ligand cofolding, without inference-time oracle queries.
\end{abstract}

\section{Introduction}
\label{sec:introduction}
\looseness=-1
In many real-world deployments, generative models are judged not only by sample fidelity to the pretraining distribution, but by whether their generations satisfy downstream task-specific objectives~\citep{zhang2026instruction}. Consequently, it remains imperative to align generative models to desired rewards through algorithmic frameworks such as inference-time steering or post-training~\citep{uehara2025inference} that, in effect, sharpen the model distribution on samples with high downstream utility, where that utility is too costly or inaccessible to supervise at pretraining scale. The central discrepancy between pre-training and downstream task alignment, therefore, requires high-quality post-training data under an oracle reward, whose acquisition is often bottlenecked due to high collection costs~\citep{gao2023scaling}, from capturing the nature of human preferences~\citep{rafailov2023direct,ethayarajh2024kto} to experimental measurements needed to evaluate biomolecular design~\citep{pathak2025ai}.

\looseness=-1
In lieu of carte blanche access to a reward oracle that is perfectly predictive of a desired downstream task, practitioners often have access to multiple weakly informative signals in the form of surrogate or proxy reward models. In text-to-image generation, for example, human preferences are approximated by aesthetic, alignment, and learned preference scores such as, ImageReward~\citep{xu2023imagereward}, CLIP~\citep{radford2021learning}, and PickScore~\citep{kirstain2023pick}. In molecular and protein design, physical plausibility checks~\citep{buttenschoen2024posebusters} and machine-learned interatomic potentials~\citep{wood2026family} similarly serve as proxies for physical assays or high-fidelity simulations. In each example, cheap proxy reward signals are individually imperfect and only weakly correlated with the true downstream task, yet collectively they contain substantial information and often complementary evidence for successful candidates. Despite this promise---including the potential to mitigate reward hacking~\citep{coste2024reward,eisenstein2023helping}---existing inference-time steering methods for diffusion and flow-matching models are largely formulated around a single surrogate reward. As a result, a key missing development is a principled framework for composing multiple proxies to recover the latent oracle objective, rather than standard multi-objective optimization that treats surrogates as separate objectives. This motivates a fundamental question: \emph{How can multiple weak reward surrogates be composed to steer a pretrained diffusion or flow model towards the reward-tilted distribution favored by an unavailable oracle?}

\begin{figure}[t]
  \centering
  \includegraphics[width=0.9\textwidth]{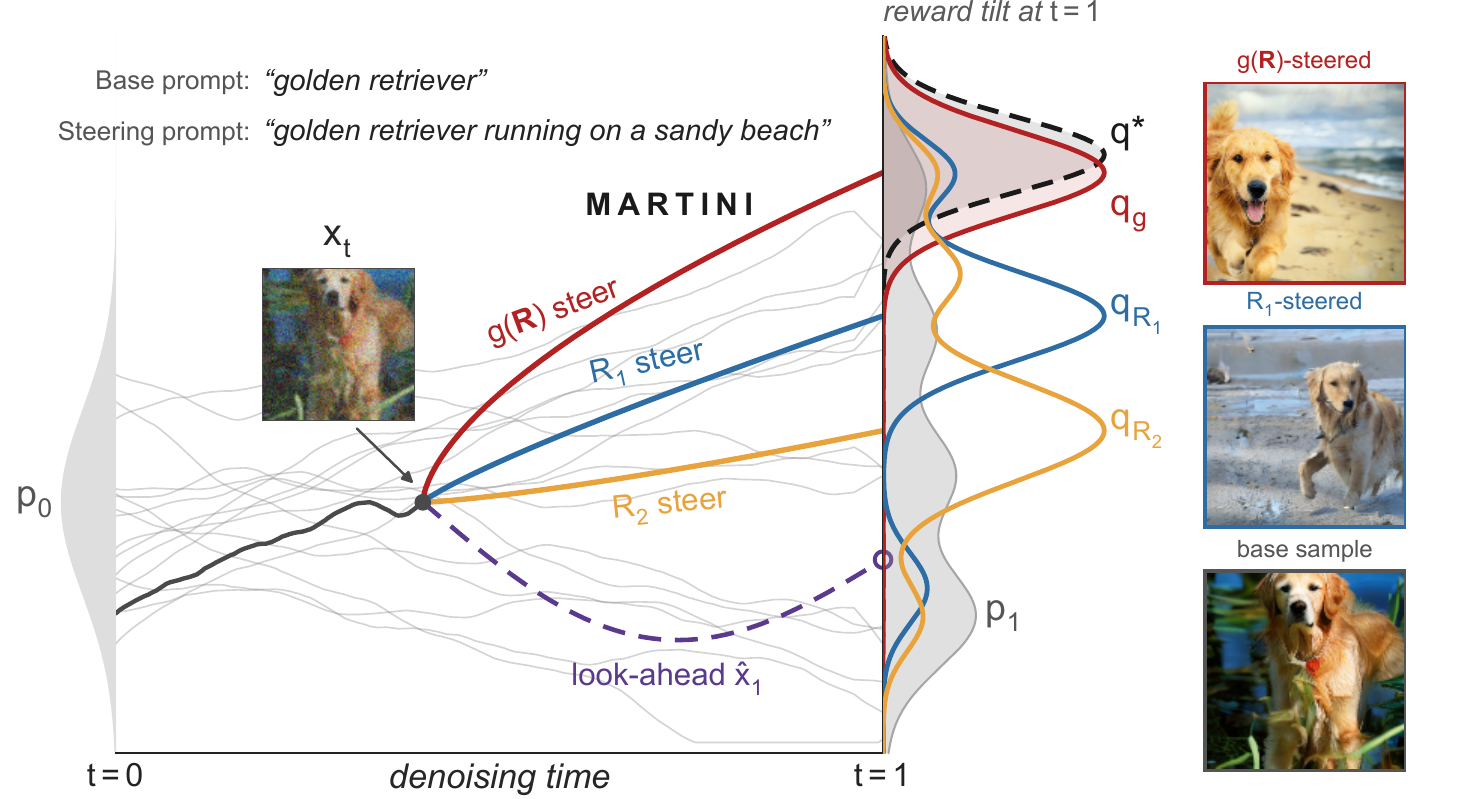}
  \vspace{-5pt}
\caption{\looseness=-1 \small
\nameshort learns a surrogate composition $ g\circ\mathbf R$ from limited offline oracle feedback and uses it to steer toward $q_{ g}$, aiming to approximate the gold reward tilt $q^*$. At inference, \nameshort steers without querying $R^*$.}
  \label{fig:visual-abstract}
  \vspace{-10pt}
\end{figure}
\looseness=-1
\xhdr{Present work}
We study steering toward the reward-tilted distribution of a latent oracle reward, $R^*$, available only on a small annotated feedback set, using multiple cheap surrogates and no query access to $R^*$ at inference. More precisely, we formulate two settings representative of real-world use, in which the feedback set supplies either preference rankings over candidate samples or direct scalar reward values. We introduce \nameshort (\textbf{M}ixing \textbf{A}nd \textbf{R}eward \textbf{T}ilting for \textbf{IN}ference-time \textbf{I}mprovement, \cref{fig:visual-abstract}), which learns a composition of the surrogates from that feedback and allows plug-and-play deployment in existing reward steering frameworks. \nameshort exploits complementary surrogate information to improve steering toward $R^*$, targeting a different solution concept from Pareto optimization. We summarize our main contributions as follows:

\begin{enumerate}[topsep=0pt, partopsep=0pt, itemsep=0pt, parsep=0pt, leftmargin=*]
    \item \looseness=-1 \xhdr{Methods} We develop a novel algorithm in \nameshort that exploits the problem structure of steering towards a gold reward $R^*$, witnessed only on a feedback set, by learning a composition function over cheap reward surrogates that can readily interface with existing steering pipelines. 

    \item \looseness=-1 \xhdr{Theory} We characterize the information limit of the surrogates and identify the exact KL-optimal composition. Conditional-mean regression minimizes recovery error and the leading small-guidance KL coefficient. We also bound reward gain and gap, and relate preference fitting to these quantities through the discrepancy between its population target and the conditional mean.
    
    \item \looseness=-1 \xhdr{Results} We evaluate \nameshort\ on language generation, ImageNet text alignment, and protein--ligand cofolding, as well as toy experiments, spanning FKS~\citep{singhal2025fks}, FMRG~\citep{huang2026fmrg}, MFM~\citep{potaptchik2026mfm}, and It\^o steering~\citep{pan2026itomaps} methods. 
\end{enumerate}

\section{Background and Preliminaries} 
\label{sec:background}
\looseness=-1
\xhdr{Notation}\label{sec:notation}
We denote by $\mathcal X=\mathbb R^d$ the state space of the generative dynamics and by $p_1$ the data distribution on this space, using embedded representations for discrete data. A reward is a scalar function $R:\mathcal X\to\mathbb R$. A superscript asterisk marks the gold reward $R^*$, unqueryable at inference. We use $\mathbf R$ to denote the vector of $k$ surrogate rewards $\mathbf R=(R_1,\dots,R_k):\mathcal X\to\mathbb R^k$, and $\mathbf r=(r_1,\dots,r_k)\in\mathbb R^k$ values it takes. A composition function $g:\mathbb R^k\to\mathbb R$ turns the surrogates into a scalar reward on $\mathcal X$, i.e., $g\circ\mathbf R:x\mapsto g(\mathbf R(x))$ while $\gG$ denotes the function class used to fit the composition. For a reward $R$ and inverse temperature $\beta>0$, we denote $q_R\propto p_1\exp(\beta R)$ as its induced tilt. All such tilts assign zero probability to sets of zero $p_1$-probability. In particular, we define $q^*:=q_{R^*}$ as the gold tilt and  correspondingly $q_g:=q_{g\circ\mathbf R}$ the $g$-composition tilt. $\KL(p\,\|\,q)$ is the Kullback--Leibler divergence, while conditional moments of $R^*$ given $\mathbf R$ are denoted by: $\mu:=\E_{p_1}\!\left[R^*\mid\mathbf R\right]$, $v:=\Var_{p_1}\!\left(R^*\mid\mathbf R\right)$, and $\bar v:=\E_{p_1}[v]$, so that $\mu$ is the best surrogate estimate of $R^*$, $v$ is the variance of $R^*$ that remains once the surrogates are known, and $\bar v$ the expected variance. Specifically, $\bar v$ quantifies the additional information about $R^*$ that the surrogates do not contain.
Finally, SCC and PCC denote the Spearman and Pearson correlation coefficients, respectively.

\subsection{Stochastic Interpolants and Inference Steering}
\label{sec:flowmodel}
\looseness=-1
Stochastic interpolants, such as flow matching~\citep{lipman_flow_2022,liu_rectified_2022,albergo_building_2023} and diffusion models~\citep{ho2020denoising,song2020score}, transport samples from a tractable prior $p_0$, e.g., $\gN(0,I)$, to a data distribution $p_1 \in \sP(\gX)$, typically represented by a dataset $\gD_{\text{train}} = \{ x^i\}_{i=1}^N$ of $N$ samples. These models then construct an interpolant $I_t:=\alpha_t x_1+\sigma_t x_0$, 
that connects independent samples $x_0\sim p_0$, $x_1\sim p_1$ and induces the marginal law $p_t:=\mathrm{Law}(I_t)$. This interpolant can be constructed in different ways: 
%
%
Flow matching regresses a velocity network $u_\phi:\mathbb R^d\times[0,1]\to\mathbb R^d$ against the interpolant velocity $\dot I_t=\dot\alpha_t x_1+\dot\sigma_t x_0$. Diffusion models learn endpoint denoisers $\hat x_1(x_t,t):=D_\phi(x_t,t)\approx\mathbb E[x_1\mid I_t=x_t]$, or implicitly parametrize the Stein score $\nabla \log p_t(x_t)$. 
Sampling can then be performed by integrating the probability flow ODE $\dot x_t=u_\phi(x_t,t)$, or by simulating an SDE with the same marginals and terminal law $p_{1,\phi}$, respectively:
\begin{equation}
\label{eq:reverse_diffusion}
    \mathrm{d}x_t=\left[\,u_\phi(x_t,t)+\tfrac{c_t^2}{2}\,\nabla\!\log p_t(x_t)\,\right]\mathrm{d}t+c_t\,\mathrm{d}W_t, \quad x_0 \sim p_0,
\end{equation}
\looseness=-1
where $c_t > 0$  is the diffusion coefficient and $W_t$ is the standard Wiener process on $\R^d$. 

\looseness=-1
\xhdr{Steering}
Given a perfectly trained diffusion model with terminal law $p_{1,\phi} \overset{d}{=}p_1$, inference-time steering seeks to target the reward-tilted $q_R$ without further retraining. SDE methods achieve this by adding a Doob  $h$-transform drift $\beta c_t^2\,\nabla_{x_t}V_t$ with $V_t(x_t)=\beta^{-1}\log\E[\exp(\beta R(x_1))\mid x_t]$, while sequential Monte-Carlo schemes~\citep{doucet2001introduction} propagate particles and resample by look-ahead reward potentials $R(\hat{x}_1 (x_t))$. Finally, few-step optimal-control methods treat the velocity as a control and modify the velocity by pushing samples along $\nabla R$, also using a look-ahead reward estimate.

\looseness=-1
\xhdr{Flow maps}
Rather than integrating the probability-flow ODE step-by-step, one can train a flow map, $X:[0,1]^2 \times \mathcal X\to\mathcal X$, $X_{t,s}(x):=X(t,s,x)$. A flow map jumps a state from time $t$ to $s$ along the ODE trajectory, $X_{t,s}(x_t)=x_s$, and is naturally parametrized via the average velocity $u_{t,s}$ for $s\ne t$, $X_{t,s}(x_t)=x_t+(s-t)\,u_{t,s}(x_t)$, with $u_{t,s}$ given by: $u_{t,s}(x_t)=(s-t)^{-1}\int_t^s u_\phi(x_\tau,\tau)\,\mathrm d\tau$. As $s\to t$ the average velocity recovers the instantaneous one, $\lim_{s\to t}u_{t,s}(x_t)=u_\phi(x_t,t)$. We can distill flow maps from a pretrained model, or directly from data, using Lagrangian, Eulerian, or Progressive distillation or losses. The one-step map $\hat x_1(x_t)=X_{t,1}(x_t)$ provides a fast look-ahead to the endpoint, so we can evaluate a reward on a predicted sample without integrating the remaining trajectory.


\looseness=-1
\xhdr{Continuous treatment of discrete data}
Let $\mathcal V$ be a vocabulary and $L$ a sequence length, such that token sequences satisfy $y\in\mathcal V^L$. Here, $\mathcal X=\mathbb R^{L\times|\mathcal V|}$, where $d=L|\mathcal V|$, and we use tokenwise one-hot encoder $\mathrm{Enc}:\mathcal V^L\to\mathcal X$ with the tokenwise argmax decoder $\mathrm{Dec}:\mathcal X\to\mathcal V^L$~\citep{lee2026flow}, such that $p_1$ is the distribution of one-hot embeddings $\mathrm{Enc}(\mathcal V^L)\subset\mathcal X$. Since $\mathrm{Dec}(\mathrm{Enc}(y))=y$, this representation preserves the text distribution exactly. We apply the preceding interpolant and flow-map construction to $p_1$, with $x_1=\mathrm{Enc}(y)$ and $x_0\sim\mathcal N(0,I)$. The intermediate marginals $p_t$ have smooth densities and converge in distribution to $p_1$ as $t\to1$. Learned continuous terminal predictions are then decoded by tokenwise argmax. Rewards on one-hot encodings are defined by the corresponding text rewards.

\subsection{Preference modelling}
\label{sec:preference}
\looseness=-1
Preference learning aims to fit a latent utility $h$ from relative rankings rather than scalar-valued scores (as in, e.g., regression tasks), by minimizing a negative log-likelihood (NLL). The Bradley--Terry (BT) model~\citep{bradley1952rank} models pairwise rankings $p(x_i\succ x_j)=\sigma(h(x_i)-h(x_j))$, with $\sigma$ the logistic sigmoid function. The Plackett--Luce (PL) model~\citep{plackett1975analysis,luce1959individual} extends this to a full ranking $\pi$ of a candidate set $S$ of size $m$, decomposed into a sequence of softmax choices over the items not yet selected, $p_{\PL}(\pi\mid h,S)=\prod_{i=1}^{m-1}\exp(h\left(x_{\pi(i)}\right))\big/\sum_{i'=i}^{m}\exp\left(h\left(x_{\pi(i')}\right)\right)$, recovering BT at $m=2$ and is invariant to additive constants. Such models are pervasive in LLM post-training: annotators rank responses to a prompt, and the goal is to align the model to human judgements of quality using RL or direct preference optimization~\citep{schulman2017proximal,rafailov2023direct}.


\section{Method}
\looseness=-1
\nameshort\ learns a composition of fixed surrogates from
an offline feedback set $\gD_{\text{feedback}}$, distinct
from the generator's training data $\gD_{\text{train}}$.
Feedback consists of numerical oracle rewards or rankings;
the fitted composition drives inference-time steering
without oracle queries.

\subsection{Problem Setup}
\label{sec:setup}

\looseness=-1
At a given tilt strength $\beta>0$, we seek a surrogate tilt $q_g$
approximating $q^*\propto p_1\exp(\beta R^*)$, using $k$ cheap surrogates stacked together as $\mathbf R$. We reiterate that $R^*$ is not available at inference; instead, we only have knowledge of the gold reward through a feedback set $\gD_{\text{feedback}}$. We seek $\gopt\in\argmin_g\KL(q^*\|q_g)$ over measurable compositions with finite, positive tilt normalizers. In practice, we fit $g\in\gG$ from $\gD_{\mathrm{feedback}}$. Performance depends on the surrogate information, composition class, and feedback budget. We consider two feedback regimes.

\looseness=-1
\xhdr{(i) Value feedback} $R^*$ returns scores: $\mathcal D_{\text{feedback}}=\{(x_i,R^*(x_i))\}_{i=1}^n$, where $\mathcal D_{\text{feedback}}$ represents a budget of expensive evaluations. The feedback is cardinal, namely the scale of $R^*$ is known.

\looseness=-1
\xhdr{(ii) Preference feedback} $R^*$ is seen only through order: $\mathcal D_{\text{feedback}}=\{(S^{(j)},\pi^{(j)})\}_{j=1}^M$ comprises $M$ query groups of $m\geq2$ samples, $S^{(j)}\sim p_1^{\otimes m}$, and $\pi^{(j)}$ is the preferred order. Rankings are drawn independently across groups from the PL model of~\S\ref{sec:preference} at the gold reward score:
\begin{equation}
  \label{eq:pl-oracle}
  \pi^{(j)}\sim p^*_{\PL}(\cdot\mid S^{(j)}),
  \qquad
  p^*_{\PL}(\pi\mid S):=p_{\PL}(\pi\mid R^*,S)
  =\prod_{i=1}^{m-1}\frac{\exp(R^*\left(x_{\pi(i)}\right))}{\sum_{i'=i}^{m}\exp\left(R^*\left(x_{\pi(i')}\right)\right)}.
\end{equation} 
Equivalently, by the Gumbel-sort construction~\citep{yellott1977relationship}, $\pi^{(j)}$ is obtained by perturbing each candidate's oracle score with independent standard Gumbel noise and sorting in descending order. By the shift invariance of $p_{\PL}$, preferences constrain $R^*$ only up to an additive constant.

\looseness=-1
\xhdr{Relation to multi-objective optimization}
In multi-objective optimization (MOO), the $R_j$ are
distinct objectives. For maximized rewards, $x$ dominates
$x'$ if $R_j(x)\geq R_j(x')$ for every $j$, with at least
one strict inequality. Undominated feasible candidates form
the Pareto set, where their reward vectors form the Pareto front, and scalarizations express trade-offs among the objectives.

\looseness=-1
Our surrogates instead provide imperfect information about
one oracle $R^*$. A candidate can dominate another on every
surrogate yet receive a lower oracle reward, so the
surrogate Pareto set can exclude the oracle
optimum~(\cref{lem:monotone_g,lem:offront,fig:moo-comparison}).
We therefore allow nonmonotone compositions, although no
composition can recover information absent from its inputs.
Further discussions are in~\S\ref{app:pareto}.

\begin{figure}[htbp]
    \centering
    \includegraphics[width=0.9\linewidth]{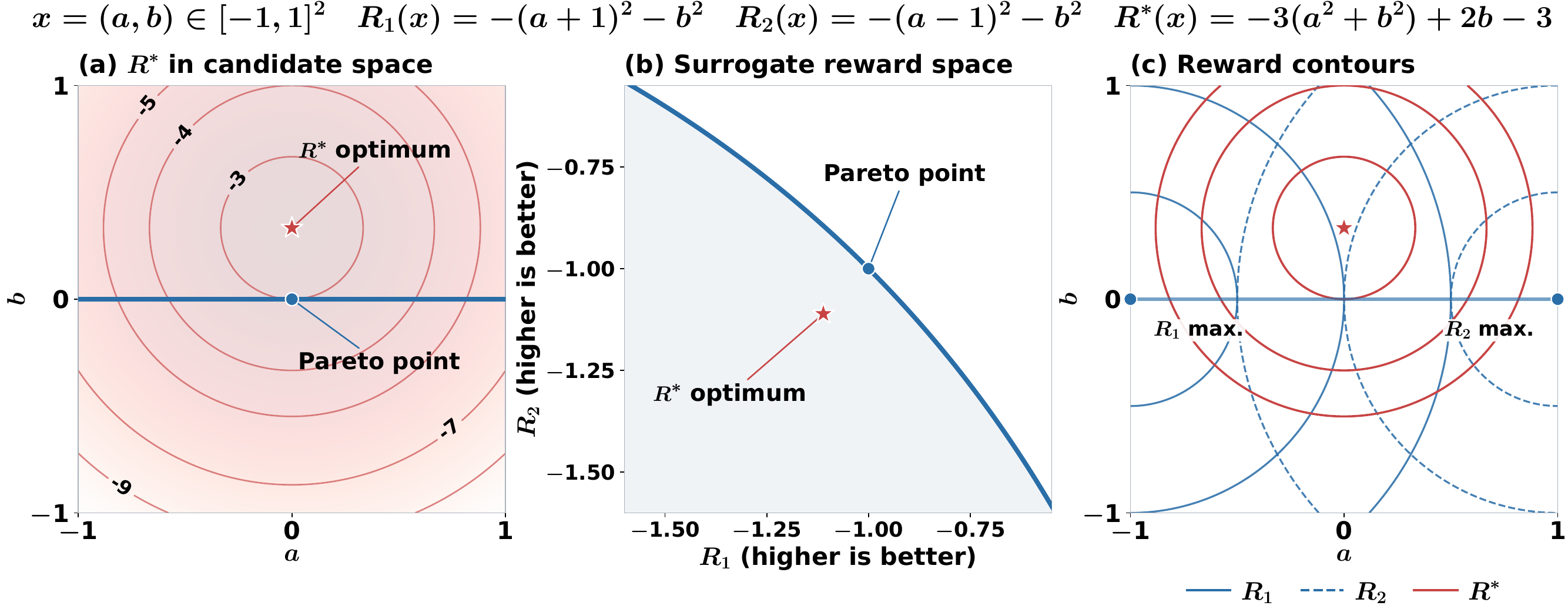}
    \vspace{-10pt}
    \caption{ \small \looseness=-1
    \textbf{The oracle optimum can be dominated by the surrogates.}
    \textbf{(a)} Contours of $R^*$ in candidate space. Its maximizer
    lies outside the Pareto set.
    \textbf{(b)} Visualization in surrogate reward space, Pareto front in blue. The Pareto point scores
    higher on both surrogates but lower on $R^*$.
    \textbf{(c)} Visualization of surrogate reward and $R^*$ contours.}
    \vspace{-10pt}
    \label{fig:moo-comparison}
\end{figure}

\subsection{Training \nameshort}
\label{sec:practical}
\looseness=-1
Given $\gD_{\text{feedback}}$ drawn from $p_1$,
\nameshort learns a composition $g$ offline using a value or
preference loss, with the map $g\circ\mathbf R$ compatible with any inference steering approach without querying $R^*$.

\looseness=-1
\xhdr{Value feedback}
For numerical evaluations of $R^*$, the fitting objective is the mean squared error:
\begin{equation} 
\label{eq:mse-loss}
  \gL_{\mathrm{MSE}}(g)=\frac1n\sum_{i=1}^n\left(g(\mathbf R(x_i))-R^*(x_i)\right)^2 \quad x_i \sim \gD_{\text{feedback}}.
\end{equation}
\looseness=-1
Over unrestricted measurable compositions, the population counterpart of \cref{eq:mse-loss} is minimized by $g\circ\mathbf R=\mu$, a quantity whose steering advantage to $q^*$ we theoretically establish in~\S\ref{sec:theory}.

\looseness=-1
\xhdr{Preference feedback}
For rankings, we fit the composition by minimizing the PL NLL. Let $x_{(i)}^{(j)}$ denote the candidate ranked $i$th in group $j$, with the most preferred candidate first. The objective is
\begin{equation}
\label{eq:pl-loss}
\gL_{\PL}(g)
=-\frac1M\sum_{j=1}^{M}\sum_{i=1}^{m-1}
\left[
g\!\left(\mathbf R(x_{(i)}^{(j)})\right)
-\log\!\sum_{l=i}^{m}
\exp\!\left(g\!\left(\mathbf R(x_{(l)}^{(j)})\right)\right)
\right].
\end{equation}
\looseness=-1
\Cref{eq:pl-loss} fits the observed rankings without requiring numerical gold rewards. Its population target need not equal $\mu$, whose implications for recovery and steering are discussed in~\S\ref{sec:theory}.

\looseness=-1
\xhdr{Constructing the composition}
We consider three ways to construct the composition: correlation weighting, fitted linear models, and MLPs. Linear and MLP compositions are trained using the appropriate empirical loss with family-specific regularization, while correlation weights are computed directly from Pearson correlations with observed oracle values or Spearman correlations with observed oracle rankings. Correlation and linear models describe a weighted sum of surrogate scores, while MLPs can capture nonlinear relationships. Derivations and implementation details are provided in~\S\ref{app:learn}.

\looseness=-1
\xhdr{Inference-time steering}
After fitting, we hold $g$ fixed and use $g\circ\mathbf R$ to steer toward $q_g$. Adding the Doob drift correction to~\eqref{eq:reverse_diffusion} gives:
\begin{equation}
\label{eq:composed-steering}
\mathrm d x_t = \left[ u_\phi(x_t,t) +\frac{c_t^2}{2}\nabla\log p_t(x_t) +\beta c_t^2\nabla_{x_t}V_t^{g}(x_t) \right]\mathrm dt +c_t\,\mathrm dW_t,
\end{equation}
\looseness=-1
where $V_t^g(x_t) =\frac1\beta\log\E\!\left[ \exp\!\left(\beta g(\mathbf R(x_1))\right) \mid x_t\right]$. In practice, an endpoint prediction $\hat x_1 := D_\phi(x_t,t)$ gives the look-ahead approximation. In practice, we use the look-ahead reward
$g(\mathbf R(\hat x_1(x_t)))$. When the composition, surrogates, and endpoint predictor are differentiable, its gradient follows by the chain rule (c.f,~\cref{alg:martini} for details).

\subsection{Theoretical Results}
\label{sec:theory}
\looseness=-1
Our setting raises two questions: \emph{\textbf{Q1.} when does learning to predict $R^*$ from surrogates lead to useful steering}, and \emph{\textbf{Q2.} how well can any composition of the surrogates do?} The objectives in~\cref{eq:mse-loss,eq:pl-loss} use oracle feedback, whereas our goal is to approximate $q^*$. Our results connect these questions and identify the limits imposed by information about $R^*$ missing from the surrogates. 

\looseness=-1
We examine four quantities: \emph{Steering divergence}, $\KL(q^*\|q_g)$, measures fidelity to the oracle tilt, and \emph{reward gain} and \emph{reward gap} compare the induced distribution's expected $R^*$ reward with that of $p_{1}$ and $q^*$ respectively. Finally, \emph{recovery error} measures how accurately $g\circ\mathbf R$ predicts $R^*$ under $p_1$.

\looseness=-1
We analyze population quantities and ideal tilts under the assumptions stated
in~\S\ref{app:assumptions}. The KL and reward expansions in the theorems hold for fixed compositions as $\beta\downarrow0$, while the recovery identities and bounds are exact. For both theorems, we use mean-aligned representatives (Assumption~\ref{ass:means}), which preserve the induced tilt. For conditional generation, i.e. $p_1(\cdot|c)$ we fix the context $c$ in the analysis and drop the notation for brevity. 

\looseness=-1
\xhdr{Value feedback}
We write $\mathrm{MSE}(g):=\E_{p_1}[(g\circ\mathbf R-R^*)^2]$ for population recovery error and $\omega_R:=\exp(\beta R)/\E_{p_1}[\exp(\beta R)]$ for a tilt's relative weight.
\begin{mdframed}[
    style=MyFrame2,
    innertopmargin=3pt,
    innerbottommargin=3pt
]
\begin{restatable}[Value Regime]{theorem}{thmvalue}
\label{thm:value}
\setlength{\abovedisplayskip}{6pt}
\setlength{\belowdisplayskip}{6pt}
\setlength{\abovedisplayshortskip}{1pt}
\setlength{\belowdisplayshortskip}{3pt}

Suppose $R^*$ and $g\circ\mathbf R$ satisfy
Assumptions~\ref{ass:moments}--\ref{ass:means}. Then:
\begin{enumerate}[
    label=(\roman*),
    leftmargin=2em,
    itemsep=2pt,
    topsep=1pt,
    parsep=0pt
]
\item \emph{(Steering.)} As $\beta\downarrow0$, with
Assumption~\ref{ass:means} used for the second equality,
\[
\KL(q^*\|q_g)
=\frac{\beta^2}{2}\Var_{p_1}(R^*-g\circ\mathbf R)
+O(\beta^3)
=\frac{\beta^2}{2}\mathrm{MSE}(g)+O(\beta^3).
\]

\item \emph{(Reward gain.)}
For any $R\in L^\infty(p_1)$ and $\beta>0$:
$\E_{q_R}[R^*]-\E_{p_1}[R^*]
=\Cov_{p_1}(R^*,\omega_R)$.
Furthermore, as $\beta\downarrow0$:
\[
\E_{q_g}[R^*]-\E_{p_1}[R^*]
\geq\beta\left[
\Var_{p_1}(\mu)
-\sqrt{
\Var_{p_1}(\mu)(\mathrm{MSE}(g)-\bar v)
}
\right]+O(\beta^2).
\]

\item \emph{(Reward gap.)} As $\beta\downarrow0$,
$\E_{q^*}[R^*]\!-\!\E_{q_g}[R^*]
\!\leq\!\beta[\bar v
+\sqrt{\Var_{p_1}(\mu)(\mathrm{MSE}(g)-\bar v)}]
\!+\!O(\beta^2)$.
\setlength{\itemsep}{5pt}
\item \emph{(Best recovery.)}
Under Assumption~\ref{ass:moments} alone,
\[
\mathrm{MSE}(g)
=\bar v+\|g\circ\mathbf R-\mu\|^2_{L^2(p_1)}
\geq\bar v,
\qquad
\Var_{p_1}(R^*)=\Var_{p_1}(\mu)+\bar v.
\]
\end{enumerate}
\end{restatable}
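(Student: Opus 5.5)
I would treat part (iv) first, since it is exact and feeds the others. Write $g\circ\mathbf R-R^*=(g\circ\mathbf R-\mu)+(\mu-R^*)$. The first term is $\sigma(\mathbf R)$-measurable. The tower property gives $\E_{p_1}[(\mu-R^*)\mid\mathbf R]=0$, so the cross term vanishes. This yields $\mathrm{MSE}(g)=\|g\circ\mathbf R-\mu\|^2_{L^2(p_1)}+\E_{p_1}[v]$. The variance identity is the law of total variance, and only the square-integrability in Assumption~\ref{ass:moments} is needed.

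For (i), I would expand the log-normalizers as cumulant generating functions. With $\Delta:=R^*-g\circ\mathbf R$,
\[
\KL(q^*\|q_g)=\beta\,\E_{q^*}[\Delta]-\log\E_{p_1}[e^{\beta R^*}]+\log\E_{p_1}[e^{\beta g\circ\mathbf R}].
\]
I would use $\log\E_{p_1}e^{\beta R}=\beta\E R+\tfrac{\beta^2}{2}\Var R+O(\beta^3)$ and $\E_{q^*}[\Delta]=\E_{p_1}\Delta+\beta\Cov_{p_1}(R^*,\Delta)+O(\beta^2)$. The first-order terms cancel. The second-order coefficient is $\Cov(R^*,\Delta)-\tfrac12\Var R^*+\tfrac12\Var(g\circ\mathbf R)$, which equals $\tfrac12\Var(\Delta)$. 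Assumption~\ref{ass:means} ($\E_{p_1}g\circ\mathbf R=\E_{p_1}R^*$, which preserves the tilt because the tilt is shift invariant) turns $\Var(\Delta)$ into $\mathrm{MSE}(g)$. The main obstacle is justifying the uniform $O(\beta^3)$ remainders. I would use the exponential or boundedness moment conditions of Assumption~\ref{ass:moments} to dominate third derivatives of the CGFs on a neighbourhood of $\beta=0$.

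For (ii), the exact identity is immediate: $\E_{q_R}[R^*]=\E_{p_1}[R^*\omega_R]$ and $\E_{p_1}\omega_R=1$. For the asymptotic bound, I would expand $\omega_g=1+\beta(g\circ\mathbf R-\E g\circ\mathbf R)+O(\beta^2)$, which gives gain $=\beta\Cov(R^*,g\circ\mathbf R)+O(\beta^2)$. Conditioning on $\mathbf R$ gives $\Cov(R^*,g\circ\mathbf R)=\Cov(\mu,g\circ\mathbf R)$. Then
\[
\Cov(\mu,g\circ\mathbf R)=\Var(\mu)+\Cov(\mu,g\circ\mathbf R-\mu),
\]
and Cauchy--Schwarz bounds the second term below by $-\sqrt{\Var(\mu)\,\Var(g\circ\mathbf R-\mu)}$. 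Under mean alignment, $\E\mu=\E R^*=\E g\circ\mathbf R$, so $\Var(g\circ\mathbf R-\mu)=\|g\circ\mathbf R-\mu\|^2=\mathrm{MSE}(g)-\bar v$ by (iv).

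For (iii), I would apply the same expansion to $q^*$, giving $\E_{q^*}R^*-\E_{p_1}R^*=\beta\Var R^*+O(\beta^2)=\beta(\Var\mu+\bar v)+O(\beta^2)$ by (iv). Subtracting the lower bound from (ii) gives the stated upper bound. The only care needed is that the $O(\beta^2)$ remainders are controlled by the same moment assumptions.
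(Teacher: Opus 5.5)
Your proposal is correct and follows essentially the same route as the paper: the orthogonal decomposition $R^*-g\circ\mathbf R=(R^*-\mu)+(\mu-g\circ\mathbf R)$ for (iv), the log-normalizer expansion of Lemma~\ref{lem:err2kl} for (i), the exact covariance identity followed by a first-order expansion, conditioning on $\mathbf R$ and Cauchy--Schwarz for (ii), and subtraction from the oracle tilt's gain $\beta\Var_{p_1}(R^*)+O(\beta^2)$ for (iii). One correction: the uniform $O(\beta^3)$ and $O(\beta^2)$ remainders come from the essential boundedness in Assumption~\ref{ass:bounded}, which the theorem includes, not from Assumption~\ref{ass:moments}, whose finite second moments alone do not even guarantee finite tilt normalizers.
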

\end{mdframed}

\looseness=-1
Part~(i) answers \textbf{Q1} by connecting the value-fitting objective directly to steering: reducing population MSE reduces the leading KL error. This also explains why the unrestricted regression target $\mu$ is useful even though the exact KL-optimal composition $\gopt$ can differ from it. Proposition~\ref{prop:klopt} and Corollary~\ref{cor:mu} show that $\KL(q^*\|q_\mu)-\KL(q^*\|q_{\gopt})=O(\beta^4)$ as $\beta\downarrow0$. Thus, using $\mu$ instead of $\gopt$ incurs only fourth-order excess KL, providing a distributional justification for MSE fitting in this regime.

\looseness=-1
Part~(iv) answers \textbf{Q2}: fitting can reduce the discrepancy from $\mu$, but recovery error remains at least $\bar v$, the variation in $R^*$ unexplained by the surrogates. Improving the fit of $g$ therefore addresses only one source of error, and reducing this floor requires more informative surrogate inputs (Corollary~\ref{cor:mono}). 

\looseness=-1
Parts~(ii)--(iii) relate recovery error to expected reward after steering under $R^*$. At leading order, reducing recovery error raises the gain lower bound and lowers the gap upper bound.  When
$\Var_{p_1}(\mu)>0$, sufficiently accurate approximation of $\mu$ guarantees positive gain for small enough $\beta>0$, even if $\bar v>0$ (Corollary~\ref{cor:positive-gain}). Useful steering therefore does not require perfect oracle recovery. The exact gain identity in part~(ii) appears concurrently in~\citet{paes2026theoretical}, while here we relate it to learned surrogate compositions through asymptotic gain and gap bounds.

\looseness=-1
\xhdr{Preference feedback} 
When numerical oracle rewards are unavailable, we instead ask whether fitting oracle rankings controls the same recovery and steering quantities. 

\begin{mdframed}[style=MyFrame2]
\begin{restatable}[Preference Regime]{theorem}{thmpref}
\label{thm:pref}
Under Assumptions~\ref{ass:moments}, \ref{ass:means},
and~\ref{ass:preferences}, define the population loss
$\PL(g):=\E_{S,\pi}[-\log p_{\PL}(\pi\mid g\circ\mathbf R,S)]$
and let $g_{\PL}\in\argmin_{g\in\gG}\PL(g)$.
With the score-range bound $\Delta$ from~\ref{ass:preferences} and
$c_\Delta=(1-1/m)e^{-\Delta}$, every $g\in\gG$
satisfies the bound
\[
\mathrm{MSE}(g)\leq B(g):=\bar v+
\left(
\sqrt{\Var_{p_1}(g_{\PL}\circ\mathbf R-\mu)}
+\sqrt{\frac{2}{c_\Delta}
\bigl(\PL(g)-\PL(g_{\PL})\bigr)}
\right)^2.
\]
Parts~(i)--(iii) further assume the oracle bound in
Assumption~\ref{ass:bounded} and hold for fixed $g$ as
$\beta\downarrow0$:
\begin{enumerate}[
    label=(\roman*),
    leftmargin=2em,
    itemsep=1pt,
    topsep=1pt
]
\item \emph{(Steering.)}
$\ \ \ \ \  \KL(q^*\|q_g)\leq\frac{\beta^2}{2}B(g)+O(\beta^3)$.

\item \emph{(Reward gain.)}
$\ \ \ \ \ 
\E_{q_g}[R^*]-\E_{p_1}[R^*]
\geq\beta\left[
\Var_{p_1}(\mu)
-\sqrt{\Var_{p_1}(\mu)(B(g)-\bar v)}
\right]+O(\beta^2).
$
\item \emph{(Reward gap.)}
$
\ \ \ \ \  \E_{q^*}[R^*]-\E_{q_g}[R^*]
\leq\beta\left[
\bar v+\sqrt{\Var_{p_1}(\mu)(B(g)-\bar v)}
\right]+O(\beta^2).
$

\item \emph{(Minimum bound.)}
$\ \ \ \ \ B(g)\geq B(g_{\PL})=\mathrm{MSE}(g_{\PL})$.
\end{enumerate}
\end{restatable}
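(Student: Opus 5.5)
The plan is to reduce everything to the value-regime result, Theorem~\ref{thm:value}. Parts (i)--(iii) of that theorem are all monotone in $\mathrm{MSE}(g)$. Part (i) is increasing in it, and parts (ii)--(iii) depend on it only through $\sqrt{\Var_{p_1}(\mu)(\mathrm{MSE}(g)-\bar v)}$, which is increasing. So once we prove $\mathrm{MSE}(g)\le B(g)$, parts (i)--(iii) follow by substitution. This substitution is valid because the $O(\beta^k)$ remainders are for fixed $g$ and do not involve the bound. The bounded-oracle Assumption~\ref{ass:bounded} is what licenses applying Theorem~\ref{thm:value} (it supplies Assumption~\ref{ass:moments} and the $L^\infty$ condition). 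The real work is therefore the recovery bound.

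For $\mathrm{MSE}(g)\le B(g)$, we start from Theorem~\ref{thm:value}(iv): $\mathrm{MSE}(g)=\bar v+\|g\circ\mathbf R-\mu\|^2_{L^2(p_1)}$. Under mean alignment (Assumption~\ref{ass:means}), $\E[g\circ\mathbf R]=\E[\mu]$, so this squared norm equals $\Var_{p_1}(g\circ\mathbf R-\mu)$. Writing $\|\cdot\|$ for the standard deviation (an $L^2$ seminorm on centered functions), the triangle inequality gives
\[
\sqrt{\Var(g\circ\mathbf R-\mu)}\le\sqrt{\Var(g_{\PL}\circ\mathbf R-\mu)}+\sqrt{\Var((g-g_{\PL})\circ\mathbf R)}.
\]
Squaring then yields $B(g)$, provided we show the curvature estimate
\[
\PL(g)-\PL(g_{\PL})\ \ge\ \tfrac{c_\Delta}{2}\Var_{p_1}\bigl((g-g_{\PL})\circ\mathbf R\bigr).
\]

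The curvature estimate is the main obstacle. We would proceed in three steps.

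\textbf{Step 1 (strong convexity).} Decompose the PL negative log-likelihood into its $m-1$ softmax stages. Each stage is a log-sum-exp over the remaining items, whose Hessian is the softmax covariance. On scores confined to a range of width $\Delta$ (Assumption~\ref{ass:preferences}), each softmax weight is at least $e^{-\Delta}/|\text{set}|$. This gives a lower bound on the Hessian quadratic form of order $e^{-\Delta}$ times the within-set empirical variance of the direction $h=(g-g_{\PL})\circ\mathbf R$.

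\textbf{Step 2 (averaging over groups).} Take the expectation over $S\sim p_1^{\otimes m}$. For $m$ i.i.d.\ draws, the expected within-group variance is $(1-1/m)\Var_{p_1}(h)$. The first stage alone (the full set of $m$ items) then gives a Hessian lower bound $c_\Delta\Var(h)$ with $c_\Delta=(1-1/m)e^{-\Delta}$, and the later stages only add nonnegative terms.

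\textbf{Step 3 (first-order optimality).} Integrate along the segment $g_{\PL}+t(g-g_{\PL})$. This requires $\gG$ to be convex, or at least that the directional first-order optimality condition at $g_{\PL}$ holds, which we would take from Assumption~\ref{ass:preferences}. With the linear term nonnegative, the second-order Taylor bound gives the claimed $\tfrac{c_\Delta}{2}\Var(h)$. The delicate points are keeping the score range bound valid along the whole segment and confirming that the exact constant $1-1/m$ comes from the first-stage covariance.

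Finally, part (iv) is immediate. $B(g)$ is minimized when the excess PL loss vanishes, giving $B(g)\ge B(g_{\PL})=\bar v+\Var(g_{\PL}\circ\mathbf R-\mu)$. By Theorem~\ref{thm:value}(iv) and mean alignment, this equals $\mathrm{MSE}(g_{\PL})$.
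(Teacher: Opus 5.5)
Your proposal is correct and follows the paper's proof essentially step for step. The recovery bound comes from Theorem~\ref{thm:value}(iv), mean alignment, the $L^2$ triangle inequality, and the first-stage Plackett--Luce curvature bound integrated along the convex segment (the paper's Lemma~\ref{lem:pl-sc}); parts (i)--(iii) follow by substituting $\mathrm{MSE}(g)\le B(g)$ into Theorem~\ref{thm:value}, and (iv) from nonnegativity of the excess PL risk.

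One small correction concerns invoking Theorem~\ref{thm:value}, which needs $g\circ\mathbf R\in L^\infty(p_1)$. The oracle bound does not supply this. It comes from mean alignment together with the common range width, which give $|g\circ\mathbf R|\le|\E_{p_1}[R^*]|+\Delta$ almost surely, exactly as the paper notes.
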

\end{mdframed}

\looseness=-1
The empirical objective~\eqref{eq:pl-loss} estimates $\PL(g)$, whose excess controls recovery through $B(g)$. This bound combines the information floor $\bar v$, the discrepancy between $g_{\PL}\circ\mathbf R$ and $\mu$, and excess PL loss. For a fixed problem and class, reducing excess loss tightens the leading KL, gain, and gap, bounds in parts~(i)--(iii), justifying preference fitting without numerical oracle rewards. At zero excess loss, the discrepancy remains unless $g_{\PL}\circ\mathbf R=\mu$, in which case $B(g_{\PL})=\bar v$~(iv). Thus $g_{\PL}$ minimizes the bound, without necessarily minimizing recovery error or exact steering divergence. The guarantee requires a convex class, a uniform score-range bound $\Delta$, and an attained population minimum (Assumption~\ref{ass:preferences}); nonconvex classes require separate analysis. Proofs and supporting results appear in~\S\ref{app:extra_theory}--\ref{app:proofs}.

\section{Experiments}
\label{sec:experiments}
\looseness=-1
We evaluate \nameshort\ on language generation, text-to-image alignment, and protein--ligand cofolding, fitting $g$ offline and steering without inference-time oracle queries, with synthetic experiments in \S\ref{app:toy}. 

\looseness=-1
\xhdr{Evaluation}
For language and images, we assess recovery using held-out PCC and MSE for values, or SCC and PL loss for preferences. Experimental tilts use the sampler law $p_{1,\phi}$ in place of $p_1$. We evaluate fixed finite target tilts $\beta=1$ and sweep sampler guidance separately, probing behavior beyond our small-$\beta$ guarantees. Composition fidelity is measured by $\KL(q^*\|q_g)$, estimated from unsteered samples. We assess generated samples using embedding-space energy distance and distances on the $R^*$ marginal, including Wasserstein distances and Gaussian and KDE estimates of KL. We also report the absolute reward gap and a diversity-adjusted cost $Q$ (prior: $1$ and lower is better), which evaluates reward gains accompanied by diversity loss. Definitions and estimators appear in~\S\ref{app:metric_est}. Specific protein cofolding metrics are described in~\S\ref{subsec:cofolding}.

\xhdr{Language and Image Baselines}
Best-of-$N$ (BoN), reported in Appendix ablation tables, selects the highest-$R^*$ candidate, requiring $N$ oracle evaluations per sample. Deployable steering baselines are the best single surrogate (BSS), uniform weighting, and ParetoFlow~\citep{yuan2025paretoflow}, plus ReNO's fixed reward weights for images~\citep{eyring2024renoenhancingonesteptexttoimage}. BSS is selected on training feedback using either PCC or SCC depending on regime. We also evaluate SLOP's calibrated selection protocol~\citep{wang2026temper}. SLOP requires $R^*$ values, so its preference results serve only as a value-calibrated reference.

\begin{figure}[htbp]
  \centering
  \includegraphics[width=\textwidth]{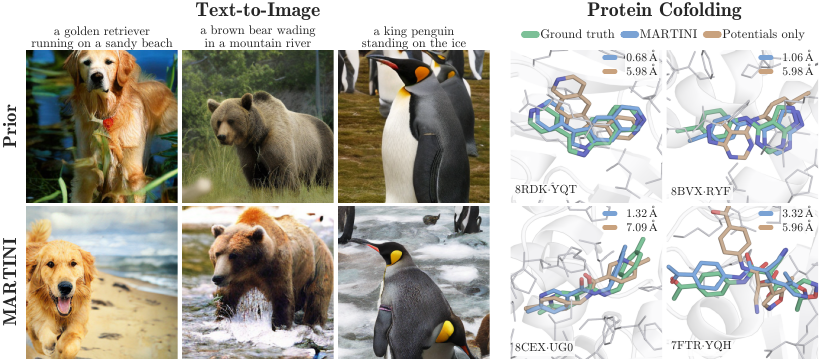}
  \vspace{-15pt}
    \caption{\small\looseness=-1 \textbf{\nameshort\ steering.} Left: ImageNet prior samples (top) and steered samples (bottom). Right: Predicted and reference ligand poses. Annotations give ligand RMSD in \AA{}.}
\label{fig:qualitative}
\end{figure}

\subsection{Language Generation}
\looseness=-1
We use FMRG to steer the released flow map language model (FMLM)~\citep{lee2026flow}, trained on One Billion Words~\citep{chelba2013one}. For fitting and evaluation, we generate unconditional sequences of $128$ tokens in $8$ steps using the \texttt{bert-base-uncased} vocabulary.

\looseness=-1
\xhdr{Setup}
We use the negative generative perplexity under Qwen2.5-1.5B~\citep{qwen2025qwen25technicalreport} as the oracle,
$R^*=-\mathrm{gPPL}$. Our four differentiable surrogates are
BERT~\citep{devlin2019bert} and DistilBERT~\citep{sanh2019distilbert} scores, CoLA acceptability~\citep{warstadt2019cola}, and empirical unigram entropy $H_{\mathrm{tok}}$. Entropy supplies a diversity signal to address repetitive outputs under gPPL minimization. Feedback comprises $n=256$ numerical rewards or $M=256$ rankings of $m=8$ candidates. We generate $256$ samples per method, regime, and seed,
and average metrics over five seeds.

\looseness=-1
\xhdr{Results}
Held-out loss selects an MLP for values and a linear composition for preferences~(\cref{fig:lang-recovery}), and both attain the lowest reported composition KL~(\cref{fig:lang-klg}). At steering strength $0.03$, both outperform the displayed deployable baselines in energy distance and $Q$~(\cref{fig:lang}). Increasing BSS guidance (the same BSS was chosen in both regimes) worsens reward-marginal KL and $Q$, with both exceeding the prior. Stronger \nameshort\ guidance improves reward-marginal KL but eventually raises energy distance above the prior: closer reward-marginal matching need not improve embedding-space fidelity. Additional results and generated examples appear in~\cref{tab:lang-full,tab:lang-qualitative}.

\begin{figure}[t]
  \centering
  \includegraphics[width=\linewidth]{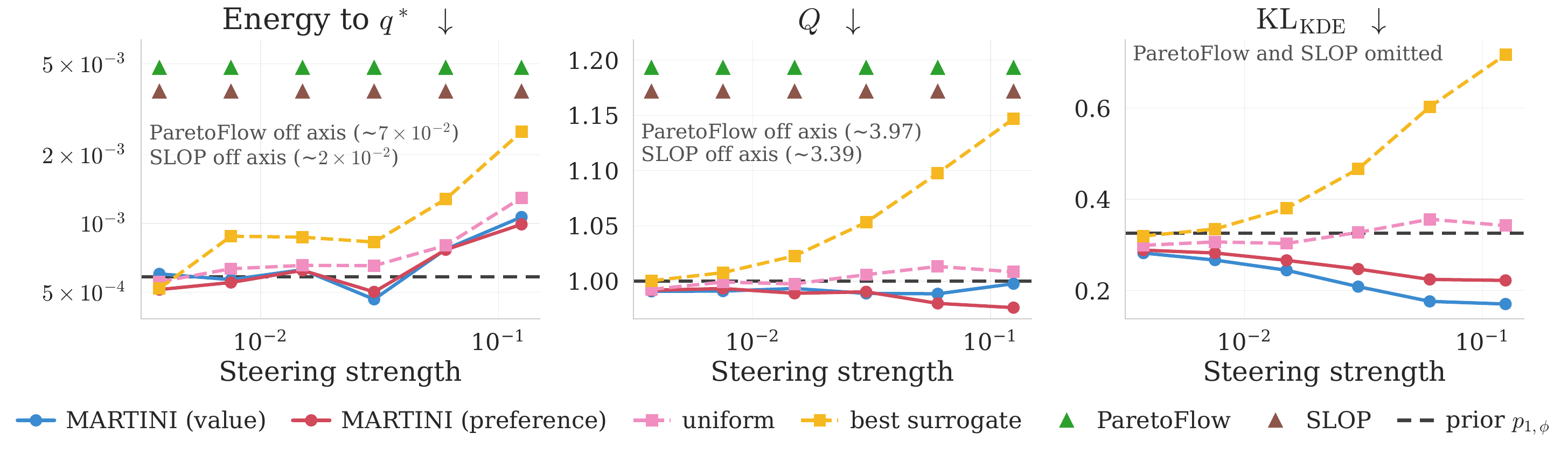}
  \vspace{-20pt} \caption{\small\looseness=-1 \textbf{Language steering.} Lower is better for all metrics. Triangles indicate values beyond the plotted range.} \label{fig:lang}
\end{figure}

\subsection{Text-to-Image}

\looseness=-1
We use FMRG~\citep{huang2026fmrg} to steer a distilled, class-conditioned ImageNet flow map toward text prompts, fitting a separate composition for each class--prompt pair, as shown in~\cref{fig:qualitative} (left).

\looseness=-1\looseness=-1
\xhdr{Setup}
Our surrogates are HPSv2~\citep{wu2023human}, ImageReward~\citep{xu2023imagereward}, PickScore~\citep{kirstain2023pick}, and CLIP~\citep{radford2021learning}, matching ReNO's reward set~\citep{eyring2024renoenhancingonesteptexttoimage}. The oracle $R^*$ measures image--prompt alignment using
Skywork-VL-Reward 7B~\citep{wang2025skywork}. Feedback comprises $n=32$ scalar values or $M=128$ rankings. Recovery uses all $19$ class--prompt pairs while the main steering
comparison uses seven with reference $\mathrm{ESS}>1000$
in both regimes. For each pair and method, sample metrics
use the lowest-energy evaluated strength. All-pair results
and cutoff sensitivity appear in~\S\ref{app:ablations}, and implementation details in~\S\ref{app:in_details}.

\looseness=-1
\xhdr{Results}
Held-out loss selects a linear composition in both regimes.
On these seven pairs, \nameshort\ achieves the lowest
embedding-space and reward-marginal distances and absolute
reward gap in both regimes~(\cref{tab:in-combined}).
With values, $\mathrm{KL}_{\mathcal N}$ and
$\mathrm{KL}_{\mathrm{KDE}}$ are reduced by factors of
$4.9$ and $3.4$, respectively, relative to the best baselines. Only uniform weighting achieves lower $\mathrm{KL}_g$ ($0.436$ versus $0.466$). With preferences, \nameshort\ also achieves the lowest $\mathrm{KL}_g$ ($0.423$). Both selected compositions also outperform BSS, uniform weighting, and ReNO in mean energy at shared strengths $\lambda\in\{2,3,5\}$~(\cref{tab:in-strength-sweep}). Full sweeps and additional feedback budget comparisons appear in~\S\ref{app:ablations}. Qualitative results are in~\S\ref{app:qual-imagenet}.

\definecolor{martinival}{RGB}{219,234,249}
\definecolor{martinipref}{RGB}{176,206,240}
\definecolor{oraclerow}{RGB}{253,229,203}

\providecommand{\INcell}[2][]{%
  \ensuremath{#2}\makebox[1.1em][l]{\ensuremath{{}#1}}%
}
\providecommand{\INhead}[1]{%
  #1\makebox[1.1em]{}%
}

\begin{table}[t]
\centering
\caption{\small\looseness=-1\label{tab:in-combined}
\textbf{ImageNet steering.}
Bold/underline mark best/second-best deployable results
within each regime. On shared rows, $*$/$\dagger$ denote
values/preferences, with superscripts/subscripts
indicating first/second.}
\footnotesize
\setlength{\tabcolsep}{3pt}
\renewcommand{\arraystretch}{1.08}
\begin{tabular}{@{}l*{7}{c}@{}}
\toprule
Method
& \INhead{$\mathrm{KL}_g$}
& \INhead{Energy $\downarrow$}
& \INhead{Sl.\,$W_1$ $\downarrow$}
& \INhead{$\mathrm{KL}_{\mathcal N}$ $\downarrow$}
& \INhead{$\mathrm{KL}_{\mathrm{KDE}}$ $\downarrow$}
& \INhead{$|\mathrm{gap}|$ $\downarrow$}
& \INhead{$Q$ $\downarrow$} \\
\midrule
Prior $p_{1,\phi}$
& \INcell{\text{--}}
& \INcell{0.0683}
& \INcell{0.0080}
& \INcell{0.752}
& \INcell{0.658}
& \INcell{7.51}
& \INcell{1.000} \\
\midrule
ParetoFlow
& \INcell{\text{--}}
& \INcell{0.0885}
& \INcell{0.0088}
& \INcell{0.825}
& \INcell{0.937}
& \INcell{5.35}
& \INcell{0.844} \\
SLOP
& \INcell{0.520}
& \INcell{0.1200}
& \INcell{0.0098}
& \INcell[_{*}]{\underline{0.168}}
& \INcell{0.170}
& \INcell[_{*\dagger}]{\underline{1.98}}
& \INcell{0.364} \\
ReNO
& \INcell{0.553}
& \INcell{0.0503}
& \INcell{0.0069}
& \INcell{0.197}
& \INcell[_{*}]{\underline{0.166}}
& \INcell{2.10}
& \INcell[_{*\dagger}]{\underline{0.244}} \\
Uniform
& \INcell[^{*}_{\dagger}]{\mathbf{0.436}}
& \INcell{0.0603}
& \INcell{0.0074}
& \INcell{0.605}
& \INcell{0.635}
& \INcell{2.73}
& \INcell[^{*\dagger}]{\mathbf{0.238}} \\
\midrule
BSS (Val.)
& \INcell{0.514}
& \INcell{\underline{0.0498}}
& \INcell{\underline{0.0068}}
& \INcell{0.217}
& \INcell{0.183}
& \INcell{2.57}
& \INcell{0.354} \\
\rowcolor{martinival}
\nameshort\ (Val.)
& \INcell{\underline{0.466}}
& \INcell{\mathbf{0.0321}}
& \INcell{\mathbf{0.0058}}
& \INcell{\mathbf{0.034}}
& \INcell{\mathbf{0.049}}
& \INcell{\mathbf{1.27}}
& \INcell{0.359} \\

\midrule
BSS (Pref.)
& \INcell{0.440}
& \INcell{\underline{0.0423}}
& \INcell{\underline{0.0065}}
& \INcell{\underline{0.135}}
& \INcell{\underline{0.133}}
& \INcell{2.04}
& \INcell{0.316} \\
\rowcolor{martinipref}
\nameshort\ (Pref.)
& \INcell{\mathbf{0.423}}
& \INcell{\mathbf{0.0372}}
& \INcell{\mathbf{0.0061}}
& \INcell{\mathbf{0.093}}
& \INcell{\mathbf{0.121}}
& \INcell{\mathbf{1.71}}
& \INcell{0.297} \\
\bottomrule
\end{tabular}
\vspace{-7pt}
\end{table}

\subsection{Protein-Ligand Co-folding}
\label{subsec:cofolding}

\looseness=-1
Protein--ligand cofolding provides a setting with experimental scalar feedback. We ask whether a surrogate composition learned from binding-affinity measurements can improve the accuracy and physical validity of the jointly generated poses. Qualitative results shown in~\cref{fig:qualitative} (right) and~\cref{fig:prot_gcorr15_top20_barebones}.

\looseness=-1
\xhdr{Setup}
Using the value regime, we fit a composition of five surrogates to experimental $\mathrm{p}K_D$ measurements from HiQBind~\citep{wang2025hiqbind}, restricting the feedback set to PoseBusters-valid monomers and multimers~\citep{buttenschoen2024posebusters}. The fitted composition $g\circ\mathbf R$ then steers the DeCAF-Boltz flow map~\citep{scarpellini2026fewstepcofoldingallatomflow}, distilled from Boltz-1~\citep{wohlwend2025boltz}, without further experimental feedback. We evaluate on $186$ complexes from the post-2023 subset of Runs N' Poses~\citep{skrinjar2026runsnposes}, beyond Boltz-1's training cutoff. As these evaluation complexes lack measured affinities, we assess transfer through pose accuracy and physical validity, reporting best@1 metrics averaged over five seeds. Dataset filtering, surrogate definitions, and steering details are in~\S\ref{app:protein}.

\looseness=-1
\xhdr{Metrics}
Our \emph{success rate} is the percentage of predictions with RMSD below $2$\,\AA{} that pass PoseBusters validity checks~\citep{buttenschoen2024posebusters}. We also report accuracy and validity rates alongside \emph{lDDT-PLI}, which measures local protein--ligand distance agreement within a $6$\,\AA{} pocket~\citep{mariani2013lddt}, as well as structural diversity detailed in~\S\ref{app:protein-metrics}.


\looseness=-1
\xhdr{Results}
The BSS is smina (\cref{fig:prot_ceil_bars_hiqbind}). \nameshort\ (correlation $g$) achieves the highest reported best@1 (\S\ref{app:protein-metrics}) success rate ($64.5\%$) over both potentials-only steering and smina\,+\,potentials by $2.7$ percentage points~(\cref{tab:prot_steering_combined}). Relative to potentials-only, \emph{lDDT-PLI} improves by $1.9$ points, indicating better local geometry, with similar diversity ($0.0133$ versus $0.0132$). PB-validity falls by $6.5$ points, even though \nameshort trades validity for accuracy, all the accurate predictions also satisfy validity criteria. Ablation study results and success rates for reward-selected poses are presented in Appendix~\ref{app:protein-results}.

\begin{table*}
    \centering
    \caption{\small \looseness=-1 \textbf{Protein--ligand steering on post-2023 Runs N' Poses}. 15 particles, 5 seeds, rewards un-normalised.}
        \vspace{-10pt}
    \label{tab:prot_steering_combined}



\definecolor{martinirow}{RGB}{219,234,249}   

\centering
\small
\setlength{\tabcolsep}{4.5pt}
\resizebox{1.0\linewidth}{!}{
\begin{tabular}{lrrrrr}
\toprule
Method & Success Rate $\uparrow$ & RMSD$<2$\,\AA{} $\uparrow$ & PB Valid $\uparrow$ & lDDT-PLI $\uparrow$ & Diversity $\uparrow$ \\
\midrule
Boltz1 potentials & ${59.5}$ & ${59.5}$ & $\mathbf{99.9}$ & ${65.2}$ & ${0.0108}$ \\
DeCAF potentials & ${61.8}$ & ${61.8}$ & $\underline{98.3}$ & ${65.7}$ & ${0.0132}$ \\
smina + potentials & ${61.8}$ & ${61.8}$ & ${98.2}$ & ${65.4}$ & ${0.0129}$ \\
\midrule
\rowcolor{martinirow}DeCAF-\nameshort(smina, potentials) & $\underline{62.7}$ & $\underline{62.8}$ & ${93.3}$ & $\underline{66.5}$ & $\mathbf{0.0139}$ \\
\rowcolor{martinirow}DeCAF-\nameshort(5 surrogates) & $\mathbf{64.5}$ & $\mathbf{64.5}$ & ${91.8}$ & $\mathbf{67.6}$ & $\underline{0.0133}$ \\
\bottomrule
\end{tabular}
}




\end{table*}

\section{Related work}
\looseness=-1
\xhdr{Reward-based control}
Generative models can be aligned through reinforcement learning~\citep{black2024training} or preference optimisation~\citep{rafailov2023direct,wallace2024diffusion,bai2026towards}, and steered at inference through guidance~\citep{bansal2024universal,singhal2025fks}, posterior sampling~\citep{chung2023diffusion}, or flow-map optimal control~\citep{huang2026fmrg}. \nameshort\ complements these methods by learning the steering reward from offline oracle feedback.

\looseness=-1
\xhdr{Reward compositions and limited feedback}
Reward ensembles can mitigate over-optimisation~\citep{coste2024reward}.
ReNO uses fixed reward weights~\citep{eyring2024renoenhancingonesteptexttoimage},
while SLOP calibrates pooled scores to maximize estimated gold reward~\citep{wang2026temper}. We study approximation of a fixed oracle tilt from values or rankings, characterizing the information limit imposed by the surrogates. Multi-objective methods balance distinct
rewards~\citep{rame2023rewarded,yuan2025paretoflow}.
Limited oracle access also motivates query-efficient
fine-tuning~\citep{uehara2024feedback}, protein
steering~\citep{yang2026steering}, and optimisation over
generated samples~\citep{willis2025sample}.
We use feedback to learn how fixed surrogates predict a
single oracle, keeping the generator unchanged.
Complementing analyses of proxy error and ensembling~\citep{paes2026theoretical},
we relate fitting loss and surrogate information limits
to steering fidelity and reward bounds.

\section{Conclusion}
\looseness=-1
\nameshort\ learns surrogate compositions offline from oracle values or rankings for reuse across compatible steerers without inference-time oracle queries. Our theory links fitting to ideal-tilt fidelity and reward improvement, identifying surrogate information limits and conditions for useful steering despite imperfect recovery. Experiments demonstrate benefits across language, images, and protein--ligand cofolding. KL and reward expansions are local in $\beta$, with larger tilts assessed empirically. Preference guarantees require convex, range-bounded classes, and their tightness remains unassessed. Robustness beyond synthetic PL rankings remains untested. ImageNet guidance uses per-pair reference energy, leaving selection without oracle-scored references an open task.\clearpage

\subsection*{AI use statement}
We used generative AI tools to develop and review mathematical statements
and proofs and to support research execution, including implementation and feedback
on experimental methodology, and interpretation of results. Additionally, we used these tools to review, draft, and improve the clarity of the manuscript. The authors take responsibility for the final content of this work, including all text, mathematical claims, results, and artifacts produced with AI assistance.

\subsection*{Ethics statement}
Steering pretrained generative models, including language
models, can amplify biases or errors in the base model,
oracle, or surrogate rewards, and can generate harmful content. Improvements in measured reward do not establish safety, fairness, or factual correctness. Likewise, computational protein--ligand metrics do not replace experimental validation. Applications require task-specific evaluation and appropriate safeguards.

\subsection*{Reproducibility statement}
The fitting objectives and steering procedure are described in~\S\ref{sec:practical}, with composition-fitting details in~\S\ref{app:learn}. Experimental implementation and metric estimation are described in~\S\ref{app:in_details} and~\S\ref{app:metric_est}. Formal assumptions and proofs are provided in~\S\ref{app:assumptions} and~\S\ref{app:proofs}. We commit to publicly releasing the code and supporting materials needed to reproduce our methods and experimental results upon acceptance.


\clearpage
\bibliography{iclr2027_conference}

@inproceedings{bai2026towards,
  title={Towards better optimization for listwise preference in diffusion models},
  author={Bai, Jiamu and Yu, Xin and Xu, Meilong and Lu, Weitao and Pan, Xin and Maeng, Kiwan and Kifer, Daniel and Wang, Jian and Wang, Yu},
  booktitle={International Conference on Learning Representations},
  volume={2026},
  pages={133254--133282},
  year={2026}
}

@misc{bansal2024universal,
      title={Universal Guidance for Diffusion Models}, 
      author={Arpit Bansal and Hong-Min Chu and Avi Schwarzschild and Soumyadip Sengupta and Micah Goldblum and Jonas Geiping and Tom Goldstein},
      year={2023},
      eprint={2302.07121},
      archivePrefix={arXiv},
      primaryClass={cs.CV},
      url={https://arxiv.org/abs/2302.07121}, 
}

@inproceedings{black2024training,
  title={Training Diffusion Models with Reinforcement Learning},
  author={Black, Kevin and Janner, Michael and Du, Yilun and Kostrikov, Ilya and Levine, Sergey},
  booktitle={International Conference on Learning Representations},
  year={2024}
}

@article{bradley1952rank,
      title={Rank Analysis of Incomplete Block Designs: I. The Method of Paired Comparisons},
      author={Bradley, Ralph Allan and Terry, Milton E.},
      journal={Biometrika}, volume={39}, number={3/4}, pages={324--345}, year={1952}
}

@article{buttenschoen2024posebusters,
  author  = {Buttenschoen, Martin and Morris, Garrett M. and
             Deane, Charlotte M.},
  title   = {{PoseBusters}: {AI}-based docking methods fail to generate
             physically valid poses or generalise to novel sequences},
  journal = {Chemical Science},
  volume  = {15},
  number  = {9},
  pages   = {3130--3139},
  year    = {2024},
  doi     = {10.1039/D3SC04185A}
}

@inproceedings{chung2023diffusion,
      title={Diffusion Posterior Sampling for General Noisy Inverse Problems},
      author={Hyungjin Chung and Jeongsol Kim and Michael T. McCann and Marc L. Klasky and Jong Chul Ye},
      year={2023}, eprint={2209.14687}, archivePrefix={arXiv}, primaryClass={stat.ML},
      booktitle={International Conference on Learning Representations},
      url={https://arxiv.org/abs/2209.14687}
}

@inproceedings{coste2024reward,
  title={Reward Model Ensembles Help Mitigate Overoptimization},
  author={Coste, Thomas and Anwar, Usman and Kirk, Robert and Krueger, David},
  booktitle={International Conference on Learning Representations},
  year={2024}
}

@article{eisenstein2023helping,
  title={Helping or herding? reward model ensembles mitigate but do not eliminate reward hacking},
  author={Eisenstein, Jacob and Nagpal, Chirag and Agarwal, Alekh and Beirami, Ahmad and D'Amour, Alex and Dvijotham, DJ and Fisch, Adam and Heller, Katherine and Pfohl, Stephen and Ramachandran, Deepak and others},
  journal={arXiv preprint arXiv:2312.09244},
  year={2023}
}

@article{ethayarajh2024kto,
  title={{KTO}: Model alignment as prospect theoretic optimization},
  author={Ethayarajh, Kawin and Xu, Winnie and Muennighoff, Niklas and Jurafsky, Dan and Kiela, Douwe},
  journal={arXiv preprint arXiv:2402.01306},
  year={2024}
}

@misc{eyring2024renoenhancingonesteptexttoimage,
      title={ReNO: Enhancing One-step Text-to-Image Models through Reward-based Noise Optimization}, 
      author={Luca Eyring and Shyamgopal Karthik and Karsten Roth and Alexey Dosovitskiy and Zeynep Akata},
      year={2024},
      eprint={2406.04312},
      archivePrefix={arXiv},
      primaryClass={cs.CV},
      url={https://arxiv.org/abs/2406.04312}, 
}

@inproceedings{gao2023scaling,
  title={Scaling Laws for Reward Model Overoptimization},
  author={Gao, Leo and Schulman, John and Hilton, Jacob},
  booktitle={International Conference on Machine Learning},
  year={2023}
}

@misc{huang2026fmrg,
      title={How to Guide Your Flow: Few-Step Alignment via Flow Map Reward Guidance},
      author={Jerry Y. Huang and Justin Lin and Sheel Shah and Kartik Nair and Nicholas M. Boffi},
      year={2026}, eprint={2604.27147}, archivePrefix={arXiv}, primaryClass={cs.LG},
      url={https://arxiv.org/abs/2604.27147}
}

@article{kirstain2023pick,
  title={Pick-a-pic: An open dataset of user preferences for text-to-image generation},
  author={Kirstain, Yuval and Polyak, Adam and Singer, Uriel and Matiana, Shahbuland and Penna, Joe and Levy, Omer},
  journal={Advances in neural information processing systems},
  volume={36},
  pages={36652--36663},
  year={2023}
}

@book{luce1959individual,
      title={Individual Choice Behavior: A Theoretical Analysis},
      author={Luce, R. Duncan},
      publisher={Wiley}, address={New York}, year={1959}
}

@article{paes2026theoretical,
  title={Theoretical Limits of Language Model Alignment},
  author={Paes, Lucas Monteiro and Mackraz, Natalie and Theobald, Barry-John and Danieli, Federico},
  journal={arXiv preprint arXiv:2605.07105},
  year={2026}
}

@misc{pan2026itomaps,
      title={It\^o maps for any-step SDEs},
      author={Zhengkai Pan and Peter Potaptchik and Wenxi Yao and Michael S. Albergo and Jakiw Pidstrigach},
      year={2026}, eprint={2606.11156}, archivePrefix={arXiv}, primaryClass={stat.ML},
      url={https://arxiv.org/abs/2606.11156}
}

@article{pathak2025ai,
  title={AI-enabled drug and molecular discovery: computational methods, platforms, and translational horizons},
  author={Pathak, Aditya and Theagarajan, Radhika and Rizqi, Muhammad Miftakhur and Nugraha, Ari Satia and Boruah, Tridip and Pratiksha and Kumar, Himanshu and Naik, Bindu and Yadav, Shweta and Jha, Avinash Kumar and others},
  journal={Discover Molecules},
  volume={2},
  number={1},
  pages={32},
  year={2025},
  publisher={Springer}
}

@article{plackett1975analysis,
      title={The Analysis of Permutations},
      author={Plackett, R. L.},
      journal={Journal of the Royal Statistical Society. Series C (Applied Statistics)},
      volume={24}, number={2}, pages={193--202}, year={1975}
}

@misc{potaptchik2026mfm,
      title={Meta Flow Maps enable scalable reward alignment},
      author={Peter Potaptchik and Adhi Saravanan and Abbas Mammadov and Alvaro Prat and Michael S. Albergo and Yee Whye Teh},
      year={2026}, eprint={2601.14430}, archivePrefix={arXiv}, primaryClass={stat.ML},
      url={https://arxiv.org/abs/2601.14430}
}

@misc{radford2021learning,
      title={Learning Transferable Visual Models From Natural Language Supervision},
      author={Alec Radford and Jong Wook Kim and Chris Hallacy and Aditya Ramesh and Gabriel Goh and Sandhini Agarwal and Girish Sastry and Amanda Askell and Pamela Mishkin and Jack Clark and Gretchen Krueger and Ilya Sutskever},
      year={2021}, eprint={2103.00020}, archivePrefix={arXiv}, primaryClass={cs.CV},
      url={https://arxiv.org/abs/2103.00020}
}

@misc{rafailov2023direct,
      title={Direct Preference Optimization: Your Language Model is Secretly a Reward Model},
      author={Rafael Rafailov and Archit Sharma and Eric Mitchell and Stefano Ermon and Christopher D. Manning and Chelsea Finn},
      year={2023}, eprint={2305.18290}, archivePrefix={arXiv}, primaryClass={cs.LG},
      url={https://arxiv.org/abs/2305.18290}
}

@inproceedings{rame2023rewarded,
  title={Rewarded Soups: Towards Pareto-Optimal Alignment by Interpolating Weights Fine-Tuned on Diverse Rewards},
  author={Ram\'e, Alexandre and Couairon, Guillaume and Shukor, Mustafa and Dancette, Corentin and Gaya, Jean-Baptiste and Soulier, Laure and Cord, Matthieu},
  booktitle={Advances in Neural Information Processing Systems},
  year={2023},
  note={arXiv:2306.04488}
}

@misc{singhal2025fks,
 archivePrefix = {arXiv},
 author = {Singhal, Raghav and Horvitz, Zachary and Teehan, Ryan and Ren, Mengye and Yu, Zhou and McKeown, Kathleen and Ranganath, Rajesh},
 eprint = {2501.06848},
 primaryClass = {cs.LG},
 publisher = {arXiv},
 title = {{A} {General} {Framework} for {Inference-time} {Scaling} and {Steering} of {Diffusion} {Models}},
 url = {https://arxiv.org/abs/2501.06848},
 year = {2025}
}

@inproceedings{uehara2024feedback,
  title={Feedback Efficient Online Fine-Tuning of Diffusion Models},
  author={Uehara, Masatoshi and Zhao, Yulai and Black, Kevin and Hajiramezanali, Ehsan and Scalia, Gabriele and Diamant, Nathaniel Lee and Tseng, Alex M. and Levine, Sergey and Biancalani, Tommaso},
  booktitle={International Conference on Machine Learning},
  year={2024}
}

@article{uehara2025inference,
  title={Inference-time alignment in diffusion models with reward-guided generation: Tutorial and review},
  author={Uehara, Masatoshi and Zhao, Yulai and Wang, Chenyu and Li, Xiner and Regev, Aviv and Levine, Sergey and Biancalani, Tommaso},
  journal={arXiv preprint arXiv:2501.09685},
  year={2025}
}

@inproceedings{wallace2024diffusion,
  title={Diffusion Model Alignment Using Direct Preference Optimization},
  author={Wallace, Bram and Dang, Meihua and Rafailov, Rafael and Zhou, Linqi and Lou, Aaron and Purushwalkam, Senthil and Ermon, Stefano and Xiong, Caiming and Joty, Shafiq and Naik, Nikhil},
  booktitle={IEEE/CVF Conference on Computer Vision and Pattern Recognition},
  year={2024}
}

@article{wang2026temper,
  title={Temper and Tilt Lead to SLOP: Reward Hacking Mitigation with Inference-Time Alignment},
  author={Wang, Ye and Liu, Jing and Koike-Akino, Toshiaki},
  journal={arXiv preprint arXiv:2605.13537},
  year={2026}
}

@misc{willis2025sample,
      title={Sample-Efficient Optimisation over the Outputs of Generative Models}, 
      author={Samuel Willis and Paul Duckworth and Jack Simons and Aleksandra Kalisz and Krisztina Sinkovics and Noam Ghenassia and Shikha Surana and Henry T. Oldroyd and Alexandru I. Stere and Dragos D Margineantu and Carl Henrik Ek and Henry Moss and Erik Bodin},
      year={2025},
      eprint={2509.23800},
      archivePrefix={arXiv},
      primaryClass={stat.ML},
      url={https://arxiv.org/abs/2509.23800}, 
}

@article{wood2026family,
  title={UMA: A family of universal models for atoms},
  author={Wood, Brandon and Dzamba, Misko and Fu, Xiang and Gao, Meng and Shuaibi, Muhammed and Barroso-Luque, Luis and Abdelmaqsoud, Kareem and Gharakhanyan, Vahe and Kitchin, John and Levine, Daniel S. and others},
  journal={Advances in Neural Information Processing Systems},
  volume={38},
  pages={129391--129427},
  year={2025}
}

@misc{xu2023imagereward,
      title={ImageReward: Learning and Evaluating Human Preferences for Text-to-Image Generation},
      author={Jiazheng Xu and Xiao Liu and Yuchen Wu and Yuxuan Tong and Qinkai Li and Ming Ding and Jie Tang and Yuxiao Dong},
      year={2023}, eprint={2304.05977}, archivePrefix={arXiv}, primaryClass={cs.CV},
      url={https://arxiv.org/abs/2304.05977}
}

@article{yang2026steering,
  title={Steering generative models with experimental data for protein fitness optimization},
  author={Yang, Jason and Chu, Wenda and Khalil, Daniel and Astudillo, Raul and Wittmann, Bruce and Arnold, Frances and Yue, Yisong},
  journal={Advances in Neural Information Processing Systems},
  volume={38},
  pages={145383--145421},
  year={2025}
}

@inproceedings{yuan2025paretoflow,
  title={ParetoFlow: Guided Flows in Multi-Objective Optimization},
  author={Yuan, Ye and Chen, Can and Pal, Christopher and Liu, Xue},
  booktitle={International Conference on Learning Representations},
  year={2025},
  note={arXiv:2412.03718}
}

@article{zhang2026instruction,
  title={Instruction tuning for large language models: A survey},
  author={Zhang, Shengyu and Dong, Linfeng and Li, Xiaoya and Zhang, Sen and Sun, Xiaofei and Wang, Shuhe and Li, Jiwei and Hu, Runyi and Zhang, Tianwei and Wang, Guoyin and others},
  journal={ACM Computing Surveys},
  volume={58},
  number={7},
  pages={1--36},
  year={2026},
  publisher={ACM New York, NY}
}

@article {wohlwend2025boltz,
	author = {Wohlwend, Jeremy and Corso, Gabriele and Passaro, Saro and Getz, Noah and Reveiz, Mateo and Leidal, Ken and Swiderski, Wojtek and Atkinson, Liam and Portnoi, Tally and Chinn, Itamar and Silterra, Jacob and Jaakkola, Tommi and Barzilay, Regina},
	title = {Boltz-1 Democratizing Biomolecular Interaction Modeling},
	year = {2025},
	doi = {10.1101/2024.11.19.624167},
	publisher = {Cold Spring Harbor Laboratory},
	URL = {https://www.biorxiv.org/content/early/2025/05/06/2024.11.19.624167},
	eprint = {https://www.biorxiv.org/content/early/2025/05/06/2024.11.19.624167.full.pdf},
	journal = {bioRxiv}
}

@article{passaro2025boltz2,
  title={Boltz-2: Towards Accurate and Efficient Binding Affinity Prediction},
  author={Passaro, Saro and Corso, Gabriele and Wohlwend, Jeremy and Reveiz, Mateo and Thaler, Stephan and Somnath, Vignesh Ram and Getz, Noah and Portnoi, Tally and Roy, Julien and Stark, Hannes and Kwabi-Addo, David and Beaini, Dominique and Jaakkola, Tommi and Barzilay, Regina},
  journal={bioRxiv},
  year={2025},
  doi={10.1101/2025.06.14.659707},
  publisher={Cold Spring Harbor Laboratory},
  URL={https://www.biorxiv.org/content/10.1101/2025.06.14.659707v1}
}

@misc{scarpellini2026fewstepcofoldingallatomflow,
      title={Few-step Cofolding with All-Atom Flow Maps}, 
      author={Gianluca Scarpellini and Ron Shprints and Peter Holderrieth and Juno Nam and Pranav Murugan and Rafael Gómez-Bombarelli and Tommi Jaakkola and Maruan Al-Shedivat and Nicholas Matthew Boffi and Avishek Joey Bose},
      year={2026},
      eprint={2606.08375},
      archivePrefix={arXiv},
      primaryClass={cs.LG},
      url={https://arxiv.org/abs/2606.08375}, 
}

@article{skrinjar2026runsnposes,
  title={Evaluating generalization in protein–ligand cofolding methods},
  author={Peter \v{S}krinjar and J{\'e}r{\^o}me Eberhardt and Gabriel Studer and Gerardo Tauriello and Torsten Schwede and Janani Durairaj},
  journal={Nature Structural \& Molecular Biology},
  year={2026},
  volume={33},
  pages={782--794},
  doi={10.1038/s41594-026-01797-5},
  url={https://doi.org/10.1038/s41594-026-01797-5}
}

@article{wang2025hiqbind,
  title={A workflow to create a high-quality protein--ligand binding dataset for training, validation, and prediction tasks},
  author={Wang, Yingze and Sun, Kunyang and Li, Jie and Guan, Xingyi and Zhang, Oufan and Bagni, Dorian and Zhang, Yang and Carlson, Heather A and Head-Gordon, Teresa},
  journal={Digital Discovery},
  volume={4},
  number={5},
  pages={1209--1220},
  year={2025},
  publisher={The Royal Society of Chemistry}
}

@article{chen2010molprobity,
  title={MolProbity: all-atom structure validation for macromolecular crystallography},
  author={Chen, Vincent B and Arendall, W Bryan and Headd, Jeffrey J and Keedy, Daniel A and Immormino, Robert M and Kapral, Gary J and Murray, Laura W and Richardson, Jane S and Richardson, David C},
  journal={Acta Crystallographica Section D: Biological Crystallography},
  volume={66},
  number={1},
  pages={12--21},
  year={2010},
  publisher={International Union of Crystallography},
  doi       = {10.1107/S0907444909042073}
}

@article{rausch2021frustratometeR,
    author = {Rausch, Atilio O and Freiberger, Maria I and Leonetti, Cesar O and Luna, Diego M and Radusky, Leandro G and Wolynes, Peter G and Ferreiro, Diego U and Parra, R Gonzalo},
    title = {FrustratometeR: an R-package to compute local frustration in protein structures, point mutants and MD simulations},
    journal = {Bioinformatics},
    volume = {37},
    number = {18},
    pages = {3038-3040},
    year = {2021},
    month = {09},
    issn = {1367-4803},
    doi = {10.1093/bioinformatics/btab176},
    url = {https://doi.org/10.1093/bioinformatics/btab176},
    eprint = {https://academic.oup.com/bioinformatics/article-pdf/37/18/3038/50579563/btab176.pdf},
}

@article{koes2013smina,
author = {Koes, David and Baumgartner, Matthew and Camacho, Carlos},
year = {2013},
volume = {53},
number = {8}, 
pages = {1893--1904},
title = {Lessons Learned in Empirical Scoring with smina from the CSAR 2011 Benchmarking Exercise},
journal = {Journal of chemical information and modeling},
doi = {10.1021/ci300604z}
}

@article{liu_rectified_2022,
  author  = {Liu, Qiang},
  title   = {Rectified Flow: A Marginal Preserving Approach to Optimal Transport},
  journal = {arXiv preprint arXiv:2209.14577},
  year    = {2022},
  doi     = {10.48550/arXiv.2209.14577}
}

@misc{lipman_flow_2022,
      title={Flow Matching for Generative Modeling}, 
      author={Yaron Lipman and Ricky T. Q. Chen and Heli Ben-Hamu and Maximilian Nickel and Matt Le},
      year={2022},
      eprint={2210.02747},
      archivePrefix={arXiv},
      primaryClass={cs.LG},
      url={https://arxiv.org/abs/2210.02747}, 
}

@article{mariani2013lddt,
  author  = {Mariani, Valerio and Biasini, Marco and Barbato, Alessandro and Schwede, Torsten},
  title   = {{lDDT}: a local superposition-free score for comparing protein structures and models using distance difference tests},
  journal = {Bioinformatics},
  volume  = {29}, number = {21}, pages = {2722--2728}, year = {2013},
  doi     = {10.1093/bioinformatics/btt473}
}

@misc{lee2026flow,
      title={Flow Map Language Models: One-step Language Modeling via Continuous Denoising}, 
      author={Chanhyuk Lee and Jaehoon Yoo and Manan Agarwal and Sheel Shah and Jerry Huang and Aditi Raghunathan and Seunghoon Hong and Nicholas M. Boffi and Jinwoo Kim},
      year={2026},
      eprint={2602.16813},
      archivePrefix={arXiv},
      primaryClass={cs.CL},
      url={https://arxiv.org/abs/2602.16813}, 
}

@misc{schulman2017proximal,
      title={Proximal Policy Optimization Algorithms}, 
      author={John Schulman and Filip Wolski and Prafulla Dhariwal and Alec Radford and Oleg Klimov},
      year={2017},
      eprint={1707.06347},
      archivePrefix={arXiv},
      primaryClass={cs.LG},
      url={https://arxiv.org/abs/1707.06347}, 
}

@article{yellott1977relationship,
title = {The relationship between Luce's Choice Axiom, Thurstone's Theory of Comparative Judgment, and the double exponential distribution},
author = {John I. Yellott},
journal = {Journal of Mathematical Psychology},
volume = {15},
number = {2},
pages = {109-144},
year = {1977},
issn = {0022-2496},
doi = {https://doi.org/10.1016/0022-2496(77)90026-8},
url = {https://www.sciencedirect.com/science/article/pii/0022249677900268},
}

@misc{albergo_building_2023,
      title={Building Normalizing Flows with Stochastic Interpolants}, 
      author={Michael S. Albergo and Eric Vanden-Eijnden},
      year={2023},
      eprint={2209.15571},
      archivePrefix={arXiv},
      primaryClass={cs.LG},
      url={https://arxiv.org/abs/2209.15571}, 
}

@inproceedings{song2020score,
      title={Score-Based Generative Modeling through Stochastic Differential Equations}, 
      author={Yang Song and Jascha Sohl-Dickstein and Diederik P. Kingma and Abhishek Kumar and Stefano Ermon and Ben Poole},
      year={2021},
      booktitle={International Conference on Learning Representations},
      eprint={2011.13456},
      archivePrefix={arXiv},
      primaryClass={cs.LG},
      url={https://arxiv.org/abs/2011.13456}, 
}

@misc{ho2020denoising,
      title={Denoising Diffusion Probabilistic Models}, 
      author={Jonathan Ho and Ajay Jain and Pieter Abbeel},
      year={2020},
      eprint={2006.11239},
      archivePrefix={arXiv},
      primaryClass={cs.LG},
      url={https://arxiv.org/abs/2006.11239}, 
}

@incollection{doucet2001introduction,
author="Doucet, Arnaud
and de Freitas, Nando
and Gordon, Neil",
editor="Doucet, Arnaud
and de Freitas, Nando
and Gordon, Neil",
title="An Introduction to Sequential Monte Carlo Methods",
booktitle="Sequential Monte Carlo Methods in Practice",
year="2001",
publisher="Springer New York",
address="New York, NY",
pages="3--14",
isbn="978-1-4757-3437-9",
doi="10.1007/978-1-4757-3437-9_1",
url="https://doi.org/10.1007/978-1-4757-3437-9_1"
}

@article{biasini2013openstructure,
author = "Biasini, M. and Schmidt, T. and Bienert, S. and Mariani, V. and Studer, G. and Haas, J. and Johner, N. and Schenk, A. D. and Philippsen, A. and Schwede, T.",
title = "{{OpenStructure}: an integrated software framework for computational structural biology}",
journal = "Acta Crystallographica Section D",
year = "2013",
volume = "69",
number = "5",
pages = "701--709",
month = "May",
doi = {10.1107/S0907444913007051},
url = {https://doi.org/10.1107/S0907444913007051},
}

@article{zheng2023judging,
  title = {Judging {LLM}-as-a-Judge with {MT-Bench} and {Chatbot Arena}},
  author = {Zheng, Lianmin and Chiang, Wei-Lin and Sheng, Ying and
            Zhuang, Siyuan and Wu, Zhanghao and Zhuang, Yonghao and
            Lin, Zi and Li, Zhuohan and Li, Dacheng and Xing, Eric P. and
            Zhang, Hao and Gonzalez, Joseph E. and Stoica, Ion},
  journal = {arXiv preprint arXiv:2306.05685},
  year = {2023},
  url = {https://arxiv.org/abs/2306.05685}
}

@article{wang2025skywork,
  title={Skywork-vl reward: An effective reward model for multimodal understanding and reasoning},
  author={Wang, Xiaokun and Wang, Peiyu and Pei, Jiangbo and Shen, Wei and Peng, Yi and Hao, Yunzhuo and Qiu, Weijie and Jian, Ai and Xie, Tianyidan and Song, Xuchen and others},
  journal={arXiv preprint arXiv:2505.07263},
  year={2025}
}

@article{wu2023human,
  title={Human preference score v2: A solid benchmark for evaluating human preferences of text-to-image synthesis},
  author={Wu, Xiaoshi and Hao, Yiming and Sun, Keqiang and Chen, Yixiong and Zhu, Feng and Zhao, Rui and Li, Hongsheng},
  journal={arXiv preprint arXiv:2306.09341},
  year={2023}
}

@article{chelba2013one,
  title={One billion word benchmark for measuring progress in statistical language modeling},
  author={Chelba, Ciprian and Mikolov, Tomas and Schuster, Mike and Ge, Qi and Brants, Thorsten and Koehn, Phillipp and Robinson, Tony},
  journal={arXiv preprint arXiv:1312.3005},
  year={2013}
}

@misc{qwen2025qwen25technicalreport,
      title={Qwen2.5 Technical Report}, 
      author={Qwen and : and An Yang and Baosong Yang and Beichen Zhang and Binyuan Hui and Bo Zheng and Bowen Yu and Chengyuan Li and Dayiheng Liu and Fei Huang and Haoran Wei and Huan Lin and Jian Yang and Jianhong Tu and Jianwei Zhang and Jianxin Yang and Jiaxi Yang and Jingren Zhou and Junyang Lin and Kai Dang and Keming Lu and Keqin Bao and Kexin Yang and Le Yu and Mei Li and Mingfeng Xue and Pei Zhang and Qin Zhu and Rui Men and Runji Lin and Tianhao Li and Tianyi Tang and Tingyu Xia and Xingzhang Ren and Xuancheng Ren and Yang Fan and Yang Su and Yichang Zhang and Yu Wan and Yuqiong Liu and Zeyu Cui and Zhenru Zhang and Zihan Qiu},
      year={2025},
      eprint={2412.15115},
      archivePrefix={arXiv},
      primaryClass={cs.CL},
      url={https://arxiv.org/abs/2412.15115}, 
}

@inproceedings{devlin2019bert,
  title={Bert: Pre-training of deep bidirectional transformers for language understanding},
  author={Devlin, Jacob and Chang, Ming-Wei and Lee, Kenton and Toutanova, Kristina},
  booktitle={Proceedings of the 2019 conference of the North American chapter of the association for computational linguistics: human language technologies, volume 1 (long and short papers)},
  pages={4171--4186},
  year={2019}
}

@article{sanh2019distilbert,
  title={DistilBERT, a distilled version of BERT: smaller, faster, cheaper and lighter},
  author={Sanh, Victor and Debut, Lysandre and Chaumond, Julien and Wolf, Thomas},
  journal={arXiv preprint arXiv:1910.01108},
  year={2019}
}

@article{warstadt2019cola,
  title={Cola: The corpus of linguistic acceptability (with added annotations)},
  author={Warstadt, Alex and Singh, Amanpreet and Bowman, Samuel R},
  year={2019}
}

@article{oquab2023dinov2,
  title={Dinov2: Learning robust visual features without supervision},
  author={Oquab, Maxime and Darcet, Timoth{\'e}e and Moutakanni, Th{\'e}o and Vo, Huy and Szafraniec, Marc and Khalidov, Vasil and Fernandez, Pierre and Haziza, Daniel and Massa, Francisco and El-Nouby, Alaaeldin and others},
  journal={arXiv preprint arXiv:2304.07193},
  year={2023}
}

@inproceedings{reimers2019sentence,
  title={Sentence-bert: Sentence embeddings using siamese bert-networks},
  author={Reimers, Nils and Gurevych, Iryna},
  booktitle={Proceedings of the 2019 conference on empirical methods in natural language processing and the 9th international joint conference on natural language processing (EMNLP-IJCNLP)},
  pages={3982--3992},
  year={2019}
}

@inproceedings{lee2026decoupled,
  title={Decoupled meanflow: Turning flow models into flow maps for accelerated sampling},
  author={Lee, Kyungmin and Yu, Sihyun and Shin, Jinwoo},
  booktitle={International Conference on Learning Representations},
  volume={2026},
  pages={64555--64583},
  year={2026}
}

@article{zhang2004tmscore,
author = {Zhang, Yang and Skolnick, Jeffrey},
title = {Scoring function for automated assessment of protein structure template quality},
journal = {Proteins: Structure, Function, and Bioinformatics},
volume = {57},
number = {4},
pages = {702-710},
doi = {https://doi.org/10.1002/prot.20264},
url = {https://onlinelibrary.wiley.com/doi/abs/10.1002/prot.20264},
eprint = {https://onlinelibrary.wiley.com/doi/pdf/10.1002/prot.20264},
year = {2004}
}
\bibliographystyle{abbrvnat}

\appendix
\newcommand{\apptocline}[1]{%
  \noindent\hyperref[#1]{\makebox[1.9em][l]{\ref*{#1}}\nameref*{#1}}%
  \nobreak\leaders\hbox{$\;\cdot\;$}\hfill\nobreak\hyperref[#1]{\pageref*{#1}}\par\nobreak}

\section*{Appendix contents}
\vspace{-0.4em}
{\small
\apptocline{app:learn}
\apptocline{app:in_details}
\apptocline{app:metric_est}
\apptocline{app:add-quan}
\apptocline{app:ablations}
\apptocline{app:protein}
\apptocline{app:extra_theory}
\apptocline{app:proofs}
\apptocline{app:toy}
\apptocline{app:qual}
\apptocline{app:pareto}
}
\vspace{0.6em}

\section{Learning the composition $g$: derivations}
\label{app:learn}

We consider correlation weighting, linear compositions, and MLPs.
Linear and MLP compositions minimize the value or preference loss
in~\cref{eq:mse-loss,eq:pl-loss}, with regularization.
Correlation weights are computed directly from the feedback.
The fitted composition is then held fixed during steering.

\subsection{Fitting objectives}
For value feedback, write $y_i=R^*(x_i)$ and
$f_i=g(\mathbf R(x_i))$. The objective is
\[
\gL_{\mathrm{MSE}}(g)=\frac1n\sum_{i=1}^n(f_i-y_i)^2.
\]
For preferences, let $x_{(i)}^{(j)}$ be the candidate ranked
$i$th in query $j$, with the most preferred first, and write
$f_{(i)}^{(j)}=g(\mathbf R(x_{(i)}^{(j)}))$. The PL loss is
\[
\gL_{\PL}(g)
=-\frac1M\sum_{j=1}^M\sum_{i=1}^{m-1}
\left[f_{(i)}^{(j)}-
\log\sum_{l=i}^m e^{f_{(l)}^{(j)}}\right].
\]
It is convex in the candidate scores and invariant to a common
additive shift. Proposition~\ref{prop:plhess} gives its derivatives.

\subsection{Correlation weighting}
The composition is $\hat g(\mathbf R)=\sum_{j=1}^k w_jR_j$.
For value feedback, $\rho_j$ is the empirical Pearson
correlation between $R_j$ and the oracle values. For preferences,
it is the Spearman correlation with the observed rankings,
assigning larger ordinal scores to more preferred candidates.
We retain signed correlations and use
\[
w_j=
\begin{cases}
\rho_j/\sum_{l=1}^k\rho_l, & \sum_{l=1}^k\rho_l>0,\\
1/k, & \text{otherwise}.
\end{cases}
\]
This is a weighting heuristic rather than, in general, a minimizer
of either fitting loss.

\subsection{Linear compositions}
Let $g_{\mathbf w}(\mathbf R)=\mathbf w^\top\mathbf R$ and
let $\Phi$ have rows $\mathbf R(x_i)^\top$. For values,
ridge regression minimizes
\[
\frac12\|\Phi\mathbf w-\mathbf y\|_2^2
+\frac\lambda2\|\mathbf w\|_2^2,
\qquad \lambda>0,
\]
with solution
\[
\hat{\mathbf w}
=(\Phi^\top\Phi+\lambda I)^{-1}\Phi^\top\mathbf y.
\]
For preferences, we minimize
\[
\gL_{\PL}(g_{\mathbf w})
+\frac{\lambda_{\PL}}2\|\mathbf w\|_2^2,
\qquad \lambda_{\PL}>0.
\]
Both objectives are strictly convex and coercive, so each has
a unique minimizer. The fitted score is
$\hat g(\mathbf R)=\hat{\mathbf w}^\top\mathbf R$.

\subsection{MLP compositions}
We fit a neural composition $g_\theta$ using the appropriate
value or preference objective with weight decay, and set
$\hat g=g_{\hat\theta}$ after training. Unlike the linear
family, this parameterization can represent nonlinear
relationships between the surrogates and the oracle.

\looseness=-1
In practice, we use the same small network in all image and language experiments. The MLP family maps the four surrogate scores, standardised over the pool, through two hidden layers of $64$ units with $\tanh$ activations to a scalar ($4\to64\to64\to1$, no dropout). It is trained full-batch from a single initialisation using Adam with learning rate $3\times10^{-3}$ and weight decay $10^{-3}$.
\clearpage
\section{Implementation details for image and language generation}
\label{app:in_details}

\begin{algorithm}[htbp]
\caption{\nameshort: fit a surrogate composition and steer}
\label{alg:martini}
\begin{algorithmic}[1]
\Require pretrained base model with terminal law $p_{1,\phi}$;
surrogates $\mathbf R$; feedback $\gD_{\text{feedback}}$;
composition family $\gG$; tilt strength $\beta$;
steerer \textsc{Steer}
\State \textbf{Offline fitting:} Evaluate $\mathbf R$ on the feedback candidates in $\gD_{\text{feedback}}$
\If{correlation weighting is selected} 
\State Construct $\hat g$ from empirical correlations
    (\S\ref{app:learn})
\Else
    \State Set $\ell\gets\gL_{\mathrm{MSE}}$ for values~\eqref{eq:mse-loss}
    or $\ell\gets\gL_{\PL}$ for preferences~\eqref{eq:pl-loss}
    \State Fit $\hat g\in\gG$ using $\ell$ and
    family-specific regularization
\EndIf
\State \textbf{Inference:}
\State \Return $\textsc{Steer}(\text{base model}, \hat g\circ\mathbf R, \beta)$
\end{algorithmic}
\end{algorithm}

We give implementation details for the experiments in~\S\ref{sec:experiments}. Additional results appear in~\S\ref{app:ablations}.

\xhdr{Compute}
Experiments were run on NVIDIA H100-80GB GPUs, with
independent experiments distributed across
devices.

\subsection{ImageNet}
We use the released model of \citet{potaptchik2026mfm} (\texttt{mfm-xl2}), a SiT-XL/2 transformer distilled from the Decoupled MeanFlow ImageNet model~\citep{lee2026decoupled}. It generates
class-conditional ImageNet images at $256\times256$ in the latent space of the Stable Diffusion VAE
(\texttt{sd-vae-ft-mse}; latents of shape $4\times32\times32$, scaled by $0.18215$), with noise at
$t=0$, data at $t=1$ and the linear interpolant $x_t=(1-t)\varepsilon+t\,x_1$. The model is
conditioned only on the ImageNet class label: the text prompt never enters the model, and acts only through the rewards --- $R^*$ and
the surrogates score the decoded image against it.

\xhdr{Reward models}
The oracle uses \texttt{Skywork/Skywork-VL-Reward-7B}.
The four surrogates are \texttt{RE-N-Y/hpsv21},
\texttt{RE-N-Y/ImageReward},
\texttt{openai/clip-vit-base-patch16}, and
\texttt{yuvalkirstain/PickScore\_v1}.
For each image and steering prompt, the oracle score is the
Yes/No logit margin,
\[
R^*(x)=\mathrm{logit}_{\mathrm{Yes}}(x)
-\mathrm{logit}_{\mathrm{No}}(x).
\]

\xhdr{Target} $q^*\propto p_{1,\phi}\exp(\beta z(R^*))$, $\beta=1$, is estimated by self-normalised
reweighting of $8192$ independent draws from the model's reverse SDE ($32$ steps); the effective
sample size of these weights is reported per class in Table~\ref{tab:in-classes}.

\xhdr{Feedback and fitting}
ImageNet uses $n{=}32$ scalar $R^*$ evaluations (value) or $M{=}128$ ranking queries of $m{=}8$
candidates each (preference), drawn from the class's pool. We fit separate composition coefficients
for each class--prompt pair. A fifth of the value labels ($6$) and a quarter of the preference queries
($32$, built from images disjoint from the fitting queries) are held out; every family is fit on the
remainder, and the family is selected by held-out MSE (value) or Plackett--Luce loss (preference),
averaged over all $19$ pairs. The linear family is selected in both regimes. The held-out part is used
only for this selection and is not refit.
loss.

\xhdr{Preference utilities}
The synthetic annotator sorts $z(R^*)+\sigma\epsilon$ with independent standard Gumbel noise
$\epsilon$ and $\sigma=0.5$, which samples Plackett--Luce rankings with utilities $z(R^*)/\sigma$.
We standardise $R^*$ once over the reference pool, fixing its scale across queries. In our
eight-candidate calibration (a standard-normal oracle), $\sigma=0.5$ gives $82.7\%$ pairwise
agreement with the unperturbed oracle order. This noise level is loosely informed by the
approximately $81\%$ non-tie human agreement reported on MT-Bench~\citep{zheng2023judging};
agreement with an oracle order and agreement between annotators are different quantities. A fitted
utility $f$ is expressed in PL units, so the tilt it implies on the reference scale is
$\exp(\beta\sigma f)$. In practice we standardise the fitted composition over the pool before use,
so the steering reward and $\mathrm{KL}_g$ both use $\exp(\beta z(f))$, which coincides with
$\exp(\beta\sigma f)$ when $f$ recovers the utility exactly.

\xhdr{Pools and data splits}
For each class--prompt pair and feedback regime, we draw
an independent pool of $8192$ samples and score the oracle
and surrogates. Within each regime, the pool supplies
feedback candidates, normalization statistics, and the
importance-weighted reference~\S\ref{app:metric_est}. Held-out recovery is
measured on pool rows that no fit or selector touched, subsampled to at most $4000$. Oracle and
surrogate scores are computed once per pool sample and cached. The prior $p_{1,\phi}$ is $64$ fresh
draws from the FMRG sampler at zero strength, and BoN, sBoN and SLOP select from an independent pool
of $4096$ such draws, never from the reference they are scored against; these rows come from a
separate run with its own $8192$-draw reference.

\xhdr{Steering and strength selection}
We use FMRG for ImageNet and sweep its guidance strength $\lambda\in\{2,3,5,8\}$ separately from the
target tilt strength $\beta=1$. Each run takes $32$ kernel transitions with a reward-gradient step
after each, the reward evaluated on one look-ahead endpoint (one noise draw, one flow-map jump), and
each setting is a single run of $64$ samples, scored afresh. For each method and class--prompt pair,
all generated-sample metrics are reported at the evaluated strength with the lowest embedding-space
energy distance to $q^*$. BoN is reported at fixed rungs ($n{=}8,32$), and SLOP at $16$ candidates per
sample, the rung matched to \nameshort's inference cost, calibrated on the same $n{=}32$ labels.
This selection uses the oracle-scored reference and describes performance within the evaluated sweep.

\xhdr{Baselines}
BSS is selected from training feedback via either PCC or SCC for value and preference regimes respectively. Uniform weighting combines the
normalized surrogate scores with equal weights.
For images, ReNO uses its published fixed reward weights
within the same FMRG pipeline~\citep{eyring2024renoenhancingonesteptexttoimage}.
This compares reward mixtures rather than reproducing
ReNO's original noise-optimization procedure.
We also compare ParetoFlow~\citep{yuan2025paretoflow}
and SLOP's calibrated selection protocol~\citep{wang2026temper}.
SLOP is calibrated using numerical oracle values; its
preference-regime results are therefore a value-calibrated
reference. BoN selects the highest-$R^*$ candidate among
$N$ candidates, requiring $N$ oracle evaluations per output.

\subsection{Language generation}
We use the released FMLM checkpoint~\citep{lee2026flow} (\texttt{david3684/FMLM-B-LM1B}), trained on
One Billion Words~\citep{chelba2013one}. Generation is unconditional, with $128$ tokens from the
$30{,}522$-token \texttt{bert-base-uncased} vocabulary generated in $8$ flow-map steps, at softmax
temperature $1$ and without per-step noise injection ($\gamma=0$). We use FMRG steering and feedback
budgets of $n=256$ numerical rewards or $M=256$ rankings of $m=8$ candidates, labelled by the same
Plackett--Luce annotator as ImageNet ($\sigma=0.5$). Each evaluated setting is run with $5$ seeds of
$256$ generated samples; metrics are computed per seed and averaged, and $|\mathrm{gap}|$ is taken
from the seed-averaged mean.

The oracle is negative per-sequence generative perplexity under Qwen2.5-1.5B~\citep{qwen2025qwen25technicalreport},
scored in fp32 after removing the \texttt{[CLS]} separators that LM1B's sentence packing leaves inside
each sample. The surrogates are unmasked pseudo-log-likelihoods from BERT and DistilBERT
(\texttt{bert-base-uncased}, \texttt{distilbert-base-uncased}; the mean log-probability the MLM head
assigns to each token actually present, in a single forward pass without masking), a CoLA
acceptability score (the acceptable-minus-unacceptable logit margin of
\texttt{textattack/bert-base-uncased-CoLA}), and the empirical unigram entropy of the sample. All four
share the generator's vocabulary, so their differentiable extensions read a distribution over tokens
directly: it is multiplied into each scorer's word-embedding matrix (between the \texttt{[CLS]} and
\texttt{[SEP]} embeddings), and the entropy is taken of the position-averaged distribution. On one-hot
inputs these reduce exactly to the discrete scores. Held-out loss selects an MLP for values and a
linear composition for preferences.

\xhdr{Pools and steering} For each run, $2048$ unsteered samples are scored by the oracle and all
surrogates: $1536$ rows supply the feedback and $512$ are held out for recovery, and the whole pool,
importance-weighted, forms the $q^*$ reference (ESS $1155$ of $2048$). FMRG follows each flow-map step
with two unit-normalised reward-gradient steps, $x\leftarrow x+\lambda\,\nabla_x r/\lVert\nabla_x
r\rVert$, where $r$ is evaluated on the endpoint prediction $X_{t,1}(x)$ --- one call of the
two-time denoiser, on the vocabulary simplex --- at every step. The strength grid is
$\lambda\in\{0.00375, 0.0075, 0.015, 0.03, 0.06, 0.125, 0.25, 0.5, 1, 2, 4\}$; figures and tables show
$\lambda\le0.125$, where the methods separate. Candidates may recur across ranking queries, so the
$M=256$ groups contain $2048$ candidate occurrences
rather than $2048$ distinct samples.

\subsection{Licenses and attribution}
The \href{https://huggingface.co/adh1s/mfm}{MFM} and
\href{https://huggingface.co/kyungmnlee/DMF}{DMF} checkpoints,
\href{https://github.com/willisma/SiT/blob/main/LICENSE.txt}{SiT},
\href{https://huggingface.co/stabilityai/sd-vae-ft-mse}{SD-VAE-ft-MSE},
\href{https://huggingface.co/stabilityai/sd-vae-ft-ema}{SD-VAE-ft-EMA},
\href{https://huggingface.co/Skywork/Skywork-VL-Reward-7B}{Skywork-VL-Reward-7B},
and \href{https://github.com/openai/CLIP/blob/main/LICENSE}{CLIP}
are released under MIT licenses; the
\href{https://github.com/yuvalkirstain/PickScore/blob/main/LICENSE}{PickScore project repository}
also uses MIT.
The \href{https://huggingface.co/Qwen/Qwen2.5-VL-7B-Instruct}{Qwen2.5-VL-7B-Instruct}
backbone and the upstream
\href{https://huggingface.co/xswu/HPSv2}{HPSv2.1} and
\href{https://huggingface.co/zai-org/ImageReward}{ImageReward}
models use Apache-2.0; the upstream reward-model licenses apply to the
\texttt{RE-N-Y} repackagings identified above.

For the underlying image dataset,
\href{https://www.image-net.org/download.php}{ImageNet}
is governed by its research and educational-use agreement.
Dataset terms are distinct from the licenses of the
pretrained models used to generate the experimental pools.

\clearpage
\section{Metric estimation for image and language generation}
\label{app:metric_est}

We distinguish the tilt induced by a fitted composition from
the samples produced by a steerer. Image and language references
use the sampler law $p_{1,\phi}$ and target strength $\beta=1$
on standardized oracle rewards. The sampler's guidance parameter
is swept separately and does not redefine this target.

\xhdr{Reference distribution and effective sample size}
\label{app:ess}
Let $x_i\sim p_{1,\phi}$, $i=1,\ldots,N_{\mathrm{ref}}$,
be reference samples. With oracle mean $a_*$ and standard
deviation $s_*>0$ estimated on this pool, define
\[
z_*(x)=\frac{R^*(x)-a_*}{s_*},
\qquad
w_i^*=\frac{e^{\beta z_*(x_i)}}
{\sum_{j=1}^{N_{\mathrm{ref}}}e^{\beta z_*(x_j)}},
\qquad
\widehat q^*=\sum_iw_i^*\delta_{x_i}.
\]
The same standardization is used for all compared methods
within a task. The effective sample size is
\[
\mathrm{ESS}=\frac1{\sum_i(w_i^*)^2}.
\]
Small ESS indicates that a few reference samples carry most
of the weight. The main ImageNet comparison uses the seven
class--prompt pairs with reference ESS above $1000$ in both
feedback regimes, limiting sensitivity to concentrated
weights. Sensitivity to this restriction is reported
in~\S\ref{app:ablations}.

\xhdr{Composition KL}
Evaluate the fitted score on the same reference samples and standardize it there,
$z_{\hat g}(x)=(\hat g(\mathbf R(x))-a_{\hat g})/s_{\hat g}$, with $a_{\hat g}$ and $s_{\hat g}$ its mean
and standard deviation over the pool; this is the same standardization applied to the steering reward.
Define
\[
w_i^{\hat g}
=\frac{e^{\beta z_{\hat g}(x_i)}}
{\sum_j e^{\beta z_{\hat g}(x_j)}},
\qquad
\widehat{\KL}_g
=\sum_iw_i^*\log\frac{w_i^*}{w_i^{\hat g}}.
\]
Because $\widehat{\KL}_g$ depends on $\hat g$ only through $z_{\hat g}$, it is unchanged by positive affine rescaling of the fitted score. This is the exact KL between the two empirical tilts on the common pool and estimates the corresponding population KL. It requires no steered samples and remains fixed across the sampler-guidance sweep when the score and $\beta$ are fixed. Standardization changes the induced tilt. With population normalization, conditional-mean regression nevertheless minimizes its leading KL coefficient among standardized surrogate scores~(\cref{cor:standardized}); the fourth-order excess-KL result applies to the unscaled conditional mean. Our implementation estimates normalization moments from the reference pool.

\xhdr{Embedding-space energy distance}
Let $P$ be the generated distribution and $\psi$ the
evaluation embedding. For independent $X,X'\sim P$ and
$Y,Y'\sim q^*$, define
\[
\mathcal E_\psi(P,q^*)
=2\E\|\psi(X)-\psi(Y)\|_2
-\E\|\psi(X)-\psi(X')\|_2
-\E\|\psi(Y)-\psi(Y')\|_2.
\]
This compares the embedding distributions: it vanishes when
those distributions agree, but need not identify the complete
image or text distribution. It requires no bandwidth and
includes a within-sample diversity term.
We use DINOv2-base~\citep{oquab2023dinov2} for images
and token-mean-pooled
\texttt{all-mpnet-base-v2}~\citep{reimers2019sentence} for text,
with $L_2$-normalized embeddings.
Reference expectations use self-normalized importance
weights. The resulting finite-sample estimate is not
generally unbiased and can be negative when self-pairs
are excluded.

\xhdr{Sliced Wasserstein distance}
For $B=256$ unit directions $\theta_b$, sampled once using
a fixed seed shared across methods, we report
\[
\widehat{\mathrm{SW}}_1
=\frac1B\sum_{b=1}^B
W_1\!\left(
(\theta_b^\top\psi)_\#\widehat P,
(\theta_b^\top\psi)_\#\widehat q^*
\right),
\]
where $T_\#P$ denotes the distribution of $T(X)$ for $X\sim P$.
Each projected distance is computed between the unweighted
generated samples and weighted reference samples.

\xhdr{Oracle-reward marginal KL}
Let $\widehat f_P$ and $\widehat f_*$ be Gaussian KDEs
of the generated and reference oracle rewards, with the
reference KDE using weights $w_i^*$. We estimate
\[
\KL_{\mathrm{KDE}}
=\int\widehat f_*(r)
\log\frac{\widehat f_*(r)}{\widehat f_P(r)}\,\mathrm dr.
\]
The numerical grid contains $200$ points spanning the
combined observed reward range with $5\%$ padding.
This is a truncated numerical approximation to KL on
the oracle-reward marginal, rather than on the full
generated distribution.

\xhdr{Gaussian approximation to reward-marginal KL}
Let $a_P,s_P^2$ and $a_q,s_q^2$ be the means and variances
of $R^*$ under $P$ and $q^*$, respectively. For positive
variances, the Gaussian approximation in the same direction is
\[
\KL_{\mathcal N}
=\log\frac{s_P}{s_q}
+\frac{s_q^2+(a_q-a_P)^2}{2s_P^2}-\frac12.
\]
Reference moments are estimated using $w_i^*$.

\xhdr{Standardized reward Wasserstein distances}
Let $F_P$ and $F_q$ be the CDFs of $z_*(X)$ under $P$
and $q^*$. For $p\in\{1,2\}$, define
\[
W_p^z
=\left[\int_0^1
|F_q^{-1}(u)-F_P^{-1}(u)|^p\,\mathrm du\right]^{1/p}.
\]
The same oracle standardization is applied to both
distributions. These compare reward marginals, not
distributions in the generation space.

\xhdr{Reward gain and gap}
For a generated law $P$, define
\[
\mathrm{gain}(P)=\E_P[R^*]-\E_{p_{1,\phi}}[R^*],
\qquad
\mathrm{gap}(P)=\E_{q^*}[R^*]-\E_P[R^*].
\]
The absolute gap is $|\mathrm{gap}(P)|$; a negative
signed gap indicates a mean reward above that of the
reference tilt. Matching the mean reward alone does not
establish distributional fidelity.

\xhdr{Language quality and diversity}
Let $\mathrm{NLL}_i$ be the total negative log-likelihood
of sequence $i$ under the evaluation language model and
$T_i$ its number of scored tokens. The per-sequence oracle
uses $R^*(x_i)=-\exp(\mathrm{NLL}_i/T_i)$, whereas
the aggregate quality metric is corpus perplexity,
\[
\mathrm{gPPL}(P)
=\exp\!\left(\frac{\sum_i\mathrm{NLL}_i}{\sum_iT_i}\right).
\]
Self-BLEU is the mean BLEU score of each generated sample
against the others, computed on the generator's token IDs.
Higher Self-BLEU indicates greater overlap across samples. We use \emph{diversity collapse} to describe outputs
dominated by repeated tokens or near-duplicate sequences,
assessed through within-sequence entropy and Self-BLEU.
We report the diversity-adjusted cost
\[
Q(P)=
\frac{\mathrm{gPPL}(P)}{\mathrm{gPPL}(p_{1,\phi})}
\max\!\left(1,
\frac{\mathrm{sBLEU}(P)}{\mathrm{sBLEU}(p_{1,\phi})}
\right).
\]
It equals $1$ for the prior, and lower is better. The
one-sided factor penalizes increased repetition without
rewarding reduced repetition; it does not ensure fluency
or rule out all forms of reward hacking.

\xhdr{Image quality and diversity}
Let $\mathbb{E}_P[R^*]$ be the mean oracle reward over the samples of method $P$, and
$\mathbb{E}_{q^*}[R^*]$ the target's, estimated by self-normalised importance weighting of the reference
pool. We report the reward error $|\mathrm{gap}|(P)=\lvert\mathbb{E}_{q^*}[R^*]-\mathbb{E}_P[R^*]\rvert$,
which penalises falling short of $q^*$ and overshooting it alike. Diversity $\mathrm{Div}(P)$ is the mean
pairwise cosine distance, $1-\cos(f_i,f_j)$ averaged over all pairs $i<j$, between the DINOv2 embeddings
(\texttt{dinov2-base}, images resized to $224\times224$) of the decoded samples; higher indicates greater
spread. For images, we combine standardized oracle-reward gain
with an analogous one-sided diversity penalty:
\[
Q(P)=
\exp\!\left(-\frac{\E_P[R^*]-\E_{p_{1,\phi}}[R^*]}{s_*}\right)
\max\!\left(1,
\frac{\mathrm{Div}(p_{1,\phi})}{\mathrm{Div}(P)}\right).
\]
where $s_*$ is the standard deviation of $R^*$ over the reference pool, the scale on which the tilt
$z(R^*)$ is defined. $Q$ equals $1$ for the prior and lower is better. Because it is always positive, the
one-sided factor always penalises lost diversity without rewarding added diversity. As for language, $Q$
rewards reward gain without limit, including gain beyond $\mathbb{E}_{q^*}[R^*]$, so fidelity to the target
is read from $|\mathrm{gap}|$ and the distributional metrics. $Q$ is computed per class--prompt pair and
then averaged.

\clearpage
\section{Composition selection and recovery}
\label{app:add-quan}
\subsection{Language generation}
We compare correlation, linear, and MLP compositions using
held-out fitting loss. MSE selects an MLP for value feedback,
and PL loss selects a linear composition for preferences.
The recovery and composition-KL comparisons appear
in~\cref{fig:lang-recovery,fig:lang-klg}.

\begin{figure}[h!]
  \centering
  \begin{subfigure}{\textwidth}
    \centering
    \includegraphics[width=\linewidth]{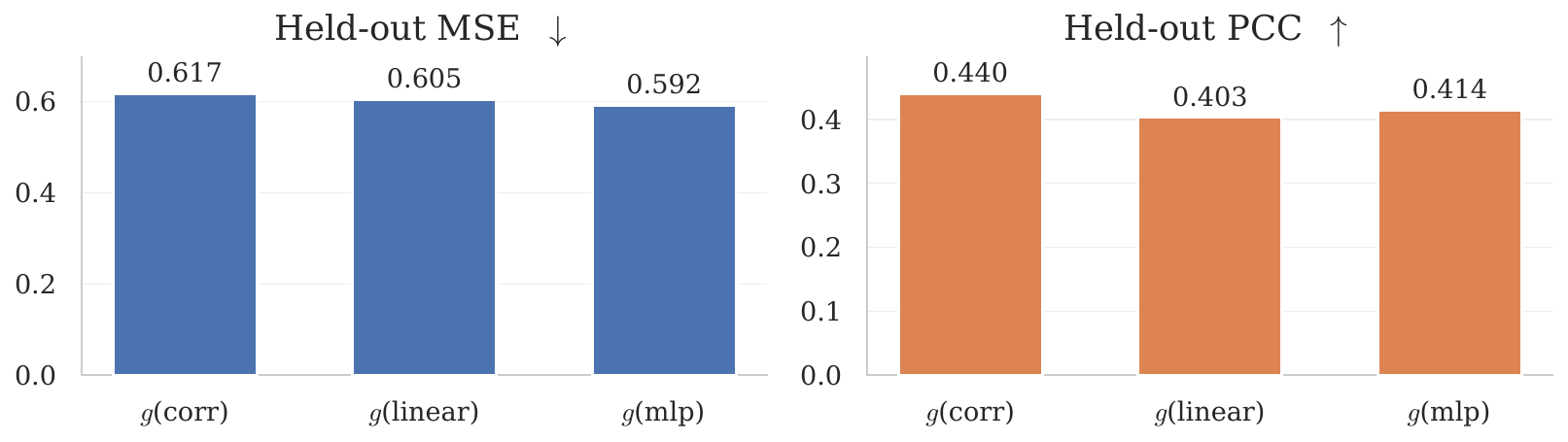}
    \caption{Value feedback.}
  \end{subfigure}

  \medskip
  \begin{subfigure}{\textwidth}
    \centering
    \includegraphics[width=\linewidth]{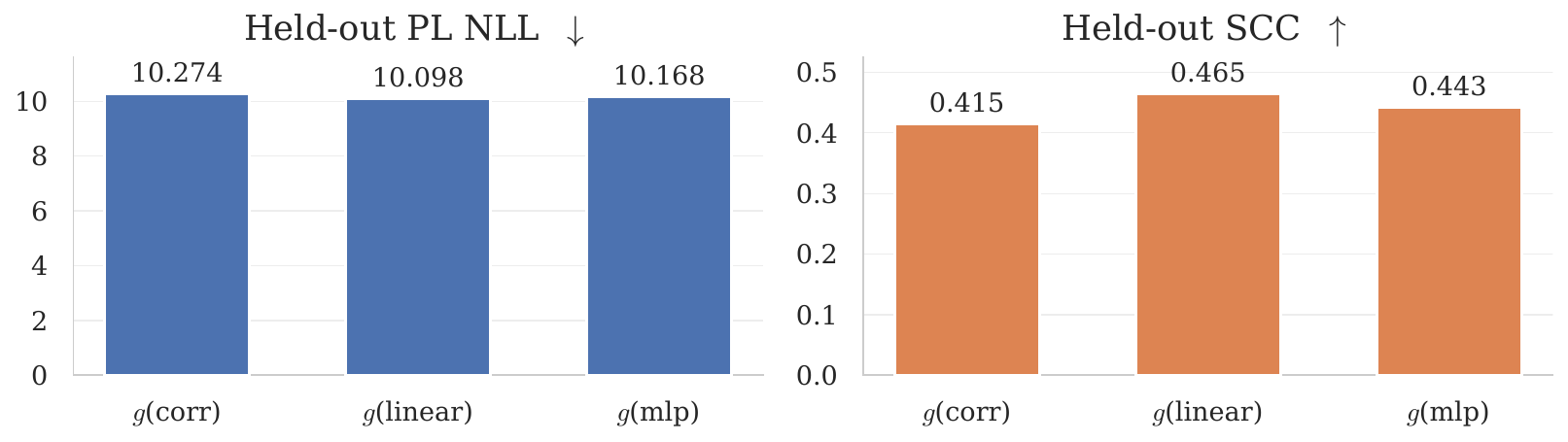}
    \caption{Preference feedback.}
  \end{subfigure}
    \caption{\small\looseness=-1
  \textbf{Language reward recovery.}
  Held-out losses and correlations for the three composition
  families. Lower loss and higher correlation are better.}
  \label{fig:lang-recovery}
\end{figure}

\begin{figure}[h!]
  \centering
  \begin{subfigure}{0.49\textwidth}
    \centering
    \includegraphics[width=\linewidth]{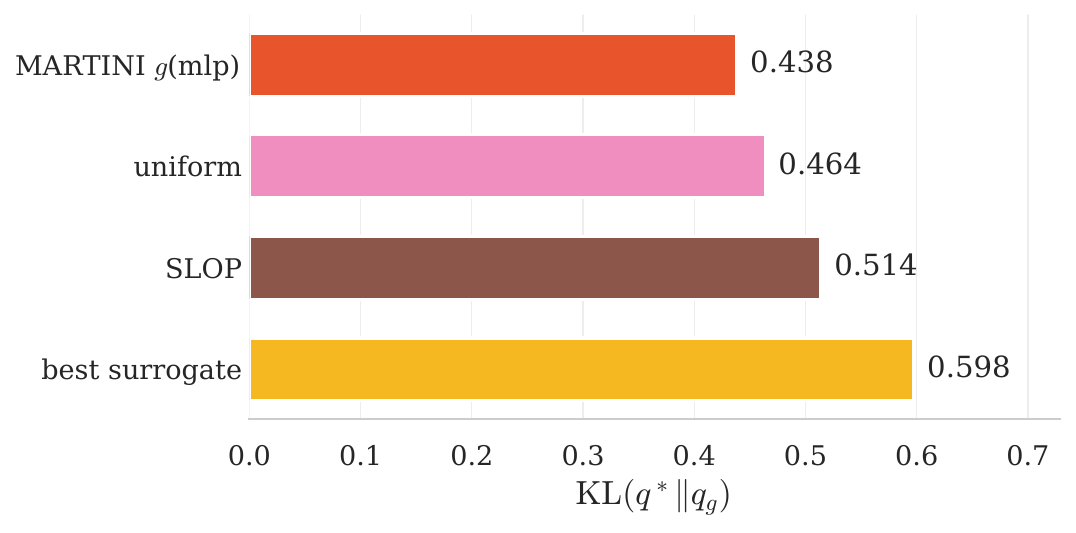}
    \caption{Value feedback.}
  \end{subfigure}
  \hfill
  \begin{subfigure}{0.49\textwidth}
    \centering
    \includegraphics[width=\linewidth]{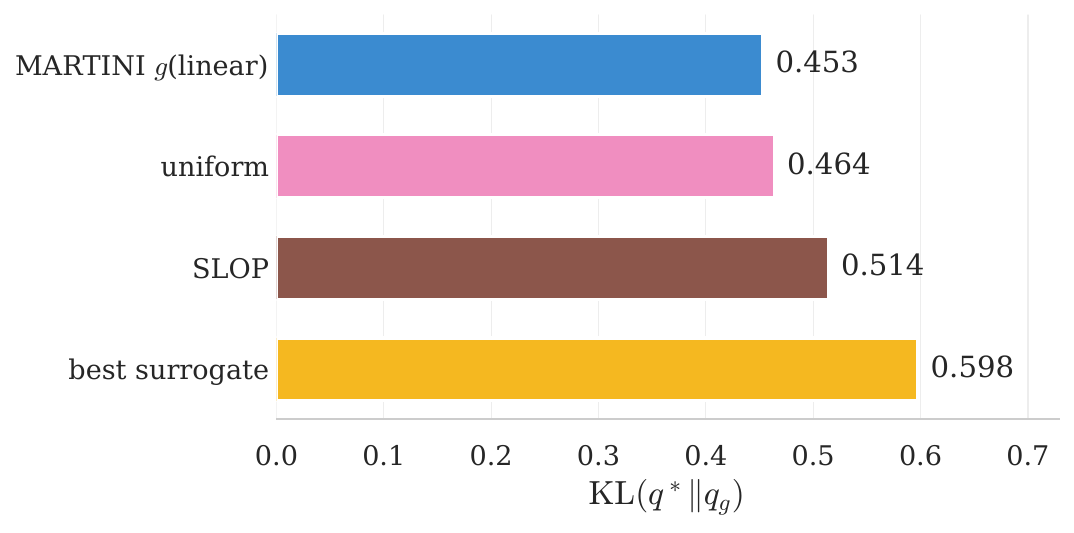}
    \caption{Preference feedback.}
  \end{subfigure}
    \caption{\small\looseness=-1
  \textbf{Language composition fidelity.}
  Estimated $\KL(q^*\|q_g)$ for the selected compositions
  and displayed baselines. Lower is better.}
  \label{fig:lang-klg}
\end{figure}

\clearpage
\subsection{ImageNet}
We average held-out fitting loss across all $19$
class--prompt pairs in~\cref{tab:in-classes}.
The linear family has the lowest loss in both regimes
and is used for steering. Coefficients are fitted
separately for each pair.

\begin{figure}[htbp]
  \centering
  \begin{subfigure}{\linewidth}
    \centering
    \includegraphics[width=0.85\linewidth]{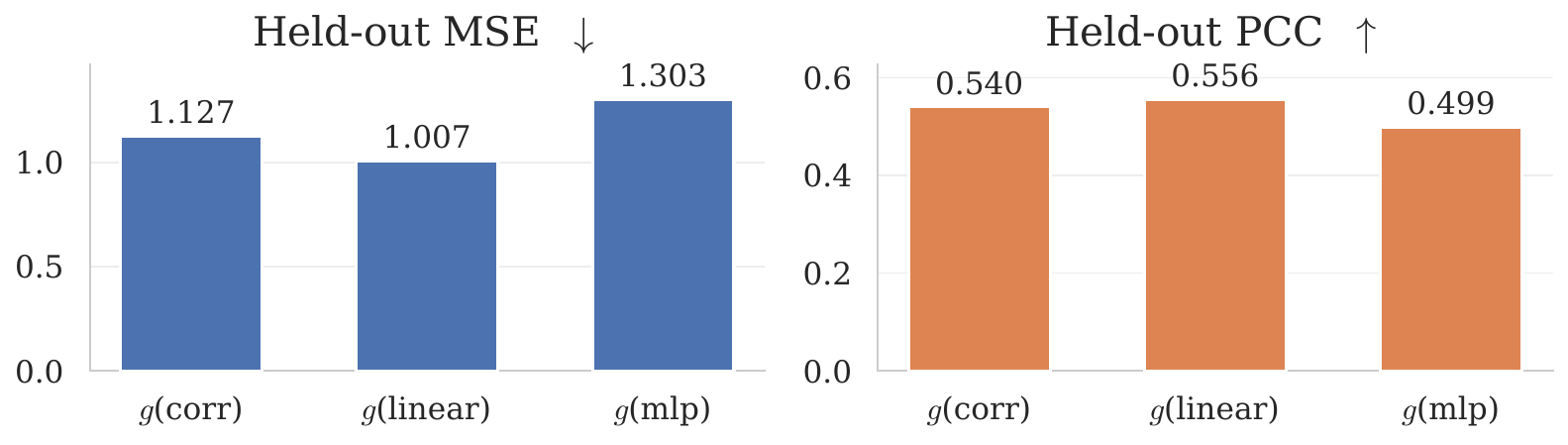}
    \caption{Value regime, $n{=}32$.}
    \label{fig:in-recovery-value}
  \end{subfigure}
  \vspace{4pt}
  \begin{subfigure}{\linewidth}
    \centering
    \includegraphics[width=0.85\linewidth]{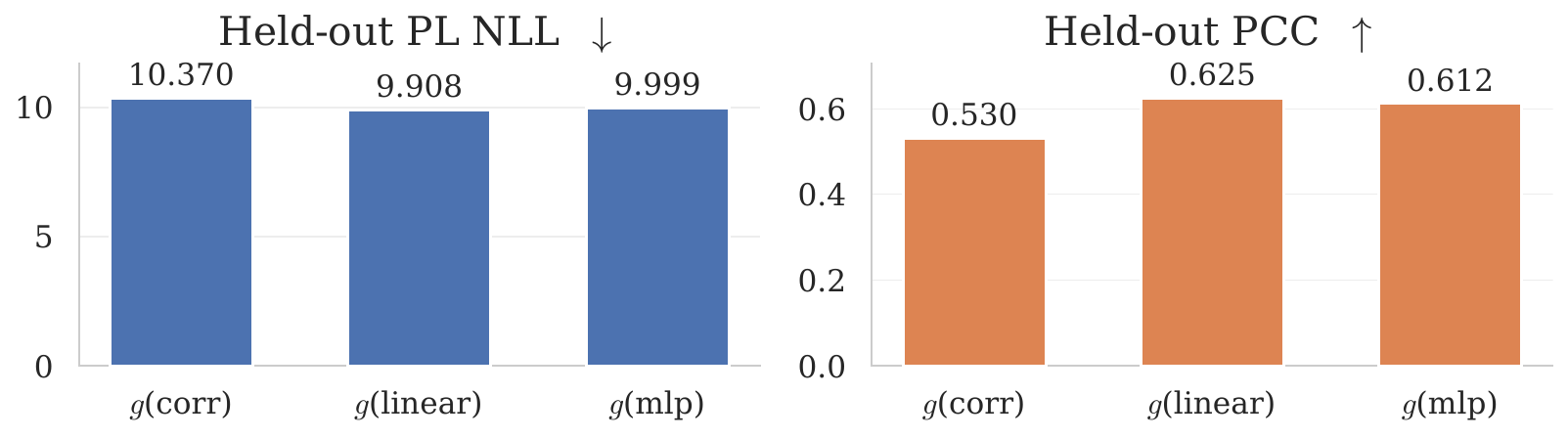}
    \caption{Preference regime, $M{=}128$.}
    \label{fig:in-recovery-pref}
  \end{subfigure}
   \caption{\small\looseness=-1
  \textbf{ImageNet reward recovery.}
  Left: held-out MSE for values and PL loss for preferences
  (lower is better). Right: held-out correlation between
  predicted and oracle rewards (higher is better).}
  \label{fig:in-recovery}
\end{figure}

\clearpage
\section{Additional results and sensitivity analyses}
\label{app:ablations}

\subsection{Language Generation}
\looseness=-1
\Cref{tab:lang-full} reports the complete language results,
including oracle baselines and displayed guidance strengths.

\begingroup
\footnotesize
\setlength{\tabcolsep}{2pt}
\renewcommand{\arraystretch}{1.0}
\setlength{\LTleft}{\fill}
\setlength{\LTright}{\fill}
\setlength{\LTcapwidth}{\linewidth}

\definecolor{AppMartiniVal}{RGB}{219,234,249}
\definecolor{AppMartiniPref}{RGB}{176,206,240}
\definecolor{AppOracle}{RGB}{253,229,203}

\begin{longtable}{@{}l*{10}{r}@{}}
\caption{\small \textbf{Language generation: complete results.} SB denotes Self-BLEU; Un. counts distinct sequences among $256$ outputs. Orange rows use inference-time oracle evaluations. Blue rows show \nameshort\ compositions.}
\phantomsection
\label{tab:lang-full}\\

\toprule
Method & Strength & Energy $\downarrow$ & gPPL $\downarrow$ & SB $\downarrow$ & Un. $\uparrow$ & $\mathrm{KL}_{\mathcal N}\!\downarrow$ & $\mathrm{KL}_{\mathrm{KDE}}\!\downarrow$ & $W_1^z\!\downarrow$ & $W_2^z\!\downarrow$ & $|\mathrm{gap}|\!\downarrow$ \\
\midrule
\endfirsthead
\multicolumn{11}{@{}l}{\small\textit{\tablename~\thetable\ (continued)}}\\
\toprule
Method & Strength & Energy $\downarrow$ & gPPL $\downarrow$ & SB $\downarrow$ & Un. $\uparrow$ & $\mathrm{KL}_{\mathcal N}\!\downarrow$ & $\mathrm{KL}_{\mathrm{KDE}}\!\downarrow$ & $W_1^z\!\downarrow$ & $W_2^z\!\downarrow$ & $|\mathrm{gap}|\!\downarrow$ \\
\midrule
\endhead
\midrule
\multicolumn{11}{r@{}}{\footnotesize Continued on next page}\\
\endfoot
\bottomrule
\endlastfoot
Prior $p_{1,\phi}$ & \textemdash & $5.84\!\times\!10^{-4}$ & $94.5$ & $0.232$ & $256$ & $0.380$ & $0.326$ & $0.790$ & $0.858$ & $20.33$ \\
\midrule
\rowcolor{AppOracle}
BoN ($N=8$) & \textemdash & $1.63\!\times\!10^{-2}$ & $64.6$ & $0.909$ & $82$ & $1.119$ & $1.517$ & $0.481$ & $0.582$ & $12.20$ \\
\rowcolor{AppOracle}
BoN ($N=64$) & \textemdash & $1.33\!\times\!10^{-1}$ & $49.4$ & $0.987$ & $17$ & $20.317$ & $14.635$ & $1.095$ & $1.219$ & $28.02$ \\
\rowcolor{AppOracle}
sBoN ($N=8$) & \textemdash & $5.40\!\times\!10^{-3}$ & $79.3$ & $0.773$ & $141$ & $0.024$ & $0.037$ & $0.162$ & $0.184$ & $3.87$ \\
\rowcolor{AppOracle}
sBoN ($N=64$) & \textemdash & $7.32\!\times\!10^{-3}$ & $76.0$ & $0.763$ & $137$ & $0.009$ & $0.034$ & $0.091$ & $0.125$ & $0.62$ \\
\midrule
Uniform & $0.00375$ & $5.56\!\times\!10^{-4}$ & $93.4$ & $0.233$ & $256$ & $0.350$ & $0.299$ & $0.743$ & $0.808$ & $19.12$ \\
Uniform & $0.015$ & $6.56\!\times\!10^{-4}$ & $93.6$ & $0.234$ & $256$ & $0.351$ & $0.303$ & $0.747$ & $0.810$ & $19.24$ \\
Uniform & $0.03$ & $6.54\!\times\!10^{-4}$ & $94.5$ & $0.233$ & $256$ & $0.376$ & $0.327$ & $0.784$ & $0.839$ & $20.18$ \\
Uniform & $0.06$ & $8.02\!\times\!10^{-4}$ & $95.8$ & $0.228$ & $256$ & $0.415$ & $0.356$ & $0.841$ & $0.906$ & $21.66$ \\
Uniform & $0.125$ & $1.30\!\times\!10^{-3}$ & $95.3$ & $0.225$ & $256$ & $0.403$ & $0.342$ & $0.828$ & $0.902$ & $21.31$ \\
\midrule
BSS & $0.00375$ & $5.20\!\times\!10^{-4}$ & $94.3$ & $0.233$ & $256$ & $0.374$ & $0.319$ & $0.779$ & $0.848$ & $20.06$ \\
BSS & $0.015$ & $8.72\!\times\!10^{-4}$ & $96.5$ & $0.232$ & $256$ & $0.437$ & $0.380$ & $0.873$ & $0.952$ & $22.49$ \\
BSS & $0.03$ & $8.30\!\times\!10^{-4}$ & $99.6$ & $0.230$ & $256$ & $0.523$ & $0.466$ & $0.995$ & $1.072$ & $25.61$ \\
BSS & $0.06$ & $1.28\!\times\!10^{-3}$ & $103.8$ & $0.224$ & $256$ & $0.650$ & $0.602$ & $1.164$ & $1.247$ & $29.98$ \\
BSS & $0.125$ & $2.52\!\times\!10^{-3}$ & $108.4$ & $0.216$ & $256$ & $0.782$ & $0.717$ & $1.367$ & $1.476$ & $35.20$ \\
\midrule
ParetoFlow & $0.00375$ & $7.65\!\times\!10^{-2}$ & $92.0$ & $0.975$ & $140$ & $0.340$ & $0.368$ & $0.707$ & $0.855$ & $18.01$ \\
ParetoFlow & $0.015$ & $6.92\!\times\!10^{-2}$ & $93.1$ & $0.973$ & $148$ & $0.380$ & $0.337$ & $0.771$ & $0.955$ & $19.70$ \\
ParetoFlow & $0.03$ & $5.81\!\times\!10^{-2}$ & $87.9$ & $0.972$ & $154$ & $0.258$ & $0.247$ & $0.575$ & $0.719$ & $14.02$ \\
ParetoFlow & $0.06$ & $6.15\!\times\!10^{-2}$ & $90.4$ & $0.969$ & $155$ & $0.335$ & $0.388$ & $0.652$ & $0.779$ & $16.41$ \\
ParetoFlow & $0.125$ & $8.38\!\times\!10^{-2}$ & $84.6$ & $0.970$ & $133$ & $0.272$ & $0.422$ & $0.479$ & $0.551$ & $9.76$ \\
\midrule
SLOP & \textemdash & $1.77\!\times\!10^{-2}$ & $81.4$ & $0.915$ & $75$ & $0.068$ & $0.069$ & $0.245$ & $0.326$ & $6.09$ \\
\midrule
\multicolumn{11}{@{}l}{\nameshort: \textit{values}} \\*
\rowcolor{AppMartiniVal}
$g(\mathrm{corr})$ & $0.00375$ & $5.56\!\times\!10^{-4}$ & $93.0$ & $0.234$ & $256$ & $0.337$ & $0.286$ & $0.726$ & $0.793$ & $18.69$ \\
\rowcolor{AppMartiniVal}
$g(\mathrm{corr})$ & $0.015$ & $5.56\!\times\!10^{-4}$ & $91.0$ & $0.239$ & $256$ & $0.281$ & $0.239$ & $0.642$ & $0.704$ & $16.52$ \\
\rowcolor{AppMartiniVal}
$g(\mathrm{corr})$ & $0.03$ & $4.78\!\times\!10^{-4}$ & $89.8$ & $0.242$ & $256$ & $0.250$ & $0.212$ & $0.593$ & $0.649$ & $15.26$ \\
\rowcolor{AppMartiniVal}
$g(\mathrm{corr})$ & $0.06$ & $6.80\!\times\!10^{-4}$ & $88.8$ & $0.243$ & $256$ & $0.226$ & $0.185$ & $0.555$ & $0.610$ & $14.28$ \\
\rowcolor{AppMartiniVal}
$g(\mathrm{corr})$ & $0.125$ & $9.72\!\times\!10^{-4}$ & $88.0$ & $0.246$ & $256$ & $0.200$ & $0.167$ & $0.517$ & $0.557$ & $13.29$ \\
\midrule
\rowcolor{AppMartiniVal}
$g(\mathrm{linear})$ & $0.00375$ & $5.02\!\times\!10^{-4}$ & $92.6$ & $0.235$ & $256$ & $0.323$ & $0.275$ & $0.707$ & $0.771$ & $18.21$ \\
\rowcolor{AppMartiniVal}
$g(\mathrm{linear})$ & $0.015$ & $3.82\!\times\!10^{-4}$ & $89.9$ & $0.239$ & $256$ & $0.249$ & $0.214$ & $0.596$ & $0.644$ & $15.33$ \\
\rowcolor{AppMartiniVal}
$g(\mathrm{linear})$ & $0.03$ & $4.66\!\times\!10^{-4}$ & $87.8$ & $0.245$ & $256$ & $0.196$ & $0.162$ & $0.508$ & $0.558$ & $13.09$ \\
\rowcolor{AppMartiniVal}
$g(\mathrm{linear})$ & $0.06$ & $9.68\!\times\!10^{-4}$ & $86.3$ & $0.250$ & $256$ & $0.160$ & $0.133$ & $0.447$ & $0.491$ & $11.50$ \\
\rowcolor{AppMartiniVal}
$g(\mathrm{linear})$ & $0.125$ & $1.69\!\times\!10^{-3}$ & $84.8$ & $0.256$ & $256$ & $0.124$ & $0.109$ & $0.385$ & $0.417$ & $9.90$ \\
\midrule
\rowcolor{AppMartiniVal}
$g(\mathrm{mlp})$ & $0.00375$ & $6.02\!\times\!10^{-4}$ & $92.9$ & $0.234$ & $256$ & $0.333$ & $0.282$ & $0.721$ & $0.784$ & $18.56$ \\
\rowcolor{AppMartiniVal}
$g(\mathrm{mlp})$ & $0.015$ & $6.28\!\times\!10^{-4}$ & $91.4$ & $0.239$ & $256$ & $0.292$ & $0.245$ & $0.661$ & $0.718$ & $17.01$ \\
\rowcolor{AppMartiniVal}
$g(\mathrm{mlp})$ & $0.03$ & $4.66\!\times\!10^{-4}$ & $90.2$ & $0.241$ & $256$ & $0.265$ & $0.209$ & $0.615$ & $0.681$ & $15.84$ \\
\rowcolor{AppMartiniVal}
$g(\mathrm{mlp})$ & $0.06$ & $7.72\!\times\!10^{-4}$ & $88.7$ & $0.245$ & $256$ & $0.225$ & $0.176$ & $0.552$ & $0.610$ & $14.21$ \\
\rowcolor{AppMartiniVal}
$g(\mathrm{mlp})$ & $0.125$ & $1.07\!\times\!10^{-3}$ & $88.1$ & $0.249$ & $256$ & $0.213$ & $0.171$ & $0.529$ & $0.590$ & $13.60$ \\
\midrule
\multicolumn{11}{@{}l}{\nameshort: \textit{preferences}} \\*
\rowcolor{AppMartiniPref}
$g(\mathrm{corr})$ & $0.00375$ & $5.14\!\times\!10^{-4}$ & $93.6$ & $0.234$ & $256$ & $0.356$ & $0.302$ & $0.753$ & $0.820$ & $19.37$ \\
\rowcolor{AppMartiniPref}
$g(\mathrm{corr})$ & $0.015$ & $5.58\!\times\!10^{-4}$ & $93.9$ & $0.235$ & $256$ & $0.362$ & $0.311$ & $0.763$ & $0.831$ & $19.63$ \\
\rowcolor{AppMartiniPref}
$g(\mathrm{corr})$ & $0.03$ & $5.88\!\times\!10^{-4}$ & $94.9$ & $0.235$ & $256$ & $0.390$ & $0.335$ & $0.806$ & $0.877$ & $20.74$ \\
\rowcolor{AppMartiniPref}
$g(\mathrm{corr})$ & $0.06$ & $8.28\!\times\!10^{-4}$ & $95.9$ & $0.233$ & $256$ & $0.420$ & $0.370$ & $0.843$ & $0.904$ & $21.71$ \\
\rowcolor{AppMartiniPref}
$g(\mathrm{corr})$ & $0.125$ & $1.00\!\times\!10^{-3}$ & $97.6$ & $0.228$ & $256$ & $0.469$ & $0.406$ & $0.920$ & $0.994$ & $23.69$ \\
\midrule
\rowcolor{AppMartiniPref}
$g(\mathrm{linear})$ & $0.00375$ & $5.14\!\times\!10^{-4}$ & $93.2$ & $0.234$ & $256$ & $0.343$ & $0.289$ & $0.736$ & $0.804$ & $18.94$ \\
\rowcolor{AppMartiniPref}
$g(\mathrm{linear})$ & $0.015$ & $6.24\!\times\!10^{-4}$ & $92.0$ & $0.236$ & $256$ & $0.306$ & $0.266$ & $0.682$ & $0.737$ & $17.55$ \\
\rowcolor{AppMartiniPref}
$g(\mathrm{linear})$ & $0.03$ & $5.02\!\times\!10^{-4}$ & $91.5$ & $0.238$ & $256$ & $0.296$ & $0.247$ & $0.666$ & $0.727$ & $17.15$ \\
\rowcolor{AppMartiniPref}
$g(\mathrm{linear})$ & $0.06$ & $7.68\!\times\!10^{-4}$ & $90.5$ & $0.238$ & $256$ & $0.269$ & $0.224$ & $0.625$ & $0.681$ & $16.09$ \\
\rowcolor{AppMartiniPref}
$g(\mathrm{linear})$ & $0.125$ & $9.92\!\times\!10^{-4}$ & $90.1$ & $0.238$ & $256$ & $0.258$ & $0.222$ & $0.609$ & $0.660$ & $15.66$ \\
\midrule
\rowcolor{AppMartiniPref}
$g(\mathrm{mlp})$ & $0.00375$ & $4.56\!\times\!10^{-4}$ & $93.4$ & $0.234$ & $256$ & $0.353$ & $0.295$ & $0.749$ & $0.826$ & $19.28$ \\
\rowcolor{AppMartiniPref}
$g(\mathrm{mlp})$ & $0.015$ & $5.88\!\times\!10^{-4}$ & $92.2$ & $0.237$ & $256$ & $0.313$ & $0.267$ & $0.692$ & $0.752$ & $17.80$ \\
\rowcolor{AppMartiniPref}
$g(\mathrm{mlp})$ & $0.03$ & $3.66\!\times\!10^{-4}$ & $91.8$ & $0.239$ & $256$ & $0.306$ & $0.250$ & $0.681$ & $0.749$ & $17.54$ \\
\rowcolor{AppMartiniPref}
$g(\mathrm{mlp})$ & $0.06$ & $7.14\!\times\!10^{-4}$ & $90.7$ & $0.244$ & $256$ & $0.276$ & $0.229$ & $0.636$ & $0.696$ & $16.36$ \\
\rowcolor{AppMartiniPref}
$g(\mathrm{mlp})$ & $0.125$ & $1.11\!\times\!10^{-3}$ & $91.1$ & $0.245$ & $256$ & $0.284$ & $0.240$ & $0.649$ & $0.703$ & $16.71$ \\
\end{longtable}
\endgroup

\subsection{ImageNet}
\looseness=-1
\Cref{tab:in-classes} lists the class--prompt pairs and reference
ESS values. Full guidance sweeps and per-pair \nameshort\
results appear in~\cref{tab:in-strength-sweep,tab:in-per-class}.

\begingroup
\footnotesize
\setlength{\tabcolsep}{3pt}
\renewcommand{\arraystretch}{1.15}
\setlength{\LTleft}{\fill}
\setlength{\LTright}{\fill}
\setlength{\LTcapwidth}{\linewidth}

\definecolor{AppMartiniVal}{RGB}{219,234,249}
\definecolor{AppMartiniPref}{RGB}{176,206,240}
\definecolor{AppOracle}{RGB}{253,229,203}

\begin{longtable}{@{}rl>{\raggedright\arraybackslash}p{0.48\linewidth}rr@{}}
\caption{\small \textbf{ImageNet class--prompt pairs.} ESS is computed from each regime\textquotesingle s independent $8192$-sample reference pool; displayed values are rounded. $\dagger$ marks the seven pairs with reference ESS above $1000$ in both regimes.}
\phantomsection
\label{tab:in-classes}\\
\toprule
ID & Class & Steering prompt & ESS (Val.) & ESS (Pref.) \\
\midrule
\endfirsthead
\multicolumn{5}{@{}l}{\small\textit{\tablename~\thetable\ (continued)}}\\
\toprule
ID & Class & Steering prompt & ESS (Val.) & ESS (Pref.) \\
\midrule
\endhead
\midrule
\multicolumn{5}{r@{}}{\footnotesize Continued on next page}\\
\endfoot
\bottomrule
\endlastfoot
555 & fire engine$^{\dagger}$ & a fire engine on a busy city street & 4043 & 4081 \\
294 & brown bear$^{\dagger}$ & a brown bear wading in a mountain river & 3903 & 3826 \\
84 & peacock$^{\dagger}$ & a peacock displaying in a lush garden & 3728 & 3757 \\
145 & king penguin$^{\dagger}$ & a king penguin standing on the ice & 2451 & 2418 \\
347 & bison$^{\dagger}$ & a bison grazing on the open prairie & 2199 & 2291 \\
386 & African elephant$^{\dagger}$ & an African elephant at a watering hole & 2031 & 2001 \\
779 & school bus$^{\dagger}$ & a school bus on a tree-lined suburban street & 1250 & 1304 \\
483 & castle & a castle on a hilltop at dawn & 993 & 828 \\
250 & Siberian husky & a Siberian husky in deep snow & 975 & 975 \\
130 & flamingo & a flamingo wading in shallow water at sunset & 935 & 908 \\
717 & pickup truck & a pickup truck on a dirt road in the countryside & 596 & 640 \\
144 & pelican & a pelican perched on a wooden pier by the sea & 500 & 349 \\
340 & zebra & a zebra grazing on the open savanna & 351 & 374 \\
9 & ostrich & an ostrich running across a dusty plain & 111 & 85 \\
291 & lion & a lion resting on the savanna at sunset & 90 & 44 \\
817 & sports car & a sports car parked on a city street at night & 48 & 34 \\
281 & tabby cat & a tabby cat sitting in the snow & 11 & 13 \\
354 & Arabian camel & an Arabian camel crossing desert dunes at sunset & 3 & 2 \\
207 & golden retriever & a golden retriever running on a sandy beach & 1 & 1 \\
\end{longtable}
\endgroup

\begingroup
\footnotesize
\setlength{\tabcolsep}{2pt}
\renewcommand{\arraystretch}{1.0}
\setlength{\LTleft}{\fill}
\setlength{\LTright}{\fill}
\setlength{\LTcapwidth}{\linewidth}

\definecolor{AppMartiniVal}{RGB}{219,234,249}
\definecolor{AppMartiniPref}{RGB}{176,206,240}
\definecolor{AppOracle}{RGB}{253,229,203}

\begin{longtable}{@{}l*{10}{r}@{}}
\caption{\small \textbf{ImageNet steering-strength sweep.} Means over the seven marked pairs in~\cref{tab:in-classes}. Setting gives guidance strength, except for SLOP, where it gives candidate count $N$. Composition KL is shown once per fitted composition. Lower is better for all metrics; SLOP is value-calibrated.}
\phantomsection
\label{tab:in-strength-sweep}\\
\toprule
Method & Setting & $\mathrm{KL}_g$ & Energy & Sl.\,$W_1$ & $\mathrm{KL}_{\mathcal N}$ & $\mathrm{KL}_{\mathrm{KDE}}$ & $W_1^z$ & $W_2^z$ & $|\mathrm{gap}|$ & $Q$ \\
\midrule
\endfirsthead
\multicolumn{11}{@{}l}{\small\textit{\tablename~\thetable\ (continued)}}\\
\toprule
Method & Setting & $\mathrm{KL}_g$ & Energy & Sl.\,$W_1$ & $\mathrm{KL}_{\mathcal N}$ & $\mathrm{KL}_{\mathrm{KDE}}$ & $W_1^z$ & $W_2^z$ & $|\mathrm{gap}|$ & $Q$ \\
\midrule
\endhead
\midrule
\multicolumn{11}{r@{}}{\footnotesize Continued on next page}\\
\endfoot
\bottomrule
\endlastfoot
Prior $p_{1,\phi}$ & \textemdash & \textemdash & $0.0683$ & $0.0080$ & $0.752$ & $0.658$ & $1.253$ & $1.380$ & $7.51$ & $1.000$ \\
\midrule
\rowcolor{AppOracle}
BoN ($N=8$) & \textemdash & \textemdash & $0.0654$ & $0.0078$ & $0.501$ & $0.671$ & $0.546$ & $0.658$ & $3.17$ & $0.193$ \\
\rowcolor{AppOracle}
BoN ($N=16$) & \textemdash & \textemdash & $0.0795$ & $0.0084$ & $2.918$ & $3.121$ & $0.951$ & $1.120$ & $5.53$ & $0.142$ \\
\rowcolor{AppOracle}
BoN ($N=32$) & \textemdash & \textemdash & $0.0844$ & $0.0085$ & $9.964$ & $7.721$ & $1.271$ & $1.473$ & $7.44$ & $0.106$ \\
\midrule
Uniform & $2$ & $0.436$ & $0.0832$ & $0.0086$ & $0.044$ & $0.089$ & $0.311$ & $0.374$ & $1.29$ & $0.418$ \\
Uniform & $3$ &  & $0.0688$ & $0.0078$ & $0.069$ & $0.091$ & $0.291$ & $0.332$ & $1.47$ & $0.291$ \\
Uniform & $5$ &  & $0.0648$ & $0.0076$ & $0.696$ & $0.646$ & $0.544$ & $0.627$ & $3.19$ & $0.222$ \\
Uniform & $8$ &  & $0.0813$ & $0.0081$ & $1.415$ & $1.123$ & $0.631$ & $0.725$ & $3.55$ & $0.207$ \\
\midrule
ReNO & $2$ & $0.553$ & $0.0749$ & $0.0082$ & $0.051$ & $0.090$ & $0.309$ & $0.379$ & $1.36$ & $0.400$ \\
ReNO & $3$ &  & $0.0578$ & $0.0074$ & $0.039$ & $0.080$ & $0.248$ & $0.289$ & $0.90$ & $0.314$ \\
ReNO & $5$ &  & $0.0527$ & $0.0070$ & $0.238$ & $0.196$ & $0.411$ & $0.465$ & $2.34$ & $0.235$ \\
ReNO & $8$ &  & $0.0812$ & $0.0081$ & $1.419$ & $0.893$ & $0.552$ & $0.634$ & $3.17$ & $0.218$ \\
\midrule
ParetoFlow & $2\ (t=0)$ & \textemdash & $0.1696$ & $0.0116$ & $0.490$ & $0.528$ & $0.932$ & $1.086$ & $5.19$ & $0.944$ \\
ParetoFlow & $3\ (t=0)$ &  & $0.1422$ & $0.0109$ & $0.404$ & $0.438$ & $0.917$ & $1.079$ & $5.08$ & $0.801$ \\
ParetoFlow & $5\ (t=0)$ &  & $0.1405$ & $0.0108$ & $0.550$ & $0.498$ & $0.885$ & $1.012$ & $4.96$ & $0.811$ \\
ParetoFlow & $8\ (t=0)$ &  & $0.1282$ & $0.0101$ & $0.145$ & $0.225$ & $0.465$ & $0.541$ & $1.92$ & $0.447$ \\
\midrule
SLOP & $N=8$ & $0.520$ & $0.1020$ & $0.0091$ & $0.077$ & $0.091$ & $0.328$ & $0.388$ & $1.38$ & $0.359$ \\
SLOP & $N=16$ &  & $0.1200$ & $0.0098$ & $0.168$ & $0.170$ & $0.365$ & $0.446$ & $1.98$ & $0.364$ \\
SLOP & $N=32$ &  & $0.1147$ & $0.0095$ & $0.420$ & $0.327$ & $0.551$ & $0.643$ & $3.08$ & $0.273$ \\
\midrule
\multicolumn{11}{@{}l}{\textit{Values ($n=32$)}} \\*
BSS & $2$ & $0.514$ & $0.0674$ & $0.0078$ & $0.131$ & $0.177$ & $0.539$ & $0.635$ & $2.79$ & $0.532$ \\
BSS & $3$ &  & $0.0535$ & $0.0070$ & $0.113$ & $0.142$ & $0.455$ & $0.548$ & $2.23$ & $0.468$ \\
BSS & $5$ &  & $0.0529$ & $0.0069$ & $0.162$ & $0.144$ & $0.385$ & $0.432$ & $1.94$ & $0.340$ \\
BSS & $8$ &  & $0.0960$ & $0.0088$ & $0.157$ & $0.161$ & $0.399$ & $0.456$ & $2.03$ & $0.330$ \\
\midrule
\rowcolor{AppMartiniVal}
$g(\mathrm{corr})$ & $2$ & $0.425$ & $0.0673$ & $0.0078$ & $0.059$ & $0.097$ & $0.348$ & $0.431$ & $1.83$ & $0.434$ \\
\rowcolor{AppMartiniVal}
$g(\mathrm{corr})$ & $3$ &  & $0.0618$ & $0.0075$ & $0.062$ & $0.084$ & $0.300$ & $0.339$ & $1.61$ & $0.314$ \\
\rowcolor{AppMartiniVal}
$g(\mathrm{corr})$ & $5$ &  & $0.0596$ & $0.0074$ & $0.774$ & $0.625$ & $0.531$ & $0.605$ & $2.98$ & $0.247$ \\
\rowcolor{AppMartiniVal}
$g(\mathrm{corr})$ & $8$ &  & $0.0778$ & $0.0080$ & $1.317$ & $0.991$ & $0.698$ & $0.800$ & $4.02$ & $0.241$ \\
\midrule
\rowcolor{AppMartiniVal}
$g(\mathrm{linear})$ & $2$ & $0.466$ & $0.0559$ & $0.0073$ & $0.104$ & $0.124$ & $0.471$ & $0.556$ & $2.69$ & $0.495$ \\
\rowcolor{AppMartiniVal}
$g(\mathrm{linear})$ & $3$ &  & $0.0409$ & $0.0064$ & $0.039$ & $0.069$ & $0.288$ & $0.352$ & $1.40$ & $0.388$ \\
\rowcolor{AppMartiniVal}
$g(\mathrm{linear})$ & $5$ &  & $0.0339$ & $0.0059$ & $0.061$ & $0.069$ & $0.319$ & $0.377$ & $1.64$ & $0.360$ \\
\rowcolor{AppMartiniVal}
$g(\mathrm{linear})$ & $8$ &  & $0.1010$ & $0.0091$ & $0.246$ & $0.312$ & $0.400$ & $0.475$ & $1.88$ & $0.431$ \\
\midrule
\rowcolor{AppMartiniVal}
$g(\mathrm{mlp})$ & $2$ & $0.488$ & $0.0525$ & $0.0071$ & $0.186$ & $0.204$ & $0.653$ & $0.744$ & $3.80$ & $0.594$ \\
\rowcolor{AppMartiniVal}
$g(\mathrm{mlp})$ & $3$ &  & $0.0355$ & $0.0060$ & $0.068$ & $0.091$ & $0.359$ & $0.433$ & $1.97$ & $0.433$ \\
\rowcolor{AppMartiniVal}
$g(\mathrm{mlp})$ & $5$ &  & $0.0319$ & $0.0058$ & $0.165$ & $0.168$ & $0.410$ & $0.492$ & $2.18$ & $0.436$ \\
\rowcolor{AppMartiniVal}
$g(\mathrm{mlp})$ & $8$ &  & $0.0956$ & $0.0090$ & $0.277$ & $0.316$ & $0.514$ & $0.600$ & $2.80$ & $0.434$ \\
\midrule
\multicolumn{11}{@{}l}{\textit{Preferences ($M=128$)}} \\*
BSS & $2$ & $0.440$ & $0.0628$ & $0.0077$ & $0.066$ & $0.102$ & $0.368$ & $0.463$ & $1.79$ & $0.429$ \\
BSS & $3$ &  & $0.0552$ & $0.0073$ & $0.035$ & $0.080$ & $0.258$ & $0.306$ & $0.90$ & $0.359$ \\
BSS & $5$ &  & $0.0425$ & $0.0065$ & $0.133$ & $0.134$ & $0.388$ & $0.433$ & $2.00$ & $0.312$ \\
BSS & $8$ &  & $0.0861$ & $0.0084$ & $0.264$ & $0.254$ & $0.427$ & $0.494$ & $2.08$ & $0.323$ \\
\midrule
\rowcolor{AppMartiniPref}
$g(\mathrm{corr})$ & $2$ & $0.417$ & $0.0752$ & $0.0082$ & $0.045$ & $0.089$ & $0.314$ & $0.366$ & $1.10$ & $0.385$ \\
\rowcolor{AppMartiniPref}
$g(\mathrm{corr})$ & $3$ &  & $0.0716$ & $0.0080$ & $0.048$ & $0.079$ & $0.314$ & $0.348$ & $1.57$ & $0.284$ \\
\rowcolor{AppMartiniPref}
$g(\mathrm{corr})$ & $5$ &  & $0.0638$ & $0.0075$ & $1.013$ & $0.866$ & $0.578$ & $0.648$ & $3.50$ & $0.199$ \\
\rowcolor{AppMartiniPref}
$g(\mathrm{corr})$ & $8$ &  & $0.0786$ & $0.0080$ & $3.831$ & $1.905$ & $0.670$ & $0.777$ & $4.02$ & $0.181$ \\
\midrule
\rowcolor{AppMartiniPref}
$g(\mathrm{linear})$ & $2$ & $0.423$ & $0.0580$ & $0.0074$ & $0.095$ & $0.132$ & $0.403$ & $0.490$ & $2.11$ & $0.441$ \\
\rowcolor{AppMartiniPref}
$g(\mathrm{linear})$ & $3$ &  & $0.0484$ & $0.0068$ & $0.053$ & $0.106$ & $0.345$ & $0.398$ & $1.54$ & $0.384$ \\
\rowcolor{AppMartiniPref}
$g(\mathrm{linear})$ & $5$ &  & $0.0380$ & $0.0061$ & $0.097$ & $0.123$ & $0.365$ & $0.423$ & $1.93$ & $0.287$ \\
\rowcolor{AppMartiniPref}
$g(\mathrm{linear})$ & $8$ &  & $0.0780$ & $0.0081$ & $0.400$ & $0.327$ & $0.389$ & $0.454$ & $2.13$ & $0.303$ \\
\midrule
\rowcolor{AppMartiniPref}
$g(\mathrm{mlp})$ & $2$ & $0.426$ & $0.0620$ & $0.0076$ & $0.081$ & $0.107$ & $0.408$ & $0.486$ & $2.11$ & $0.432$ \\
\rowcolor{AppMartiniPref}
$g(\mathrm{mlp})$ & $3$ &  & $0.0516$ & $0.0070$ & $0.070$ & $0.096$ & $0.313$ & $0.365$ & $1.51$ & $0.350$ \\
\rowcolor{AppMartiniPref}
$g(\mathrm{mlp})$ & $5$ &  & $0.0330$ & $0.0059$ & $0.525$ & $0.458$ & $0.384$ & $0.458$ & $1.86$ & $0.310$ \\
\rowcolor{AppMartiniPref}
$g(\mathrm{mlp})$ & $8$ &  & $0.0783$ & $0.0081$ & $0.263$ & $0.248$ & $0.503$ & $0.576$ & $2.56$ & $0.360$ \\
\end{longtable}
\endgroup

\begingroup
\footnotesize
\setlength{\tabcolsep}{1.6pt}
\renewcommand{\arraystretch}{1.05}
\setlength{\LTleft}{\fill}
\setlength{\LTright}{\fill}
\setlength{\LTcapwidth}{\linewidth}

\definecolor{AppMartiniVal}{RGB}{219,234,249}
\definecolor{AppMartiniPref}{RGB}{176,206,240}
\definecolor{AppOracle}{RGB}{253,229,203}

\begin{longtable}{@{}l*{11}{r}@{}}
\caption{\small \textbf{\nameshort\ by ImageNet class--prompt pair.} Linear compositions use each pair\textquotesingle s lowest-energy evaluated strength. ESS is rounded and refers to the regime-specific $8192$-sample pool. $\dagger$ marks pairs in the main comparison. Dashes indicate unreported estimates.}
\phantomsection
\label{tab:in-per-class}\\
\toprule
Class & ESS & Str. & $\mathrm{KL}_g$ & Energy & Sl.\,$W_1$ & $\mathrm{KL}_{\mathcal N}$ & $\mathrm{KL}_{\mathrm{KDE}}$ & $W_1^z$ & $W_2^z$ & $|\mathrm{gap}|$ & $Q$ \\
\midrule
\endfirsthead
\multicolumn{12}{@{}l}{\small\textit{\tablename~\thetable\ (continued)}}\\
\toprule
Class & ESS & Str. & $\mathrm{KL}_g$ & Energy & Sl.\,$W_1$ & $\mathrm{KL}_{\mathcal N}$ & $\mathrm{KL}_{\mathrm{KDE}}$ & $W_1^z$ & $W_2^z$ & $|\mathrm{gap}|$ & $Q$ \\
\midrule
\endhead
\midrule
\multicolumn{12}{r@{}}{\footnotesize Continued on next page}\\
\endfoot
\bottomrule
\endlastfoot
\multicolumn{12}{@{}l}{\textit{Values ($n=32$)}} \\*
fire engine$^{\dagger}$ & $4043$ & $5$ & $0.360$ & $0.0351$ & $0.0063$ & $0.028$ & $0.029$ & $0.187$ & $0.211$ & $1.10$ & $0.277$ \\
brown bear$^{\dagger}$ & $3903$ & $5$ & $0.257$ & $0.0289$ & $0.0056$ & $0.035$ & $0.054$ & $0.140$ & $0.234$ & $0.94$ & $0.241$ \\
peacock$^{\dagger}$ & $3728$ & $5$ & $0.416$ & $0.0229$ & $0.0052$ & $0.033$ & $0.068$ & $0.194$ & $0.229$ & $0.51$ & $0.449$ \\
king penguin$^{\dagger}$ & $2451$ & $3$ & $0.652$ & $0.0352$ & $0.0055$ & $0.011$ & $0.034$ & $0.200$ & $0.228$ & $0.81$ & $0.272$ \\
bison$^{\dagger}$ & $2199$ & $3$ & $0.462$ & $0.0238$ & $0.0052$ & $0.031$ & $0.042$ & $0.296$ & $0.363$ & $1.71$ & $0.546$ \\
African elephant$^{\dagger}$ & $2031$ & $5$ & $0.399$ & $0.0635$ & $0.0075$ & $0.012$ & $0.025$ & $0.172$ & $0.240$ & $0.95$ & $0.179$ \\
school bus$^{\dagger}$ & $1250$ & $2$ & $0.718$ & $0.0152$ & $0.0051$ & $0.086$ & $0.091$ & $0.593$ & $0.670$ & $2.88$ & $0.547$ \\
castle & $993$ & $5$ & $0.720$ & $0.0721$ & $0.0095$ & $0.013$ & $0.031$ & $0.282$ & $0.312$ & $0.78$ & $0.175$ \\
Siberian husky & $975$ & $5$ & $0.525$ & $0.0571$ & $0.0081$ & $2.349$ & $1.117$ & $2.203$ & $2.545$ & $12.48$ & $0.715$ \\
flamingo & $935$ & $3$ & $0.443$ & $0.0429$ & $0.0058$ & $0.021$ & $0.043$ & $0.332$ & $0.372$ & $1.00$ & $0.294$ \\
pickup truck & $596$ & $3$ & $0.959$ & $0.0162$ & $0.0063$ & $0.024$ & $0.077$ & $0.470$ & $0.567$ & $1.60$ & $0.072$ \\
pelican & $500$ & $3$ & $1.010$ & $0.0127$ & $0.0046$ & $0.466$ & $0.364$ & $1.682$ & $1.917$ & $7.27$ & $0.415$ \\
zebra & $351$ & $5$ & $0.916$ & $0.0308$ & $0.0058$ & $0.234$ & $0.174$ & $1.030$ & $1.134$ & $3.95$ & $0.366$ \\
ostrich & $111$ & $2$ & $1.278$ & $0.0302$ & $0.0060$ & $0.213$ & $0.159$ & $1.590$ & $1.803$ & $5.01$ & $0.208$ \\
lion & $90$ & $8$ & $1.828$ & $0.0583$ & $0.0077$ & $0.333$ & $0.285$ & $1.969$ & $2.081$ & $4.46$ & $0.003$ \\
sports car & $48$ & $5$ & $2.317$ & $0.0417$ & $0.0082$ & $0.528$ & $0.436$ & $2.982$ & $3.396$ & $7.54$ & $0.075$ \\
tabby cat & $11$ & $2$ & $1.363$ & $0.0773$ & $0.0130$ & $8.625$ & $2.676$ & $8.810$ & $9.189$ & $22.16$ & $0.526$ \\
Arabian camel & $3$ & $2$ & $3.820$ & $0.0532$ & $0.0140$ & $8.717$ & $3.435$ & $8.824$ & $9.073$ & $12.13$ & $0.717$ \\
golden retriever & $1$ & $5$ & $4.779$ & $0.0924$ & $0.0271$ & $1.815$ & $1.470$ & $12.072$ & $13.751$ & $13.90$ & $0.000$ \\
\midrule
Mean$^{\dagger}$ (7 pairs) &  &  & $0.466$ & $0.0321$ & $0.0058$ & $0.034$ & $0.049$ & $0.255$ & $0.310$ & $1.27$ & $0.359$ \\
\midrule
\multicolumn{12}{@{}l}{\textit{Preferences ($M=128$)}} \\*
fire engine$^{\dagger}$ & $4081$ & $5$ & $0.317$ & $0.0508$ & $0.0072$ & $0.390$ & $0.330$ & $0.615$ & $0.669$ & $3.90$ & $0.195$ \\
brown bear$^{\dagger}$ & $3826$ & $5$ & $0.266$ & $0.0409$ & $0.0064$ & $0.053$ & $0.130$ & $0.282$ & $0.305$ & $1.96$ & $0.209$ \\
peacock$^{\dagger}$ & $3757$ & $5$ & $0.286$ & $0.0208$ & $0.0050$ & $0.092$ & $0.168$ & $0.438$ & $0.500$ & $2.18$ & $0.282$ \\
king penguin$^{\dagger}$ & $2418$ & $5$ & $0.639$ & $0.0352$ & $0.0053$ & $0.068$ & $0.067$ & $0.260$ & $0.300$ & $0.68$ & $0.217$ \\
bison$^{\dagger}$ & $2291$ & $5$ & $0.423$ & $0.0287$ & $0.0056$ & $0.011$ & $0.030$ & $0.184$ & $0.220$ & $0.98$ & $0.492$ \\
African elephant$^{\dagger}$ & $2001$ & $3$ & $0.433$ & $0.0717$ & $0.0083$ & $0.001$ & $0.034$ & $0.131$ & $0.171$ & $0.33$ & $0.224$ \\
school bus$^{\dagger}$ & $1304$ & $5$ & $0.596$ & $0.0123$ & $0.0048$ & $0.034$ & $0.086$ & $0.455$ & $0.602$ & $1.96$ & $0.459$ \\
Siberian husky & $975$ & $5$ & $0.479$ & $0.0515$ & $0.0071$ & $1.233$ & $0.809$ & $1.835$ & $2.147$ & $10.39$ & $0.498$ \\
flamingo & $908$ & $3$ & $0.418$ & $0.0200$ & $0.0045$ & $0.021$ & $0.047$ & $0.370$ & $0.385$ & $1.05$ & $0.286$ \\
castle & $828$ & $3$ & $0.630$ & $0.0524$ & $0.0085$ & $0.008$ & $0.031$ & $0.278$ & $0.335$ & $0.56$ & $0.167$ \\
pickup truck & $640$ & $5$ & $0.797$ & $0.0181$ & $0.0064$ & $0.172$ & $0.173$ & $0.800$ & $0.885$ & $3.30$ & $0.025$ \\
zebra & $374$ & $3$ & $0.885$ & $0.0326$ & $0.0055$ & $0.500$ & $0.318$ & $1.470$ & $1.622$ & $5.51$ & $0.641$ \\
pelican & $349$ & $3$ & $0.720$ & $0.0156$ & $0.0048$ & $0.217$ & $0.162$ & $1.226$ & $1.352$ & $5.35$ & $0.194$ \\
ostrich & $85$ & $2$ & $1.174$ & $0.0167$ & $0.0059$ & $0.102$ & $0.074$ & $1.159$ & $1.333$ & $3.60$ & $0.113$ \\
lion & $44$ & $5$ & $2.179$ & $0.0333$ & $0.0068$ & $0.161$ & $0.224$ & $1.602$ & $1.818$ & $3.21$ & $0.043$ \\
sports car & $34$ & $5$ & $2.648$ & $0.0591$ & $0.0095$ & $0.904$ & $0.629$ & $3.627$ & $3.970$ & $8.75$ & $0.090$ \\
tabby cat & $13$ & $3$ & $2.062$ & $0.0566$ & $0.0124$ & $8.150$ & $3.535$ & $8.472$ & $8.796$ & $20.94$ & $0.339$ \\
Arabian camel & $2$ & $2$ & $4.655$ & \textemdash & $0.0155$ & $12.306$ & $10.363$ & $12.002$ & $12.207$ & $16.79$ & $0.502$ \\
golden retriever & $1$ & $3$ & $3.261$ & \textemdash & $0.0280$ & $6.451$ & $2.279$ & $17.331$ & $17.984$ & $22.68$ & $0.064$ \\
\midrule
Mean$^{\dagger}$ (7 pairs) &  &  & $0.423$ & $0.0372$ & $0.0061$ & $0.093$ & $0.121$ & $0.338$ & $0.395$ & $1.71$ & $0.297$ \\
\end{longtable}
\endgroup

\clearpage
\looseness=-1
\Cref{tab:in-ess} assesses sensitivity to the reference ESS
cutoff. For each cutoff, we aggregate each method's metrics
across retained pairs using the mean or median, then rank
the methods. The comparison includes \nameshort, BSS,
uniform weighting, ReNO, ParetoFlow, and SLOP; composition
KL excludes ParetoFlow. SLOP remains a value-calibrated
reference in the preference regime. Steering methods use
their per-pair lowest-energy evaluated strength. The Siberian
husky pair is retained whenever it meets the cutoff.

\begingroup
\footnotesize
\setlength{\tabcolsep}{2.5pt}
\renewcommand{\arraystretch}{1.05}
\setlength{\LTleft}{\fill}
\setlength{\LTright}{\fill}
\setlength{\LTcapwidth}{\linewidth}

\definecolor{AppMartiniVal}{RGB}{219,234,249}
\definecolor{AppMartiniPref}{RGB}{176,206,240}
\definecolor{AppOracle}{RGB}{253,229,203}

\providecommand{\ESSrank}[2]{%
  \ensuremath{\ifnum#1=1 \mathbf{1}\else #1\fi
  /\ifnum#2=1 \mathbf{1}\else #2\fi}%
}
\begin{longtable}{@{}lr*{9}{c}@{}}
\caption{\small \textbf{Sensitivity to reference ESS.} Each entry gives \nameshort\textquotesingle s rank using mean/median metrics across retained pairs; bold $1$ marks first place. Cutoffs apply separately to each regime\textquotesingle s $8192$-
sample reference pool.}
\phantomsection
\label{tab:in-ess}\\
\toprule
ESS cutoff & Pairs & $\mathrm{KL}_g$ & Energy & Sl.\,$W_1$ & $\mathrm{KL}_{\mathcal N}$ & $\mathrm{KL}_{\mathrm{KDE}}$ & $W_1^z$ & $W_2^z$ & $|\mathrm{gap}|$ & $Q$ \\
\midrule
\endfirsthead
\multicolumn{11}{@{}l}{\small\textit{\tablename~\thetable\ (continued)}}\\
\toprule
ESS cutoff & Pairs & $\mathrm{KL}_g$ & Energy & Sl.\,$W_1$ & $\mathrm{KL}_{\mathcal N}$ & $\mathrm{KL}_{\mathrm{KDE}}$ & $W_1^z$ & $W_2^z$ & $|\mathrm{gap}|$ & $Q$ \\
\midrule
\endhead
\midrule
\multicolumn{11}{r@{}}{\footnotesize Continued on next page}\\
\endfoot
\bottomrule
\endlastfoot
\multicolumn{11}{@{}l}{\textit{Values ($n=32$)}} \\*
$\geq 0$ & 19 & \ESSrank{1}{2} & \ESSrank{1}{1} & \ESSrank{1}{1} & \ESSrank{4}{1} & \ESSrank{2}{1} & \ESSrank{3}{1} & \ESSrank{3}{1} & \ESSrank{4}{2} & \ESSrank{5}{4} \\
$\geq 100$ & 14 & \ESSrank{2}{3} & \ESSrank{1}{1} & \ESSrank{1}{1} & \ESSrank{3}{1} & \ESSrank{2}{1} & \ESSrank{4}{1} & \ESSrank{4}{1} & \ESSrank{4}{1} & \ESSrank{4}{4} \\
$\geq 350$ & 13 & \ESSrank{2}{3} & \ESSrank{1}{1} & \ESSrank{1}{1} & \ESSrank{3}{1} & \ESSrank{2}{1} & \ESSrank{3}{1} & \ESSrank{2}{1} & \ESSrank{3}{1} & \ESSrank{4}{4} \\
$\geq 500$ & 11 & \ESSrank{2}{1} & \ESSrank{1}{1} & \ESSrank{1}{1} & \ESSrank{4}{1} & \ESSrank{1}{1} & \ESSrank{2}{1} & \ESSrank{2}{1} & \ESSrank{2}{1} & \ESSrank{5}{3} \\
$\geq 850$ & 10 & \ESSrank{2}{1} & \ESSrank{1}{1} & \ESSrank{1}{1} & \ESSrank{4}{1} & \ESSrank{2}{1} & \ESSrank{3}{1} & \ESSrank{3}{1} & \ESSrank{3}{1} & \ESSrank{5}{3} \\
$\geq 950$ & 9 & \ESSrank{2}{1} & \ESSrank{1}{1} & \ESSrank{1}{1} & \ESSrank{4}{1} & \ESSrank{3}{1} & \ESSrank{3}{1} & \ESSrank{3}{1} & \ESSrank{3}{1} & \ESSrank{5}{3} \\
$\geq 1000$ & 7 & \ESSrank{2}{2} & \ESSrank{1}{1} & \ESSrank{1}{1} & \ESSrank{1}{1} & \ESSrank{1}{1} & \ESSrank{1}{1} & \ESSrank{1}{1} & \ESSrank{1}{1} & \ESSrank{4}{3} \\
\midrule
\multicolumn{11}{@{}l}{\textit{Preferences ($M=128$)}} \\*
$\geq 0$ & 19 & \ESSrank{1}{1} & \ESSrank{1}{1} & \ESSrank{1}{1} & \ESSrank{3}{3} & \ESSrank{3}{4} & \ESSrank{4}{4} & \ESSrank{4}{4} & \ESSrank{4}{4} & \ESSrank{3}{3} \\
$\geq 100$ & 13 & \ESSrank{1}{1} & \ESSrank{1}{2} & \ESSrank{1}{1} & \ESSrank{4}{2} & \ESSrank{3}{4} & \ESSrank{4}{1} & \ESSrank{4}{1} & \ESSrank{5}{1} & \ESSrank{4}{3} \\
$\geq 350$ & 12 & \ESSrank{1}{1} & \ESSrank{1}{1} & \ESSrank{1}{1} & \ESSrank{4}{2} & \ESSrank{3}{2} & \ESSrank{3}{2} & \ESSrank{3}{1} & \ESSrank{4}{1} & \ESSrank{4}{3} \\
$\geq 500$ & 11 & \ESSrank{1}{1} & \ESSrank{1}{1} & \ESSrank{1}{1} & \ESSrank{3}{2} & \ESSrank{3}{2} & \ESSrank{3}{2} & \ESSrank{3}{1} & \ESSrank{3}{2} & \ESSrank{3}{3} \\
$\geq 850$ & 9 & \ESSrank{1}{1} & \ESSrank{1}{1} & \ESSrank{1}{1} & \ESSrank{4}{2} & \ESSrank{4}{2} & \ESSrank{4}{2} & \ESSrank{4}{1} & \ESSrank{4}{3} & \ESSrank{4}{3} \\
$\geq 950$ & 8 & \ESSrank{1}{1} & \ESSrank{1}{1} & \ESSrank{1}{1} & \ESSrank{4}{3} & \ESSrank{4}{4} & \ESSrank{5}{2} & \ESSrank{5}{2} & \ESSrank{5}{3} & \ESSrank{4}{3} \\
$\geq 1000$ & 7 & \ESSrank{1}{2} & \ESSrank{1}{1} & \ESSrank{1}{1} & \ESSrank{1}{2} & \ESSrank{1}{3} & \ESSrank{1}{1} & \ESSrank{1}{1} & \ESSrank{1}{3} & \ESSrank{3}{1} \\
\end{longtable}
\endgroup

\looseness=-1
Across all cutoffs, \nameshort\ ranks first on mean energy
and sliced Wasserstein distance. Reward-marginal ranks
depend on the cutoff and aggregation, with the Siberian
husky pair substantially affecting the means.


\subsection{Budget Analysis}
\label{app:budget}

\looseness=-1
We compare $n\in\{32,256\}$ numerical rewards and
$M\in\{8,128\}$ rankings of $m=8$ candidates. Increasing feedback improves held-out loss, PCC, and $\mathrm{KL}_g$ for all three families~(\cref{tab:in-budget-fit}).
Linear compositions remain selected at every budget, although the MLP's loss approaches that of the linear family at the larger budgets. For the selected compositions, $\mathrm{KL}_g$ decreases from $0.466$ to $0.415$ with values and from $0.475$ to $0.423$ with preferences.

\begin{table}[htbp]
\centering
\caption{\small\looseness=-1
\textbf{ImageNet composition fitting across feedback budgets.}
Loss is MSE on $z(R^*)$ for values and PL NLL per group
for preferences.
Bold marks the lowest mean selection loss.
BSS uses the same validation split to select its surrogate
and family; only its composition KL is reported.}
\label{tab:in-budget-fit}
\footnotesize
\setlength{\tabcolsep}{2.5pt}
\renewcommand{\arraystretch}{1.08}
\begin{tabular}{@{}ll*{10}{r}@{}}
\toprule
& & \multicolumn{3}{c}{Held-out loss $\downarrow$}
& \multicolumn{3}{c}{PCC $\uparrow$}
& \multicolumn{4}{c}{$\mathrm{KL}_g$ $\downarrow$} \\
\cmidrule(lr){3-5}\cmidrule(lr){6-8}\cmidrule(l){9-12}
Regime & Budget
& Corr. & Linear & MLP
& Corr. & Linear & MLP
& Corr. & Linear & MLP & BSS \\
\midrule
\multirow{2}{*}{Value}
& $n=32$
& 1.127 & \textbf{1.007} & 1.303
& 0.540 & 0.556 & 0.499
& 0.425 & 0.466 & 0.488 & 0.514 \\
& $n=256$
& 0.898 & \textbf{0.706} & 0.726
& 0.549 & 0.629 & 0.621
& 0.416 & 0.415 & 0.372 & 0.460 \\
\midrule
\multirow{2}{*}{Pref.}
& $M=8$
& 10.774 & \textbf{10.652} & 11.692
& 0.453 & 0.532 & 0.426
& 0.420 & 0.475 & 0.519 & 0.858 \\
& $M=128$
& 10.370 & \textbf{9.908} & 9.999
& 0.530 & 0.625 & 0.612
& 0.417 & 0.423 & 0.426 & 0.440 \\
\bottomrule
\end{tabular}
\end{table}

\looseness=-1
For each steering method and pair, all generated-sample
metrics use the strength with the lowest observed energy
distance to $q^*$.  Improved fitting does not uniformly improve realized
steering~(\cref{tab:in-budget-steer}). \nameshort\ achieves lower energy than BSS at every budget, but energy does not improve with additional feedback. Reward-marginal fidelity improves with more preferences, whereas it worsens at $n=256$: several pairs overshoot the target mean reward at the selected strengths, despite improved composition KL. This separates improvements in fitting from the effects of guidance and sampling.

\definecolor{BudgetMartiniVal}{RGB}{219,234,249}
\definecolor{BudgetMartiniPref}{RGB}{176,206,240}

\begin{table}[htbp]
\centering
\caption{\small\looseness=-1
\textbf{ImageNet steering across feedback budgets.}
Means over the seven evaluation pairs.
\nameshort\ uses the linear composition at every budget.
Lower is better; bold marks the better of each method's
two budgets. Fit-free baselines are shared.}
\label{tab:in-budget-steer}
\footnotesize
\setlength{\tabcolsep}{2.5pt}
\renewcommand{\arraystretch}{1.08}
\begin{tabular}{@{}ll*{9}{r}@{}}
\toprule
& & & \multicolumn{2}{c}{Embedding space}
& \multicolumn{4}{c}{$R^*$-marginal}
& \multicolumn{2}{c}{Reward} \\
\cmidrule(lr){4-5}
\cmidrule(lr){6-9}
\cmidrule(l){10-11}
Method & Budget & $\mathrm{KL}_g$
& Energy & Sl.\,$W_1$
& $\mathrm{KL}_{\mathcal N}$ & $\mathrm{KL}_{\mathrm{KDE}}$
& $W_1^z$ & $W_2^z$ & $|\mathrm{gap}|$ & $Q$ \\
\midrule
\multicolumn{11}{@{}l}{\textit{Fit-free baselines}} \\
Prior $p_{1,\phi}$ & --
& -- & 0.0683 & 0.0080 & 0.752 & 0.658
& 1.253 & 1.380 & 7.51 & 1.000 \\
ParetoFlow & --
& -- & 0.0885 & 0.0088 & 0.825 & 0.937
& 0.948 & 1.068 & 5.35 & 0.844 \\
ReNO & --
& 0.553 & 0.0503 & 0.0069 & 0.197 & 0.166
& 0.380 & 0.430 & 2.10 & 0.244 \\
Uniform & --
& 0.436 & 0.0603 & 0.0074 & 0.605 & 0.635
& 0.475 & 0.552 & 2.73 & 0.238 \\
\midrule
\multicolumn{11}{@{}l}{\textit{Value feedback}} \\
SLOP & $n=32$
& 0.520 & 0.1200 & 0.0098
& \textbf{0.168} & \textbf{0.170}
& \textbf{0.365} & \textbf{0.446} & \textbf{1.98} & 0.364 \\
SLOP & $n=256$
& \textbf{0.423} & \textbf{0.0867} & \textbf{0.0086}
& 0.376 & 0.265 & 0.532 & 0.623 & 2.99 & \textbf{0.250} \\
BSS & $n=32$
& 0.514 & 0.0498 & 0.0068 & 0.217 & 0.183
& 0.478 & 0.547 & 2.57 & 0.354 \\
BSS & $n=256$
& \textbf{0.460} & \textbf{0.0389} & \textbf{0.0063}
& \textbf{0.093} & \textbf{0.108}
& \textbf{0.291} & \textbf{0.348}
& \textbf{1.35} & \textbf{0.285} \\
\rowcolor{BudgetMartiniVal}
\nameshort & $n=32$
& 0.466 & \textbf{0.0321} & \textbf{0.0058}
& \textbf{0.034} & \textbf{0.049}
& \textbf{0.255} & \textbf{0.310} & \textbf{1.27} & 0.359 \\
\rowcolor{BudgetMartiniVal}
\nameshort & $n=256$
& \textbf{0.415} & 0.0377 & 0.0060 & 0.549 & 0.764
& 0.454 & 0.527 & 2.72 & \textbf{0.279} \\
\midrule
\multicolumn{11}{@{}l}{\textit{Preference feedback}} \\
BSS & $M=8$
& 0.858 & \textbf{0.0412} & \textbf{0.0063}
& 0.323 & 0.262 & 0.570 & 0.637 & 2.82 & 0.585 \\
BSS & $M=128$
& \textbf{0.440} & 0.0423 & 0.0065
& \textbf{0.135} & \textbf{0.133}
& \textbf{0.392} & \textbf{0.442}
& \textbf{2.04} & \textbf{0.316} \\
\rowcolor{BudgetMartiniPref}
\nameshort & $M=8$
& 0.475 & \textbf{0.0337} & \textbf{0.0059}
& 0.544 & 0.629 & 0.606 & 0.679 & 3.49 & 0.386 \\
\rowcolor{BudgetMartiniPref}
\nameshort & $M=128$
& \textbf{0.423} & 0.0372 & 0.0061
& \textbf{0.093} & \textbf{0.121}
& \textbf{0.338} & \textbf{0.395}
& \textbf{1.71} & \textbf{0.297} \\
\bottomrule
\end{tabular}
\end{table}

\clearpage
\section{Protein--ligand cofolding}
\label{app:protein}

\subsection{Surrogates}\label{app:protein-surrogates}
We fit the surrogate composition to experimental binding-affinity
feedback and evaluate its transfer to generated structures.
The five surrogates provide information about structural
quality, binding-site energetics, and predicted affinity.

\textit{SMINA}. SMINA is a widely used docking and scoring tool forked from AutoDock~Vina and optimised for high-throughput scoring and for user-specified scoring functions~\citep{koes2013smina}.

\textit{Frustration}. The FrustratometeR R package~\cite{rausch2021frustratometer} is used to calculate local energetic frustration in protein structures, where high frustration relates to unstable local regions and correlates with active sites.

\textit{MolProbity}. MolProbity ~\citep{chen2010molprobity} assesses the physical quality of molecular structures through all-atom steric contact analysis with explicit hydrogens, together with covalent-geometry and backbone and side-chain torsion-angle criteria.

\textit{Boltz Affinity}. The boltz affinity prediction head introduced in~\cite{passaro2025boltz2} is used here as an affinity surrogate. The Boltz-2 weights are used to featurise the complex, but the pose itself is taken as atomic coordinates from the steering run (e.g. with DeCAF) and then used to constrain the affinity prediction, so that the prediction head uses the steered poses. Default Boltz-2 settings were used for this surrogate.

\textit{Boltz-1x physical potentials.} We include the physical potential terms used by Boltz-1x~\citep{wohlwend2025boltz} to penalize stereochemical violations, clashes, and inconsistent chain geometry. The resampling coefficients are
\[
\begin{array}{lc}
\texttt{symmetric\_chain\_com} & 0.5\\
\texttt{connections} & 1.0\\
\texttt{posebusters} & 0.1\\
\texttt{chiral} & 1.0\\
\texttt{stereo} & 1.0\\
\texttt{planar} & 1.0\\
\texttt{vdw\_overlap} &
\begin{cases}0.01,&t<0.6,\\0,&t\ge0.6.\end{cases}
\end{array}
\]
Here $t$ is the resampling-time variable used by the
implementation. These seven terms include both within-chain
and between-chain contributions.

We normalize the surrogate reward values both for fitting $g$ and during steering. Figure \ref{fig:prot_ceil_bars_hiqbind} displays the correlation of the surrogate rewards with the experimental HiQBind $\mathrm{p}K_D$ as well as the theoretical recovery ceiling when using these surrogate rewards. SMINA is the best single surrogate based on the Pearson correlation. Boltz energy potentials are instrumental in ensuring physical validity of the generated poses; therefore, each baseline includes normalized energy potentials. 

\begin{figure}
    \centering
    \includegraphics[width=0.95\linewidth]{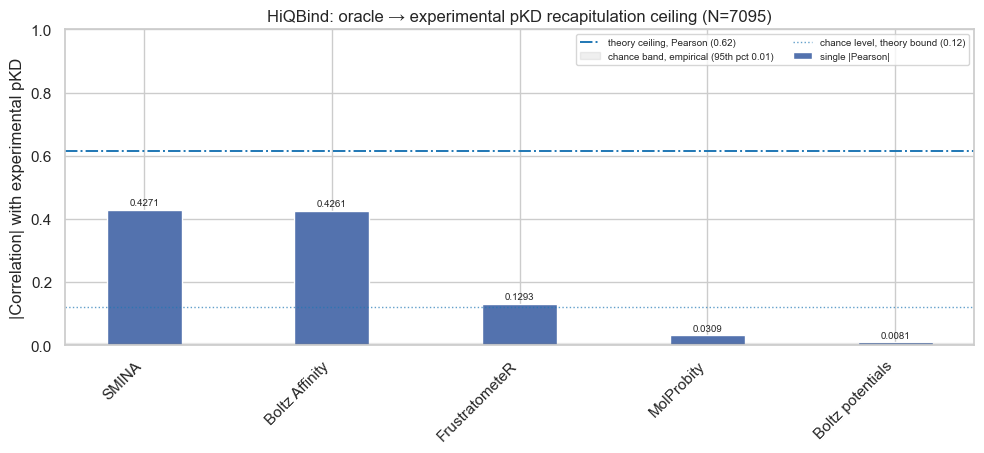}
    \caption{Correlation of HiQBind scores with the experimentally measured binding affinity values}
    \label{fig:prot_ceil_bars_hiqbind}
\end{figure}

The theory ceiling in~\cref{fig:prot_ceil_bars_hiqbind} is $\rho_{\mathrm{info}}=\sqrt{1-\bar v/\Var(R^*)}$, the highest PCC any function of the scores can reach, where $\bar v=\E[\Var(R^*\mid\mathbf R)]$ is the variance of the measured affinity that the HiQBind scores leave unexplained, estimated by leave-one-out local-linear $k$-nearest-neighbour regression of $R^*$ on $\mathbf R$.

\subsection{Feedback and evaluation datasets}
\label{app:protein-datasets}

\textit{HiQBind}. HiQBind is a curated protein--ligand binding dataset constructed by matching experimentally determined binding free energies from BioLiP, Binding~MOAD, and BindingDB to co-crystallized PDB complexes and reprocessing each entry through the open-source HiQBind-WF workflow~\citep{wang2025hiqbind}. The release comprises 18,160 PDB entries with binding-affinity annotations and 32,275 refined non-covalent complexes, annotated with affinity values, release year, resolution, ligand and protein identifiers, and derived ligand properties. We subset this data set to monomers and multimers that have experimental $\mathrm{p}K_D$, leaving us with N=7095 for the feedback set.

\textit{Runs N' Poses.} We evaluate on $186$ complexes
from the post-2023 subset of Runs N' Poses~\citep{skrinjar2026runsnposes},
beyond Boltz-1's training cutoff. These evaluation complexes
lack experimental affinity labels, so we measure pose
accuracy and physical validity. This tests whether a
composition fitted to affinity feedback transfers to
structure generation; it does not directly measure
recovery of the experimental affinity tilt.
\Cref{fig:prot_dist_zscore_oracle} compares the surrogate
marginals in the fitting and evaluation datasets.

\begin{figure}[h!]
    \centering
    \includegraphics[width=\linewidth]{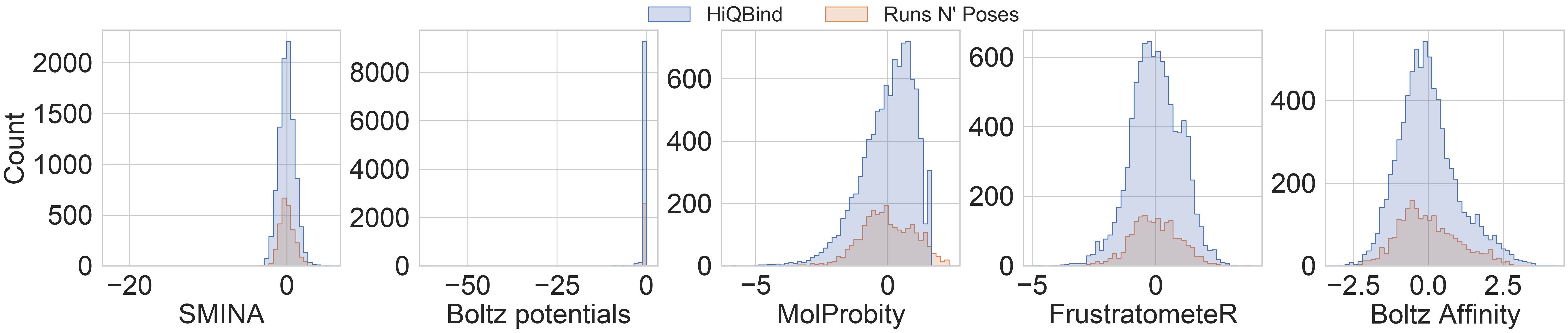}
    \vspace{-10pt}
    \caption{\small\looseness=-1 \textbf{Surrogate-score distributions.} Normalized marginals on HiQBind and Runs N' Poses.}
    \label{fig:prot_dist_zscore_oracle}
\end{figure}

\subsection{Protein success metrics}\label{app:protein-metrics}

We use the OpenStructure (\texttt{ost}) package for calculating both the RMSD and \textit{lDDT-PLI}~\cite{biasini2013openstructure}, with the same defaults as in~\cite{scarpellini2026fewstepcofoldingallatomflow}.

\looseness=-1
\textit{Diversity}. We report structural diversity, defined as one minus the similarity, where similarity is the mean pairwise TM-score~\cite{zhang2004tmscore} across the top $n_{seed}$ best poses (best within one seed across particles) where $n_{seed}$ is the number of seeds. 

We report best@1 metrics as defined in DeCAF~\cite{scarpellini2026fewstepcofoldingallatomflow} averaged over 5 seeds.


\subsection{Extended results and ablations}
\label{app:protein-results}

We compare FK steering with $10$ and $15$ particles
in~\cref{tab:prot-abl-num-particles} and vary the relative
weights of smina and the energy potentials
in~\cref{tab:prot-abl-smina-weights} to choose the strongest baseline combination. We report joint
success, physical validity, and lDDT-PLI to assess
sensitivity to these choices.

Ablating between the mixtures of fixed-weights shows that  $[0.1,0.9]$ gives the highest success rate ($61.8\%$), while $[0.9,0.1]$ gives the highest lDDT-PLI and lower PB-validity. We pick the mixture with the highest success rate as the BSS baseline reported in other tables, and use this mixture for BSS baseline experiment with 15 particles.
The learned linear mixture reaches $62.4\%$ success, which beats all of the ablated mixtures, demonstrating utility of \nameshort even with fewer surrogates.

Furthermore, we compare success rates (RMSD $\le2$\,\AA{} and PB validity) when selecting generated poses based on the surrogate reward in Table \ref{tab:prot_steering_appendix} (rewards unscaled). MARTINI-generated poses chosen based on the highest reward result in the highest success compared to baselines, with 66.8\% for 5-surrogate linear and 66.3\% for 3-surrogate linear, where the surrogates are [SMINA, GNINA, Frustration, MolProbity and Boltz Affinity] and [Frustration, MolProbity and Boltz Affinity], respectively. With un-normalized rewards, MARTINI with 3 surrogates achieves the highest success rate when choosing the best particle across seeds at 73\% compared to the BSS (Boltz Affinity) baseline's 72.4\%. The RMSD, PB valid, and lDDT-PLI metrics are evaluated for the pose that was selected based on the highest PB validity, then lowest RMSD, then highest lDDT-PLI. Diversity here is reported in terms of the effective number of poses remaining after steering.

 \begin{table}[htbp]
  \centering
  \caption{\textbf{Steering with smina and energy potentials, weight ablation.} 10 particles. Best@1 averaged over 5 seeds. \textbf{Bold} and \underline{underline} mark the best and runner-up value per column.}
  \label{tab:prot-abl-smina-weights}
\definecolor{martinirow}{RGB}{219,234,249}   
\footnotesize
\setlength{\tabcolsep}{4pt}
\begin{tabular}{lrrrr}
\toprule
$[w_{\text{smina}}, w_{\text{potentials}}]$ & Success Rate $\uparrow$ & RMSD $<2$\,\AA{} $\uparrow$ & PB Valid $\uparrow$ & lDDT-PLI $\uparrow$ \\
\midrule
$[0.1, 0.9]$ & $\underline{61.8}$ & $\underline{61.8}$ & $\underline{97.6}$ & ${65.1}$ \\
$[0.3, 0.7]$ & ${61.2}$ & ${61.2}$ & $\mathbf{97.8}$ & ${65.1}$ \\
$[0.5, 0.5]$ & ${61.1}$ & ${61.1}$ & ${97.4}$ & ${65.0}$ \\
$[0.7, 0.3]$ & ${61.3}$ & ${61.3}$ & ${96.5}$ & ${65.1}$ \\
$[0.9, 0.1]$ & ${61.5}$ & ${61.6}$ & ${94.4}$ & $\underline{65.4}$ \\
\midrule
\rowcolor{martinirow}$g$(linear) & $\mathbf{62.4}$ & $\mathbf{62.7}$ & ${93.5}$ & $\mathbf{66.3}$ \\
\bottomrule
\end{tabular}
\end{table}

 \begin{table}[htbp]
  \centering
  \caption{\textbf{Number of particles ablation, FK-Steering.} All surrogate rewards are normalized. Best@1 averaged over 5 seeds. \textbf{Bold} and \underline{underlined} mark the best and runner-up value per column.}
  \label{tab:prot-abl-num-particles}
\definecolor{martinirow}{RGB}{219,234,249}   
\footnotesize
\setlength{\tabcolsep}{4pt}
\begin{tabular}{lrrrrrr}
\toprule
& \multicolumn{3}{c}{Num Particles $=10$} & \multicolumn{3}{c}{Num Particles $=15$} \\
\cmidrule(lr){2-4} \cmidrule(lr){5-7}
Method & Success Rate $\uparrow$ & PB Valid $\uparrow$ & lDDT-PLI $\uparrow$ & Success Rate $\uparrow$ & PB Valid $\uparrow$ & lDDT-PLI $\uparrow$ \\
\midrule
\multicolumn{7}{l}{\textbf{Baselines}} \\
\midrule
DeCAF potentials & ${61.8}$ & $\mathbf{97.9}$ & ${65.2}$ & ${61.8}$ & $\mathbf{98.3}$ & ${65.7}$ \\
smina + potentials & ${61.8}$ & $\underline{97.6}$ & ${65.1}$ & ${61.8}$ & $\underline{98.2}$ & ${65.4}$ \\
\midrule
\multicolumn{7}{l}{\textbf{\nameshort}} \\
\midrule
\rowcolor{martinirow}$g$(lin: smina, poten.) & ${62.4}$ & ${93.5}$ & $\underline{66.3}$ & ${62.7}$ & ${93.3}$ & ${66.5}$ \\
\rowcolor{martinirow}$g$(corr: all 5) & $\mathbf{63.3}$ & ${91.9}$ & $\mathbf{66.4}$ & $\mathbf{64.5}$ & ${91.8}$ & $\mathbf{67.6}$ \\
\rowcolor{martinirow}$g$(lin: all 5) & $\underline{62.8}$ & ${95.1}$ & ${65.6}$ & ${63.4}$ & ${94.7}$ & ${66.5}$ \\
\rowcolor{martinirow}$g$(mlp: all 5) & ${62.2}$ & ${83.0}$ & ${65.5}$ & $\underline{63.7}$ & ${82.7}$ & $\underline{67.3}$ \\
\bottomrule
\end{tabular}
    
\end{table}

\begin{table*}
    \centering
    \caption{\small \looseness=-1 \textbf{Reward-selected poses vs. actual best poses, un-normalized rewards}. Steering on pose-2023 Runs N' Poses, 15 particles, 5 seeds, rewards un-normalised, best@5. \textbf{Bold} and \underline{underlined} mark the best and runner-up value per column.}
        \vspace{-10pt}
    \label{tab:prot_steering_appendix}
    
\definecolor{martinirow}{RGB}{219,234,249}   
\definecolor{refrow}{gray}{0.92}   

\centering
\setlength{\tabcolsep}{4.5pt}
\resizebox{\ifdim\width>\linewidth\linewidth\else\width\fi}{!}{
\begin{tabular}{lrrrrrr}
    \toprule
    & \multicolumn{1}{c}{Top reward} & \multicolumn{4}{c}{Best particle$^{\dagger}$} & \multicolumn{1}{c}{Diversity} \\
    \cmidrule(lr){2-2} \cmidrule(lr){3-6} \cmidrule(l){7-7}
    Method & Success $\uparrow$ & Success $\uparrow$ & RMSD$\le2$\,\AA{} $\uparrow$ & PB Valid $\uparrow$ & lDDT-PLI $\uparrow$ & Eff.~poses $\uparrow$ \\
    \midrule
    \rowcolor{refrow}Boltz-1x & $66.7$ & $67.3$ & $67.3$ & $100.0$ & $69.9$ & $2.9$ \\
    \midrule
    DeCAF & $65.3$ & $66.3$ & $66.3$ & $\mathbf{100.0}$ & $69.2$ & $2.6$ \\
    DeCAF potentials (FK, $K{=}15$) & $65.8$ & $69.4$ & $69.9$ & $99.5$ & $70.7$ & $6.2$ \\
    \midrule
    DeCAF BSS (affinity) & $66.1$ & $\underline{72.4}$ & $\underline{72.4}$ & $\mathbf{100.0}$ & $\mathbf{72.9}$ & $\underline{7.9}$ \\
    \rowcolor{martinirow}DeCAF-\nameshort(3 surrogates) & $\underline{66.3}$ & $\mathbf{73.0}$ & $\mathbf{73.0}$ & $\mathbf{100.0}$ & $\mathbf{72.9}$ & $\mathbf{8.0}$ \\
    \rowcolor{martinirow}DeCAF-\nameshort(5 surrogates) & $\mathbf{66.8}$ & $69.9$ & $69.9$ & $\mathbf{100.0}$ & $71.1$ & $6.0$ \\
    \bottomrule
\end{tabular}
}
\end{table*}


\clearpage
\section{Additional theoretical results}
\label{app:extra_theory}

This section gives the assumptions and supporting results
for the two main theorems in~\S\ref{sec:theory}.
Proofs appear in~\S\ref{app:proofs}.

\subsection{Assumptions}
\label{app:assumptions}

We analyze population quantities and ideal tilted distributions,
with $p_{1,\phi}=p_1$. The space $\mathcal X$ carries its Borel
$\sigma$-algebra, $p_1$ is a probability law, and the oracle,
surrogates, and compositions are measurable. Conditional
statements hold almost surely under the specified law.
Compositions in the main theorems are fixed as $\beta\downarrow0$;
the explicitly defined $\gopt$ in
Proposition~\ref{prop:klopt} may depend on $\beta$.

\begin{mdframed}[style=MyFrame2]
\begin{assumption}[Finite second moments]
\label{ass:moments}
The oracle and every compared score satisfy
\[
\E_{p_1}[(R^*)^2]<\infty,
\qquad
\E_{p_1}[(g\circ\mathbf R)^2]<\infty.
\]
\end{assumption}
\end{mdframed}

\begin{mdframed}[style=MyFrame2]
\begin{assumption}[Essential boundedness]
\label{ass:bounded}
There exist constants $M_*,M_g<\infty$, independent of $\beta$,
such that
\[
p_1(|R^*|\le M_*)=1,
\qquad
p_1(|g\circ\mathbf R|\le M_g)=1.
\]
The score bound is required for each composition used in an
expansion; $M_g$ need not be uniform over all compositions.
\end{assumption}
\end{mdframed}

Assumption~\ref{ass:bounded} implies
Assumption~\ref{ass:moments} and ensures finite, positive tilt
normalizers. It concerns the oracle and composed scores
$p_1$-almost surely, rather than bounds on every raw surrogate
globally on $\mathcal X$.

\begin{mdframed}[style=MyFrame2]
\begin{assumption}[Mean-aligned representatives]
\label{ass:means}
Every compared composition is represented so that
\[
\E_{p_1}[g\circ\mathbf R]=\E_{p_1}[R^*].
\]
In the preference analysis, this holds for every $g\in\gG$,
including the population minimizer $g_{\PL}$.
\end{assumption}
\end{mdframed}

This is a normalization convention for analysis. For an unshifted
composition $g$, define
\[
\bar g(\mathbf r):=g(\mathbf r)
+\E_{p_1}[R^*]-\E_{p_1}[g\circ\mathbf R].
\]
Then $q_{\bar g}=q_g$ and $\PL(\bar g)=\PL(g)$, whereas
\[
\mathrm{MSE}(\bar g)=\mathrm{MSE}(g)
-\bigl(
\E_{p_1}[g\circ\mathbf R]-\E_{p_1}[R^*]
\bigr)^2.
\]
Thus mean alignment preserves tilts and PL probabilities, but
can reduce MSE. In the preference analysis, $\gG$ denotes the
mean-aligned image of the original class; this affine
transformation preserves convexity, score-range widths, PL
risks, and induced tilts. The theoretical bounds use these
representatives and require no additional fitting step or
oracle access. The aligned MSE is no larger than the unshifted
population MSE estimated by~\eqref{eq:mse-loss}.

\begin{mdframed}[style=MyFrame2]
\begin{assumption}[Preference observations and composition class]
\label{ass:preferences}
\begin{enumerate}[
    label=(\roman*),
    leftmargin=2em,
    itemsep=3pt,
    topsep=1pt
]
\item Each query contains $m\ge2$ independent candidates
$S\sim p_1^{\otimes m}$, with ranking
\[
\pi\mid S\sim p^*_{\PL}(\cdot\mid S)
:=p_{\PL}(\cdot\mid R^*,S).
\]
Observed queries are independent.

\item The class $\gG$ is nonempty and convex, and there is a
common $\Delta<\infty$ such that, for every $g\in\gG$,
\[
\operatorname*{ess\,sup}_{p_1}(g\circ\mathbf R)
-\operatorname*{ess\,inf}_{p_1}(g\circ\mathbf R)
\le\Delta.
\]

\item The population loss
\[
\PL(g):=\E_{S,\pi}
[-\log p_{\PL}(\pi\mid g\circ\mathbf R,S)]
\]
attains its minimum over $\gG$ at some
$g_{\PL}\in\argmin_{g\in\gG}\PL(g)$.
\end{enumerate}
\end{assumption}
\end{mdframed}

Existence of $g_{\PL}$ is assumed; convexity and a range bound
alone do not establish it. Together with
Assumption~\ref{ass:means}, the common width implies
\[
|g\circ\mathbf R-\E_{p_1}[R^*]|\le\Delta
\quad p_1\text{-almost surely},
\qquad g\in\gG.
\]
Hence the preference scores are already bounded. Boundedness
of $R^*$ is additionally imposed for the KL and reward
expansions, but is not needed for the exact recovery bound
when $R^*\in L^2(p_1)$.

\paragraph{Relation to fitted models.}
The common range bound applies to the entire admissible score class, not merely to observed scores. For example, if $\|\mathbf R(x)\|_2\le B$ $p_1$-almost surely, the convex linear class $\{g_{\mathbf w}:\|\mathbf w\|_2\le W\}$ has score-range width at most $2BW$. Ridge regularization and finite empirical ranges alone do not establish this condition for an unrestricted class. Theorem~\ref{thm:pref} remains conditional on the stated class assumptions.

\paragraph{Which assumptions each conclusion uses.}
The following conditions suffice for the individual conclusions;
the main statements collect them for convenience.

\begin{center}
\small
\begin{tabular}{@{}p{0.36\linewidth}p{0.59\linewidth}@{}}
\toprule
Conclusion & Assumptions used \\
\midrule
Theorem~\ref{thm:value}(i)
& \ref{ass:bounded}; also \ref{ass:means} for the equality
with MSE. \\

Theorem~\ref{thm:value}(ii), exact identity
& \ref{ass:moments} for $R^*$ and $R\in L^\infty(p_1)$;
no small-$\beta$ limit. \\

Theorem~\ref{thm:value}(ii)--(iii), asymptotic bounds
& \ref{ass:bounded}, with $g$ fixed. \\

Theorem~\ref{thm:value}(iv)
& \ref{ass:moments} only. \\

Theorem~\ref{thm:pref}, $\mathrm{MSE}(g)\le B(g)$ and (iv)
& \ref{ass:moments}, \ref{ass:means}, and
\ref{ass:preferences}. \\

Theorem~\ref{thm:pref}(i)--(iii)
& The preceding conditions and the oracle bound in
\ref{ass:bounded}, with $g$ fixed. \\
\bottomrule
\end{tabular}
\end{center}

Mean alignment is unnecessary for the value gain and gap bounds:
$\Var_{p_1}(g\circ\mathbf R-\mu)
\le\mathrm{MSE}(g)-\bar v$ suffices.
The exact recovery identities and the preference recovery bound
hold without taking $\beta\downarrow0$. For fixed-score expansions,
the $O(\beta^j)$ constants may depend on the problem and score
and need not be uniform over the fitted class. The expansions
involving $\gopt$ account explicitly for its dependence on
$\beta$.

\begin{mdframed}[style=MyFrame2]
\begin{lemma}[Likelihood controls recovery]
\label{lem:pl-sc}
Under Assumptions~\ref{ass:moments}, \ref{ass:means},
and~\ref{ass:preferences}, for every $g\in\gG$, with
$c_\Delta=(1-\tfrac1m)e^{-\Delta}$,
\[
\Var_{p_1}(g\circ\mathbf R-g_{\PL}\circ\mathbf R)
\le
\frac{2}{c_\Delta}
\bigl(\PL(g)-\PL(g_{\PL})\bigr).
\]
The variance inequality is invariant to additive shifts.
\end{lemma}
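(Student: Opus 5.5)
The plan is to prove a strong-convexity inequality for $\PL$ along the segment from $g_{\PL}$ to $g$, and then use first-order optimality of $g_{\PL}$ over the convex class $\gG$. Fix $g\in\gG$ and write $f:=g\circ\mathbf R$, $f_{\PL}:=g_{\PL}\circ\mathbf R$ and $h:=f-f_{\PL}$. For $t\in[0,1]$ set $g_t:=g_{\PL}+t(g-g_{\PL})$, which lies in $\gG$ by convexity (Assumption~\ref{ass:preferences}(ii)), and define $\varphi(t):=\PL(g_t)$. Every $g_t\circ\mathbf R$ has essential range width at most $\Delta$. By Assumption~\ref{ass:means} the scores are therefore bounded $p_1$-a.s., so the integrand $-\log p_{\PL}(\pi\mid g_t\circ\mathbf R,S)$ and its first two $t$-derivatives are uniformly bounded. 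Dominated convergence then lets us differentiate $\varphi$ twice under $\E_{S,\pi}$.

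The derivative calculation uses that the PL NLL of a ranking is linear in the scores plus $\sum_{i=1}^{m-1}\log\sum_{l\ge i}e^{f(x_{\pi(l)})}$. Its second directional derivative in direction $h$ is therefore $\sum_{i=1}^{m-1}\Var_{w^{(i)}}(h)$, where $w^{(i)}$ is the softmax of the current scores over the items not yet selected. This quantity is nonnegative, so $\varphi$ is convex. We keep only the $i=1$ stage, whose item set is all of $S$ and hence does not depend on $\pi$. With scores in a window of width $\Delta$, each softmax weight satisfies $w_j\ge e^{-\Delta}/m$. Using $\Var_w(h)=\min_c\sum_j w_j(h_j-c)^2$, this gives
\[
\Var_{w^{(1)}}(h)\;\ge\; e^{-\Delta}\,\frac1m\sum_{j=1}^m\bigl(h(x_j)-\bar h_S\bigr)^2 ,
\]
where $\bar h_S$ is the average of $h$ over $S$. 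Since $S\sim p_1^{\otimes m}$, the expectation of this empirical variance is $(1-1/m)\Var_{p_1}(h)$. Hence $\varphi''(t)\ge c_\Delta\Var_{p_1}(h)$ for all $t\in[0,1]$.

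To conclude, Taylor's formula with integral remainder gives
\[
\PL(g)-\PL(g_{\PL})=\varphi(1)-\varphi(0)=\varphi'(0)+\int_0^1(1-t)\varphi''(t)\,\mathrm dt .
\]
Because $g_{\PL}$ minimizes $\PL$ over $\gG$ and $g_t\in\gG$, the right derivative satisfies $\varphi'(0)\ge0$. It follows that $\PL(g)-\PL(g_{\PL})\ge\tfrac{c_\Delta}{2}\Var_{p_1}(h)$, which rearranges to the claim. Shift invariance is immediate, since both $\Var_{p_1}(h)$ and PL probabilities are unchanged when constants are added to $f$ or $f_{\PL}$.

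I expect the main obstacle to be the careful justification of the steps that involve limits. First, the one-sided derivative $\varphi'(0)$ must exist and be nonnegative even though $g_{\PL}$ may lie on the boundary of $\gG$. Second, the softmax weight bound must hold uniformly along the whole segment. Both follow from the uniform width bound $\Delta$ holding on all of $\gG$, not only at the endpoints, which is why convexity together with the common $\Delta$ is assumed.
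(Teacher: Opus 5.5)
Your proposal is correct and follows the paper's proof essentially step for step. It uses the same segment $g_t=g_{\PL}+t(g-g_{\PL})$ with one-sided first-order optimality at $t=0$, the same reduction of the PL curvature to its first-stage softmax variance with weights at least $e^{-\Delta}/m$, the same identity $\E_S[m^{-1}\sum_j(h(x_j)-\bar h_S)^2]=(1-1/m)\Var_{p_1}(h)$, and the same Taylor integral-remainder conclusion; your justification of differentiation under the expectation via the common width bound on all of $\gG$ plus mean alignment is also exactly what the paper relies on.
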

\end{mdframed}

\subsection{Theoretical floor}

The exact recovery floor is $\bar v$. The exact KL floor is
$\mathcal K_{\min}$, whose leading term is $\beta^2\bar v/2$.
For a law $q$ on $\mathcal X$, write $q(\mathbf R)$ for
the distribution of $\mathbf R(X)$ when $X\sim q$.

\begin{mdframed}[style=MyFrame2]
\begin{lemma}[Level-set invariance]
\label{lem:invariance}
For every measurable composition $g$ with
$0<\E_{p_1}[e^{\beta g\circ\mathbf R}]<\infty$,
\[
q_g(\cdot\mid\mathbf R=\mathbf r)
=p_1(\cdot\mid\mathbf R=\mathbf r)
\quad\text{for }p_1(\mathbf R)\text{-almost every }\mathbf r.
\]
The tilt reweights surrogate level sets without changing their
conditional laws. This statement also applies when individual
level sets have zero mass.
\end{lemma}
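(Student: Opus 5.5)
The plan is to identify the Radon--Nikodym derivative of $q_g$ with respect to $p_1$ and observe that it is a function of $\mathbf R$ alone. By definition,
\[
\frac{\mathrm dq_g}{\mathrm dp_1}(x)=\frac{e^{\beta g(\mathbf R(x))}}{Z_g},
\qquad Z_g:=\E_{p_1}[e^{\beta g\circ\mathbf R}]\in(0,\infty).
\]
Write this density as $\phi(\mathbf R(x))$ with $\phi(\mathbf r):=e^{\beta g(\mathbf r)}/Z_g$. The function $\phi$ is measurable and strictly positive.

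The cleanest way to handle level sets of zero mass is to work with regular conditional distributions, characterized through disintegration. I will fix a regular version $\kappa(\mathbf r,\cdot)=p_1(\cdot\mid\mathbf R=\mathbf r)$. Such a version exists because $\mathcal X=\R^d$ is Polish. The claim is that $\kappa$ is also a version of $q_g(\cdot\mid\mathbf R=\mathbf r)$.

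To check this, I take any Borel $A\subseteq\mathcal X$ and Borel $B\subseteq\R^k$ and compute
\[
q_g(A\cap\{\mathbf R\in B\})=\E_{p_1}\!\left[\1_A\,\1_B(\mathbf R)\,\phi(\mathbf R)\right]
=\int_B\phi(\mathbf r)\,\kappa(\mathbf r,A)\,p_1(\mathbf R)(\mathrm d\mathbf r).
\]
The second equality comes from conditioning on $\mathbf R$ and pulling out the $\mathbf R$-measurable factor $\1_B(\mathbf R)\phi(\mathbf R)$, which is legitimate since everything is nonnegative. Setting $A=\mathcal X$ in the same display shows that $q_g(\mathbf R)$ has density $\phi$ with respect to $p_1(\mathbf R)$. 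The display therefore reads
\[
q_g(A\cap\{\mathbf R\in B\})=\int_B\kappa(\mathbf r,A)\,q_g(\mathbf R)(\mathrm d\mathbf r).
\]
This is exactly the defining property of a regular conditional law of $q_g$ given $\mathbf R$.

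Regular conditional distributions are unique up to $q_g(\mathbf R)$-null sets, because the Borel $\sigma$-algebra is countably generated, so the $A$-quantifier can be taken over a countable generating $\pi$-system. Since $\phi>0$, the laws $q_g(\mathbf R)$ and $p_1(\mathbf R)$ are mutually absolutely continuous. Hence "$q_g(\mathbf R)$-a.e." upgrades to "$p_1(\mathbf R)$-a.e.", which gives the stated equality.

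The step needing the most care is the zero-mass level sets, where the naive ratio $q_g(A\cap\{\mathbf R=\mathbf r\})/q_g(\mathbf R=\mathbf r)$ is undefined. The disintegration formulation sidesteps this entirely, so the remaining subtlety is only the uniqueness and equivalence-of-null-sets argument above.
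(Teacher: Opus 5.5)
Your proof is correct and uses the same mechanism as the paper's. The density $e^{\beta g\circ\mathbf R}/Z_g$ is a strictly positive $\mathbf R$-measurable factor, so it passes through conditioning on $\mathbf R$, and mutual absolute continuity of $q_g(\mathbf R)$ and $p_1(\mathbf R)$ turns $q_g(\mathbf R)$-null sets into $p_1(\mathbf R)$-null sets. The paper applies the conditional change-of-measure (abstract Bayes) formula one event at a time; your direct check that $p_1$'s regular kernel disintegrates $q_g$, together with uniqueness of regular conditional laws over a countable generating $\pi$-system, makes explicit the single exceptional null set that the ``for almost every $\mathbf r$'' statement about measures requires.
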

\end{mdframed}

\begin{mdframed}[style=MyFrame2]
\begin{proposition}[Exact $\KL$ floor and optimal composition]
\label{prop:klopt}
Fix $\beta>0$. Under Assumption~\ref{ass:bounded}, for every
compared composition $g$,
\[
\KL(q^*\|q_g)
=\mathcal K_{\min}
+\KL(q^*(\mathbf R)\|q_g(\mathbf R))
\ge\mathcal K_{\min},
\]
where
\[
\mathcal K_{\min}
:=
\E_{\mathbf r\sim q^*(\mathbf R)}
\left[
\KL\bigl(
q^*(\cdot\mid\mathbf r)\|p_1(\cdot\mid\mathbf r)
\bigr)
\right]
\]
does not depend on $g$. Over all admissible measurable
compositions, the minimum is attained at
\[
\gopt(\mathbf r)
=\frac1\beta
\log\E_{p_1}
[e^{\beta R^*}\mid\mathbf R=\mathbf r],
\]
uniquely up to an additive constant,
$p_1(\mathbf R)$-almost surely; it need not belong to a
restricted fitted family.

Moreover, $\mathcal K_{\min}=0$ if and only if $\bar v=0$.
As $\beta\downarrow0$, with $g$ fixed in the second expansion,
\[
\mathcal K_{\min}
=\frac{\beta^2}{2}\bar v+O(\beta^3),
\]
\[
\KL(q^*\|q_g)
=\mathcal K_{\min}
+\frac{\beta^2}{2}\Var_{p_1}(g\circ\mathbf R-\mu)
+O(\beta^3).
\]
\end{proposition}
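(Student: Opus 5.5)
The plan is a chain-rule decomposition of KL through the surrogate map $\mathbf R$, combined with Lemma~\ref{lem:invariance}. By Assumption~\ref{ass:bounded} all tilts are mutually absolutely continuous with $p_1$ and have finite, positive normalizers. We first disintegrate $q^*$ and $q_g$ along $\mathbf R$. The KL chain rule gives
\[
\KL(q^*\|q_g)=\KL(q^*(\mathbf R)\|q_g(\mathbf R))+\E_{\mathbf r\sim q^*(\mathbf R)}\bigl[\KL(q^*(\cdot\mid\mathbf r)\|q_g(\cdot\mid\mathbf r))\bigr].
\]
By Lemma~\ref{lem:invariance}, $q_g(\cdot\mid\mathbf r)=p_1(\cdot\mid\mathbf r)$ for $p_1(\mathbf R)$-a.e.\ $\mathbf r$, hence also $q^*(\mathbf R)$-a.e. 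The second term is therefore exactly $\mathcal K_{\min}$, which does not depend on $g$, and nonnegativity of the first term yields the floor.

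For the optimizer, we compute the marginal densities explicitly. The surrogate marginal of the gold tilt is
\[
\frac{dq^*(\mathbf R)}{dp_1(\mathbf R)}(\mathbf r)=\frac{\E_{p_1}[e^{\beta R^*}\mid\mathbf R=\mathbf r]}{\E_{p_1}[e^{\beta R^*}]},
\qquad
\frac{dq_g(\mathbf R)}{dp_1(\mathbf R)}(\mathbf r)=\frac{e^{\beta g(\mathbf r)}}{\E_{p_1}[e^{\beta g\circ\mathbf R}]}.
\]
The marginal KL vanishes if and only if these agree a.e., which means $e^{\beta g}\propto \E[e^{\beta R^*}\mid\mathbf R]$. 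This gives $\gopt$, unique up to an additive constant. We note that $\gopt$ is bounded, hence admissible.

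For the claim $\mathcal K_{\min}=0\iff\bar v=0$, we use the conditional density $dq^*(\cdot\mid\mathbf r)/dp_1(\cdot\mid\mathbf r)=e^{\beta R^*}/\E[e^{\beta R^*}\mid\mathbf r]$. This density equals $1$ a.e.\ iff $R^*$ is conditionally constant given $\mathbf R$, i.e.\ $v=0$ a.e. Since $q^*(\mathbf R)$ and $p_1(\mathbf R)$ are equivalent, the statements hold simultaneously under either law.

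The expansions are the main technical step. We write the conditional KL as
\[
\E_{q^*(\cdot\mid\mathbf r)}[\beta R^*]-\log\E_{p_1}[e^{\beta R^*}\mid\mathbf r].
\]
Using boundedness, we expand both terms via conditional cumulants. The conditional log-MGF is $\beta\mu+\tfrac{\beta^2}{2}v+O(\beta^3)$, and the tilted conditional mean is $\mu+\beta v+O(\beta^2)$. The conditional KL is therefore $\tfrac{\beta^2}{2}v(\mathbf r)+O(\beta^3)$, with a remainder uniform in $\mathbf r$ because $|R^*|\le M_*$ bounds all conditional cumulants. We then integrate against $q^*(\mathbf R)$, whose density relative to $p_1(\mathbf R)$ is $1+O(\beta)$ uniformly. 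This gives $\mathcal K_{\min}=\tfrac{\beta^2}{2}\bar v+O(\beta^3)$.

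For the marginal term with $g$ fixed, we write $\log$ of the density ratio as
\[
\beta\bigl(\gopt_\beta-g\bigr)-\text{const}=\beta(\mu-g)+O(\beta^2)\quad\text{uniformly}.
\]
We then apply the standard small-perturbation expansion $\KL(e^{\beta h_1}/Z_1\|e^{\beta h_2}/Z_2)=\tfrac{\beta^2}{2}\Var(h_1-h_2)+O(\beta^3)$ for bounded scores. The care needed here is to track uniformity of the $O(\beta^2)$ correction in $\gopt_\beta$; the cumulant bound should handle this.
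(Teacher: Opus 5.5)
Your argument is correct. The structural part matches the paper's proof exactly: the KL chain rule combined with Lemma~\ref{lem:invariance}, reading off $\gopt$ from the two surrogate-marginal densities, and characterizing $\mathcal K_{\min}=0$ through the conditional density $e^{\beta R^*}/\E_{p_1}[e^{\beta R^*}\mid\mathbf R]$.

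The small-$\beta$ expansions are organized differently. The paper never expands the two terms of the decomposition separately. Because the marginal term vanishes at $\gopt$, it writes $\mathcal K_{\min}=\KL(q^*\|q_{\gopt})$. It then applies the global expansion of Lemma~\ref{lem:err2kl} to the $\beta$-dependent score $\gopt$; this is legitimate because $|\gopt\circ\mathbf R|\le\|R^*\|_\infty$ makes the remainder constants independent of $\beta$. Finally it uses $L^2$ orthogonality of $R^*-\mu$ to functions of $\mathbf R$ to get $\Var_{p_1}(R^*-\gopt\circ\mathbf R)=\bar v+O(\beta^2)$. The fixed-$g$ expansion follows by applying the same lemma to $g$, splitting $\Var_{p_1}(R^*-g\circ\mathbf R)=\bar v+\Var_{p_1}(g\circ\mathbf R-\mu)$, and subtracting.

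You instead expand each term where it lives. You treat the within-level-set KL $\E_{q^*(\cdot\mid\mathbf r)}[\beta R^*]-\log\E_{p_1}[e^{\beta R^*}\mid\mathbf r]$ via uniform conditional cumulants and integrate it against a $1+O(\beta)$ marginal density. You treat the marginal KL as a small perturbation between two tilts of $p_1(\mathbf R)$ with scores $\gopt_\beta$ and $g$.

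The paper's route is shorter, since one Taylor computation is reused and orthogonality does the splitting. Yours needs no orthogonality step and makes explicit that the $\bar v$ coefficient comes entirely from the conditional term, while $\Var_{p_1}(g\circ\mathbf R-\mu)$ comes entirely from the marginal mismatch. Both routes hinge on the uniformity point you flag, namely $|\gopt_\beta|\le M_*$ for every $\beta>0$.
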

\end{mdframed}

\begin{mdframed}[style=MyFrame2]
\begin{corollary}[The regression solution is leading-order optimal]
\label{cor:mu}
Under Assumption~\ref{ass:bounded}, for the displayed $\gopt$
in Proposition~\ref{prop:klopt},
\[
\gopt\circ\mathbf R
=\mu+\tfrac{\beta}{2}v+O(\beta^2)
\]
uniformly almost surely. As $\beta\downarrow0$,
\[
\KL(q^*\|q_\mu)-\mathcal K_{\min}
=\frac{\beta^4}{8}\Var_{p_1}(v)+O(\beta^5).
\]
\end{corollary}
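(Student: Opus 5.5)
The plan has two parts. First, expand the cumulant generating function conditionally on the surrogates to get $\gopt\circ\mathbf R$ to second order. Second, substitute $g=\mu$ into the exact decomposition of Proposition~\ref{prop:klopt}, so that only the surrogate-marginal term $\KL(q^*(\mathbf R)\|q_\mu(\mathbf R))$ remains to be expanded.

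\textbf{Step 1.} Write $R^*=\mu+\varepsilon$ with $\E_{p_1}[\varepsilon\mid\mathbf R]=0$ and $\E_{p_1}[\varepsilon^2\mid\mathbf R]=v$. Then
\[
\gopt\circ\mathbf R=\mu+\tfrac1\beta\log\E_{p_1}[e^{\beta\varepsilon}\mid\mathbf R].
\]
Under Assumption~\ref{ass:bounded} we have $|\varepsilon|\le 2M_*$ almost surely. Taylor's theorem with remainder gives
\[
\E[e^{\beta\varepsilon}\mid\mathbf R]=1+\tfrac{\beta^2}{2}v+\tfrac{\beta^3}{6}\kappa_3+O(\beta^4),
\]
where $\kappa_3:=\E[\varepsilon^3\mid\mathbf R]$ is bounded and all remainders are bounded by deterministic constants. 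Taking the logarithm and dividing by $\beta$ yields $\mu+\tfrac\beta2 v+O(\beta^2)$ uniformly almost surely, which is the first claim. The same computation carried one order further gives the explicit next term $\tfrac{\beta^2}{6}\kappa_3$. We will need it only to confirm that it cancels.

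\textbf{Step 2.} By Proposition~\ref{prop:klopt}, with $g=\mu$ (admissible, since $|\mu|\le M_*$),
\[
\KL(q^*\|q_\mu)-\mathcal K_{\min}=\KL(q^*(\mathbf R)\|q_\mu(\mathbf R)).
\]
By the level-set invariance of Lemma~\ref{lem:invariance}, the density of $q^*(\mathbf R)$ with respect to $p_1(\mathbf R)$ is proportional to $\E[e^{\beta R^*}\mid\mathbf R]=e^{\beta\gopt\circ\mathbf R}$. So $q^*(\mathbf R)=q_{\gopt}(\mathbf R)$. Both marginals are then exponential tilts of $p_1(\mathbf R)$, by $\beta\gopt$ and $\beta\mu$ respectively. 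Set $h:=\beta(\gopt\circ\mathbf R-\mu)$. By Step~1,
\[
h=\tfrac{\beta^2}{2}v+\tfrac{\beta^3}{6}\kappa_3+O(\beta^4)
\]
uniformly, so $h=O(\beta^2)$.

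\textbf{Step 3.} Write $\nu:=q_\mu(\mathbf R)$. Then
\[
\KL(q_{\gopt}(\mathbf R)\|\nu)=\E_{\nu}[he^h]/\E_\nu[e^h]-\log\E_\nu[e^h].
\]
This is the standard second-order formula for the KL divergence between tilts of a common base: it equals $\tfrac12\Var_\nu(h)+O(\|h\|_\infty^3)$. With $\|h\|_\infty=O(\beta^2)$, the cubic remainder is $O(\beta^6)$. It then remains to compute $\Var_\nu(h)$. Since $\nu$ is $p_1(\mathbf R)$ tilted by $e^{\beta\mu}$ with $\mu$ bounded, $\Var_\nu(v)=\Var_{p_1}(v)+O(\beta)$. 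Hence
\[
\Var_\nu(h)=\tfrac{\beta^4}{4}\Var_{p_1}(v)+O(\beta^5),
\]
where the $\beta^3\kappa_3$ term contributes only to $O(\beta^5)$ through the cross-covariance. Halving gives $\tfrac{\beta^4}{8}\Var_{p_1}(v)+O(\beta^5)$.

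The main obstacle is bookkeeping: every remainder must be uniform almost surely. This is why we use boundedness of $\varepsilon$ rather than only moment conditions. We must also make sure that the change of base measure from $p_1$ to $\nu$ costs only a relative $O(\beta)$ factor. Both points reduce to the bounds $e^{\pm\beta M}$ on the tilt densities.
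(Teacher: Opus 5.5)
Your proposal is correct and follows essentially the paper's route. You expand the conditional cumulant to get $\gopt$, reduce the excess KL to the divergence between the $\gopt$- and $\mu$-tilts through the exact decomposition, and control the second-order expansion by $\|h\|_\infty^3=O(\beta^6)$ rather than the generic $O(\beta^3)$ remainder of Lemma~\ref{lem:err2kl}; the paper instead writes the exact identity $\beta^2\int_0^1 t\,\Var_{q_{(1-t)\mu+t\gopt\circ\mathbf R}}(\gopt\circ\mathbf R-\mu)\,dt$ on the full space rather than on surrogate marginals, which is equivalent by level-set invariance. You do not need the explicit $\tfrac{\beta^2}{6}\kappa_3$ term, since $h=\tfrac{\beta^2}{2}v+O(\beta^3)$ uniformly already makes the cross-covariance $O(\beta^5)$.
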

\end{mdframed}

Here $\Var_{p_1}(v)$ measures variation in conditional
oracle variance across surrogate values. Thus the conditional
mean induces a tilt whose excess KL above the exact optimum
is fourth order in $\beta$.

\subsection{Reward error controls steering}

\begin{mdframed}[style=MyFrame2]
\begin{lemma}[Reward error controls steering]
\label{lem:err2kl}
Under Assumption~\ref{ass:bounded}, with $g$ fixed,
\[
\KL(q^*\|q_g)
=\frac{\beta^2}{2}
\Var_{p_1}(R^*-g\circ\mathbf R)+O(\beta^3)
\quad(\beta\downarrow0).
\]
No mean or variance matching is required for this expansion.
\end{lemma}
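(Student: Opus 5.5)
We would prove Lemma~\ref{lem:err2kl} by writing the KL divergence in closed form in terms of log-normalizers, and then expanding each normalizer in cumulants as $\beta\downarrow0$.

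Write $D:=R^*-g\circ\mathbf R$. Both $q^*$ and $q_g$ are absolutely continuous with respect to $p_1$, with densities $e^{\beta R^*}/Z_*$ and $e^{\beta g\circ\mathbf R}/Z_g$, where $Z_*:=\E_{p_1}[e^{\beta R^*}]$ and $Z_g:=\E_{p_1}[e^{\beta g\circ\mathbf R}]$. Assumption~\ref{ass:bounded} makes both normalizers finite and positive. The log-density ratio is therefore $\beta D-\log Z_*+\log Z_g$, which gives the exact identity
\[
\KL(q^*\|q_g)=\beta\,\E_{q^*}[D]-\Lambda_{R^*}(\beta)+\Lambda_{g}(\beta),
\]
where $\Lambda_R(\beta):=\log\E_{p_1}[e^{\beta R}]$ is the cumulant generating function. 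Since $D$ is bounded, the step $\E_{q^*}[D]<\infty$ is immediate.

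The next step is to expand each term to order $\beta^2$ with an $O(\beta^3)$ remainder.

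\textbf{Cumulant terms.} Boundedness gives $\Lambda_R(\beta)=\beta\E[R]+\tfrac{\beta^2}{2}\Var(R)+O(\beta^3)$. The remainder is controlled by the third derivative of $\Lambda_R$, which is bounded uniformly in $\beta\in[0,1]$ by a constant depending only on $M_*$ or $M_g$. This uses the fact that the third cumulant of a tilted bounded variable is at most $8M^3$.

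\textbf{Tilted mean of $D$.} We use the exact covariance identity already recorded in Theorem~\ref{thm:value}(ii):
\[
\E_{q^*}[D]=\E_{p_1}[D]+\Cov_{p_1}(D,\omega_{R^*}).
\]
Since $\omega_{R^*}=1+\beta(R^*-\E R^*)+O(\beta^2)$ uniformly almost surely, we obtain
\[
\E_{q^*}[D]=\E[D]+\beta\Cov(D,R^*)+O(\beta^2).
\]

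Substituting these expansions, the first-order terms cancel: $\beta\E[D]-\beta\E[R^*]+\beta\E[g\circ\mathbf R]=0$. The second-order coefficient is
\[
\Cov(D,R^*)-\tfrac12\Var(R^*)+\tfrac12\Var(g\circ\mathbf R).
\]
Set $G:=g\circ\mathbf R$, so that $D=R^*-G$. Then
\[
\Cov(D,R^*)=\Var(R^*)-\Cov(G,R^*),
\]
and the coefficient becomes
\[
\tfrac12\Var(R^*)-\Cov(G,R^*)+\tfrac12\Var(G)=\tfrac12\Var(R^*-G).
\]
This gives the claimed expansion. No mean or variance matching enters, because the mean terms cancel identically and only variances survive.

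The main obstacle is keeping the remainders uniform, so that they are genuinely $O(\beta^3)$. The first place this matters is the $O(\beta^2)$ error in $\omega_{R^*}$, which is multiplied by $\beta$ and then integrated against the bounded $D$. We would control it with the elementary bounds $|e^{x}-1-x|\le \tfrac{x^2}{2}e^{|x|}$ and $|Z_*^{-1}-1+\beta\E R^*|\le C\beta^2$, both valid for $|x|\le\beta M_*$. The second place is the Taylor remainders of the cumulant generating functions. All the resulting constants depend only on $M_*$ and $M_g$, which are fixed because $g$ is held fixed.
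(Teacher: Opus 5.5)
Your proposal is correct and follows the paper's proof essentially step for step. Both use the exact identity $\KL(q^*\|q_g)=\beta\E_{q^*}[R^*-g\circ\mathbf R]+\log Z_g-\log Z^*$, expand both log-normalizers to second order and the tilted mean to first order, and cancel the linear terms; your extra bookkeeping on uniform remainders (the $8M^3$ bound on the tilted third cumulant and the uniform expansion of $\omega_{R^*}$) just makes explicit what the paper attributes to Assumption~\ref{ass:bounded}.
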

\end{mdframed}

The orthogonal decomposition
$R^*-g\circ\mathbf R=(R^*-\mu)+(\mu-g\circ\mathbf R)$
gives the recovery floor plus approximation error in
Theorem~\ref{thm:value}.

\begin{mdframed}[style=MyFrame2]
\begin{corollary}[Positive gain from imperfect recovery]
\label{cor:positive-gain}
Under the assumptions of Theorem~\ref{thm:value}, suppose
\[
\mathrm{MSE}(g)<\Var_{p_1}(R^*),
\]
or equivalently,
$\|g\circ\mathbf R-\mu\|_{L^2(p_1)}^2
<\Var_{p_1}(\mu)$.
Then there exists $\beta_0>0$ such that
\[
\E_{q_g}[R^*]>\E_{p_1}[R^*]
\qquad\text{for all }0<\beta<\beta_0.
\]
Thus, predicting the oracle more accurately than the best
constant predictor suffices for positive reward gain at
sufficiently small guidance strength.
\end{corollary}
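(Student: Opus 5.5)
The plan is to obtain the positive-gain statement from the asymptotic lower bound in Theorem~\ref{thm:value}(ii), using the exact decomposition in part~(iv) to turn the hypothesis into a strict inequality on the leading coefficient. We first verify that the two hypotheses are equivalent. By Theorem~\ref{thm:value}(iv), $\mathrm{MSE}(g)=\bar v+\|g\circ\mathbf R-\mu\|^2_{L^2(p_1)}$ and $\Var_{p_1}(R^*)=\Var_{p_1}(\mu)+\bar v$. Subtracting $\bar v$ from both sides of $\mathrm{MSE}(g)<\Var_{p_1}(R^*)$ gives $\|g\circ\mathbf R-\mu\|^2_{L^2(p_1)}<\Var_{p_1}(\mu)$, and the converse holds for the same reason.

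Next we show that the leading coefficient in part~(ii) is strictly positive. Set $A:=\Var_{p_1}(\mu)$ and $D:=\mathrm{MSE}(g)-\bar v=\|g\circ\mathbf R-\mu\|^2_{L^2(p_1)}\ge0$. The hypothesis reads $D<A$, and since $D\ge0$ this forces $A>0$. Then $\sqrt{AD}<\sqrt{A\cdot A}=A$, so the coefficient $c:=A-\sqrt{AD}$ satisfies $c>0$. (Under mean alignment, $\Var_{p_1}(g\circ\mathbf R-\mu)=D$; in general we only use the displayed bound, which is stated with $\mathrm{MSE}(g)-\bar v$.)

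Finally we pass from the asymptotic bound to the strict inequality. Theorem~\ref{thm:value}(ii) gives $\E_{q_g}[R^*]-\E_{p_1}[R^*]\ge\beta c+r(\beta)$, where $|r(\beta)|\le C\beta^2$ for all $0<\beta<\beta_1$, with constants $C,\beta_1$ that depend on the fixed $g$ and the problem. Take $\beta_0:=\min(\beta_1,c/C)$, or $\beta_0=\beta_1$ if $C=0$. For $0<\beta<\beta_0$ we have $\beta c-C\beta^2=\beta(c-C\beta)>0$, which yields the strict gain.

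The only delicate point is interpreting the $O(\beta^2)$ in part~(ii) as a genuine one-sided bound, that is, a remainder bounded in absolute value by $C\beta^2$ on some interval $(0,\beta_1)$, for this fixed $g$. This is the reading the paper specifies for fixed-composition expansions under Assumption~\ref{ass:bounded}, so no uniformity over a class of compositions is needed.
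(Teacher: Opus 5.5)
Your proposal is correct and matches the paper's proof: part (iv) turns the hypothesis into $\mathrm{MSE}(g)-\bar v<\Var_{p_1}(\mu)$, so the leading coefficient in the gain lower bound of part (ii) is strictly positive and dominates the $O(\beta^2)$ remainder for small $\beta$. Your explicit $\beta_0$ and your remark that the hypothesis forces $\Var_{p_1}(\mu)>0$ only spell out steps the paper leaves implicit.
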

\end{mdframed}

\subsection{Correlation ceiling}

\begin{mdframed}[style=MyFrame2]
\begin{corollary}[Correlation ceiling]
\label{cor:corr}
Under Assumption~\ref{ass:moments}, if $\Var_{p_1}(R^*)>0$,
every nonconstant score $g\circ\mathbf R$ satisfies
\[
\left|\PCC_{p_1}(g\circ\mathbf R,R^*)\right|
\le\sqrt{1-\bar v/\Var_{p_1}(R^*)}.
\]
If $\Var_{p_1}(\mu)>0$, equality is attained at
$g\circ\mathbf R=\mu$. If $\Var_{p_1}(\mu)=0$, the correlations
are zero and PCC with the constant $\mu$ is undefined.
\end{corollary}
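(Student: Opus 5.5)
The plan is to reduce the correlation ceiling to the orthogonal decomposition in Theorem~\ref{thm:value}(iv) via a covariance identity. Write $s:=g\circ\mathbf R$, which is $\sigma(\mathbf R)$-measurable and lies in $L^2(p_1)$ by Assumption~\ref{ass:moments}. By the tower property and the definition $\mu=\E_{p_1}[R^*\mid\mathbf R]$,
\[
\Cov_{p_1}(s,R^*)=\E_{p_1}\bigl[(s-\E s)\,\E[R^*-\E R^*\mid\mathbf R]\bigr]=\Cov_{p_1}(s,\mu).
\]
Thus the residual $R^*-\mu$ is orthogonal to every square-integrable function of $\mathbf R$, and only the projected part $\mu$ contributes to the covariance.

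Next apply Cauchy--Schwarz to this identity: $|\Cov(s,\mu)|\le\sqrt{\Var(s)\Var(\mu)}$. Dividing by $\sqrt{\Var(s)\Var(R^*)}$, which is positive because $s$ is nonconstant and $\Var(R^*)>0$, gives
\[
|\PCC(s,R^*)|\le\sqrt{\Var(\mu)/\Var(R^*)}.
\]
The variance decomposition $\Var(R^*)=\Var(\mu)+\bar v$ from Theorem~\ref{thm:value}(iv) then turns this ratio into $1-\bar v/\Var(R^*)$, which is the claimed ceiling.

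For the equality cases, take $s=\mu$ with $\Var(\mu)>0$. Then $\Cov(\mu,R^*)=\Var(\mu)$, so $\PCC=\Var(\mu)/\sqrt{\Var(\mu)\Var(R^*)}=\sqrt{\Var(\mu)/\Var(R^*)}$, and the bound is attained. If $\Var(\mu)=0$, the covariance identity gives $\Cov(s,R^*)=0$ for every nonconstant $s$, so all such correlations vanish. The constant $\mu$ has zero variance, so its PCC is undefined.

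I do not expect a real obstacle. The only points needing care are integrability, so that the tower property applies (Assumption~\ref{ass:moments} suffices), and keeping track of the nonconstant and positive-variance conditions so that no division by zero occurs.
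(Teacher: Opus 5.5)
Your proposal is correct and follows essentially the same route as the paper's proof. Both use the conditional-expectation identity $\Cov_{p_1}(g\circ\mathbf R,R^*)=\Cov_{p_1}(g\circ\mathbf R,\mu)$, then Cauchy--Schwarz, then the total-variance decomposition $\Var_{p_1}(R^*)=\Var_{p_1}(\mu)+\bar v$, and both treat the same attained-equality and degenerate constant-$\mu$ cases.
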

\end{mdframed}

The identity
$\bar v=\Var_{p_1}(R^*)(1-\rho_{\mathrm{ceil}}^2)$
involves the population ceiling. An arbitrary predictor's
held-out correlation need not attain it or determine $\bar v$.
A fitted linear predictor estimates performance within the linear
class and need not attain this unrestricted ceiling.
The best nonlinear function of a single surrogate $R_j$
instead has Pearson ceiling
\[
\sqrt{
\frac{
\Var_{p_1}(\E_{p_1}[R^*\mid R_j])
}{
\Var_{p_1}(R^*)
}
}.
\]

\begin{mdframed}[style=MyFrame2]
\begin{corollary}[Standardized surrogate tilts]
\label{cor:standardized}
Under Assumptions~\ref{ass:bounded} and~\ref{ass:means},
let $f=g\circ\mathbf R$, with
$s_*^2=\Var_{p_1}(R^*)>0$ and $s_f^2=\Var_{p_1}(f)>0$.
For a nonconstant score $F$, define
\[
z_F:=\frac{F-\E_{p_1}[F]}{\sqrt{\Var_{p_1}(F)}},
\qquad
q_F^{\mathrm{std}}\propto p_1e^{\beta z_F}.
\]
For fixed $g$, as $\beta\downarrow0$,
\[
\KL(q_{R^*}^{\mathrm{std}}\|q_f^{\mathrm{std}})
=\beta^2\bigl[1-\PCC_{p_1}(R^*,f)\bigr]+O(\beta^3).
\]
If $s_\mu^2:=\Var_{p_1}(\mu)>0$, the leading coefficient
is minimized over bounded, nonconstant surrogate scores
by $f=\mu$. Moreover, with $\rho_{\mathrm{ceil}}=s_\mu/s_*$,
\[
0\le \rho_{\mathrm{ceil}}-\PCC_{p_1}(R^*,f)
\le \frac{2\bigl(\mathrm{MSE}(g)-\bar v\bigr)}{s_*s_\mu}.
\]
\end{corollary}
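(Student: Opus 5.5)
The plan is to reduce the standardized-tilt KL to Lemma~\ref{lem:err2kl}, then handle the correlation optimization separately.

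\emph{Step 1: reduce to Lemma~\ref{lem:err2kl}.} The standardized tilts are ordinary tilts of the bounded scores $z_{R^*}$ and $z_f$, since adding constants does not change a tilt. Under Assumption~\ref{ass:bounded}, with $s_*,s_f>0$, both scores are essentially bounded. Lemma~\ref{lem:err2kl}, applied with $z_{R^*}$ in place of $R^*$ and $z_f$ as the fixed score, gives
\[
\KL(q^{\mathrm{std}}_{R^*}\|q^{\mathrm{std}}_f)=\frac{\beta^2}{2}\Var_{p_1}(z_{R^*}-z_f)+O(\beta^3).
\]
Because both scores have unit variance, $\Var_{p_1}(z_{R^*}-z_f)=2-2\PCC_{p_1}(R^*,f)$, which is the claimed expansion. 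The step to check is that Lemma~\ref{lem:err2kl} holds for an arbitrary bounded oracle, with only boundedness entering its proof. The $O(\beta^3)$ constant depends on $g$, which is why $g$ is fixed.

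\emph{Step 2: minimize the leading coefficient.} Minimizing $1-\PCC$ is the same as maximizing $\PCC_{p_1}(R^*,f)$ over bounded nonconstant surrogate scores. Corollary~\ref{cor:corr} bounds this correlation by $\rho_{\mathrm{ceil}}=\sqrt{1-\bar v/s_*^2}$. By Theorem~\ref{thm:value}(iv), $s_*^2=s_\mu^2+\bar v$, so $\rho_{\mathrm{ceil}}=s_\mu/s_*$, and the ceiling is attained at $f=\mu$ when $s_\mu>0$. Here $\mu$ is bounded by $M_*$, so it is admissible, and it is nonconstant because $s_\mu>0$.

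\emph{Step 3: the quantitative gap.} This is the main computation. Since $f$ is a function of $\mathbf R$, the tower property gives $\Cov(R^*,f)=\Cov(\mu,f)$. Hence $\PCC(R^*,f)=\rho_{\mathrm{ceil}}\,\PCC(\mu,f)$, and
\[
\rho_{\mathrm{ceil}}-\PCC(R^*,f)=\rho_{\mathrm{ceil}}\bigl(1-\PCC(\mu,f)\bigr)\ge0.
\]
Let $e:=f-\mu$. By Assumption~\ref{ass:means}, $\E[e]=0$, so Theorem~\ref{thm:value}(iv) gives $\Var(e)=\|e\|^2_{L^2(p_1)}=\mathrm{MSE}(g)-\bar v=:\delta$. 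Using the identity $1-\PCC(\mu,f)=\tfrac12\Var\bigl(z_\mu-z_f\bigr)$, I need to bound $\|z_\mu-z_f\|_{L^2}$ by a multiple of $\|e\|_{L^2}/s_\mu$.

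Write $z_\mu-z_f=(\mu-f)/s_f+(\mu-\E\mu)(1/s_\mu-1/s_f)$, where $|1/s_\mu-1/s_f|=|s_f-s_\mu|/(s_\mu s_f)$ and $|s_f-s_\mu|\le\sqrt\delta$ by the reverse triangle inequality. This gives $\|z_\mu-z_f\|\le 2\sqrt\delta/s_f$, which has $s_f$ in the denominator rather than $s_\mu$.

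To avoid $s_f$, I will use the cleaner bound $1-\PCC(\mu,f)\le \Var(e)\cdot\frac{1}{s_\mu^2}\cdot c$, obtained from the projection form $1-\PCC^2=\Var(e_\perp)/s_f^2$ with careful constants. Alternatively, I can use that $1-\rho\le 1-\rho^2$ and $1-\PCC(\mu,f)^2=\min_a\Var(\mu-af)/s_\mu^2\le\delta/s_\mu^2$, by taking $a=1$. This second route gives
\[
\rho_{\mathrm{ceil}}-\PCC\le \frac{s_\mu}{s_*}\cdot\frac{\delta}{s_\mu^2}=\frac{\delta}{s_*s_\mu},
\]
which is within the stated factor of $2$.

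The first inequality $1-\rho\le1-\rho^2$ requires $\PCC(\mu,f)\ge0$. If instead the correlation is negative, then $\delta\ge \Var(\mu)+\Var(f)-2\Cov(\mu,f)\ge s_\mu^2$, so the right-hand side $2\delta/(s_*s_\mu)\ge 2s_\mu/s_*$. The left-hand side is at most $\rho_{\mathrm{ceil}}+|\PCC|\le 2\rho_{\mathrm{ceil}}$, so the bound still holds. This case split is the step I expect to need the most care, and it is also where the factor $2$ is used.
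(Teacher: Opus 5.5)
Your proof is correct. Steps 1 and 2 coincide with the paper's argument: Lemma~\ref{lem:err2kl} is applied to the standardized scores, and Corollary~\ref{cor:corr} gives the optimality of $\mu$. Your identity $\rho_{\mathrm{ceil}}-\PCC_{p_1}(R^*,f)=\rho_{\mathrm{ceil}}\bigl(1-\PCC_{p_1}(\mu,f)\bigr)=\tfrac{\rho_{\mathrm{ceil}}}{2}\|z_f-z_\mu\|^2_{L^2(p_1)}$ is also the paper's starting point for the gap.

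You diverge only in the upper bound. The paper keeps the triangle-inequality route you abandoned, but factors out the other normalization. Using mean alignment, it writes $z_f-z_\mu=(f-\mu)/s_\mu+(f-\E_{p_1}[f])(1/s_f-1/s_\mu)$. The second term then has norm $|s_\mu-s_f|/s_\mu\le\|f-\mu\|_{L^2(p_1)}/s_\mu$, so $\|z_f-z_\mu\|\le 2\|f-\mu\|/s_\mu$. Squaring gives the constant $2$ uniformly, with no sign split.

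Your route uses $1-\rho\le 1-\rho^2$ together with the regression identity $s_\mu^2(1-\rho^2)=\min_a\Var_{p_1}(\mu-af)\le\delta$. This is more elementary, and it gives the sharper bound $\delta/(s_*s_\mu)$ whenever $\PCC_{p_1}(\mu,f)\ge0$. The price is the separate negative-correlation case. That case is exactly where the factor $2$ is genuinely needed: taking $f-\E_{p_1}[f]=-t(\mu-\E_{p_1}[\mu])$ with $0<t<\sqrt2-1$ gives left side $2\rho_{\mathrm{ceil}}$ but $\delta/(s_*s_\mu)=(1+t)^2\rho_{\mathrm{ceil}}$, which violates the constant-$1$ bound.

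In a write-up, drop the discarded $s_f$-decomposition and the unfinished ``projection form'' sentence. The ``alternatively'' argument with its case split is complete on its own.
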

\end{mdframed}

\subsection{Surrogates and floors}

\begin{mdframed}[style=MyFrame2]
\begin{corollary}[More surrogates can improve the optimal floors]
\label{cor:mono}
Let $\mathbf R_S$ be any sub-vector of $\mathbf R$, and let
$\bar v_S$ and $\mathcal K_{\min}(S)$ be its optimal recovery
and KL floors. Then
\[
\bar v\le\bar v_S,
\qquad
\mathcal K_{\min}\le\mathcal K_{\min}(S).
\]
The recovery comparison uses Assumption~\ref{ass:moments};
the KL comparison holds for every $\beta>0$ under
Assumption~\ref{ass:bounded}. If $\Var_{p_1}(R^*)>0$, the
Pearson ceiling weakly increases. Under
Assumption~\ref{ass:bounded}, the conditional-mean leading
reward-gain coefficient $\Var_{p_1}(R^*)-\bar v$ weakly
increases, and its leading mean-gap coefficient $\bar v$
weakly decreases.
\end{corollary}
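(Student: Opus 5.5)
The proof treats each claim as a comparison between the information generated by a sub-vector and by the full vector. Write $\mu_S:=\E_{p_1}[R^*\mid\mathbf R_S]$ and $v_S:=\Var_{p_1}(R^*\mid\mathbf R_S)$. Since $\mathbf R_S$ is a measurable function of $\mathbf R$, we have $\sigma(\mathbf R_S)\subseteq\sigma(\mathbf R)$, and every comparison below comes from this nesting of $\sigma$-algebras.

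\textbf{Recovery floor.} Assume only Assumption~\ref{ass:moments}. Apply Theorem~\ref{thm:value}(iv) to the full vector, taking as composition any function $h$ with $h(\mathbf R)=\mu_S$. Such an $h$ exists because $\mu_S$ is $\sigma(\mathbf R)$-measurable, and it has finite second moment by Jensen. The theorem gives
\[
\bar v_S=\E_{p_1}[(\mu_S-R^*)^2]=\bar v+\|\mu_S-\mu\|^2_{L^2(p_1)}\ge\bar v .
\]
Equivalently, by the tower property, $\Var(\mu_S)\le\Var(\mu)$ while $\Var(R^*)=\Var(\mu)+\bar v=\Var(\mu_S)+\bar v_S$. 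This gives three consequences at once. The Pearson ceiling $\sqrt{1-\bar v/\Var(R^*)}$ from Corollary~\ref{cor:corr} weakly increases. The reward-gain coefficient $\Var(R^*)-\bar v=\Var(\mu)$ weakly increases. The mean-gap coefficient $\bar v$ weakly decreases. These last two coefficients are the leading terms of Theorem~\ref{thm:value}(ii)--(iii) at $g\circ\mathbf R=\mu$, where $\mathrm{MSE}(\mu)=\bar v$ makes the square-root terms vanish. Assumption~\ref{ass:bounded} is needed only so that these expansions are valid.

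\textbf{KL floor.} Fix $\beta>0$ and assume Assumption~\ref{ass:bounded}. By Proposition~\ref{prop:klopt} applied to $\mathbf R$, we have $\mathcal K_{\min}=\min_g\KL(q^*\|q_g)$ over compositions of $\mathbf R$. Likewise $\mathcal K_{\min}(S)$ is the minimum over compositions of $\mathbf R_S$, attained at the bounded $\gopt_S(\mathbf r_S)=\beta^{-1}\log\E[e^{\beta R^*}\mid\mathbf R_S=\mathbf r_S]$. Every composition $g_S\circ\mathbf R_S$ can be written as $g\circ\mathbf R$ with $g:=g_S\circ\mathrm{proj}_S$, and it induces the identical tilt $q_{g}=q_{g_S}$. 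So the class over which $\mathcal K_{\min}(S)$ minimizes is contained in the class for $\mathcal K_{\min}$, and
\[
\mathcal K_{\min}\le\KL(q^*\|q_{\gopt_S\circ\mathrm{proj}_S})=\mathcal K_{\min}(S).
\]

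\textbf{Main obstacle.} The one delicate point is admissibility in the KL step. The lifted composition must satisfy the finite, positive normalizer condition and Assumption~\ref{ass:bounded}. Both follow because $|\gopt_S|\le M_*$ almost surely whenever $|R^*|\le M_*$. Apart from this, the argument is bookkeeping via the tower property; no new estimates are needed.
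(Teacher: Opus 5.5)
Your proposal is correct and takes the same route as the paper. Both arguments rest on the fact that every composition of $\mathbf R_S$ is also a composition of $\mathbf R$: this inclusion is applied to the recovery infimum via Theorem~\ref{thm:value}(iv) and to the KL infimum via Proposition~\ref{prop:klopt}, and the Pearson-ceiling and reward-coefficient claims then follow by substitution. You also spell out details the paper leaves implicit, namely the identity $\bar v_S=\bar v+\|\mu_S-\mu\|^2_{L^2(p_1)}$ and the admissibility check $|\gopt_S|\le M_*$ for the lifted composition.
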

\end{mdframed}

The unrestricted best composition of the full vector can
match or improve on that of any sub-vector. Redundant
surrogates need not give strict improvement. These comparisons
do not guarantee that a finite-data fit improves, or impose
an unrestricted reward ceiling.

\subsection{Plackett--Luce curvature}
The following derivatives give the curvature used in the
preference recovery bound.

\begin{mdframed}[style=MyFrame2]
\begin{proposition}[Plackett--Luce curvature]
\label{prop:plhess}
For a query with $m\ge2$ candidates, finite score vector
$f\in\mathbb R^m$, and observed order $\pi$, let
$A_i=\{\pi(i),\ldots,\pi(m)\}$ and let $p^{(i)}$ be
the softmax on $A_i$, zero elsewhere. The query NLL satisfies
\[
\nabla_f\mathcal L_S
=\sum_{i=1}^{m-1}(p^{(i)}-e_{\pi(i)}),
\qquad
\nabla_f^2\mathcal L_S
=\sum_{i=1}^{m-1}
\left[\diag(p^{(i)})-p^{(i)}p^{(i)\top}\right]\succeq0.
\]
Its Hessian has null space $\mathrm{span}\{\mathbf1\}$
and rank $m-1$, where $e_j$ is the $j$th standard basis vector.
\end{proposition}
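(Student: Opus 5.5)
The statement to prove is Proposition~\ref{prop:plhess}. The plan is a direct computation on one stage of the query NLL, followed by summation over stages. Write
\[
\mathcal L_S(f)=\sum_{i=1}^{m-1}\ell_i(f),\qquad
\ell_i(f)=-f_{\pi(i)}+\log\sum_{l\in A_i}e^{f_l}.
\]
This is the per-query term of~\eqref{eq:pl-loss}, with $f_j=g(\mathbf R(x_j))$.

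\textbf{Gradient and Hessian.} For each stage $i$, the log-sum-exp over $A_i$ has gradient $p^{(i)}$, the softmax on $A_i$ extended by zero outside $A_i$. The linear term contributes $-e_{\pi(i)}$, so $\nabla\ell_i=p^{(i)}-e_{\pi(i)}$. Differentiating $p^{(i)}_j=e^{f_j}\mathbf 1[j\in A_i]/Z_i$ once more gives
\[
\partial_k p^{(i)}_j=p^{(i)}_j\delta_{jk}-p^{(i)}_jp^{(i)}_k .
\]
Hence $\nabla^2\ell_i=\diag(p^{(i)})-p^{(i)}p^{(i)\top}$, and summing over $i$ yields both displayed formulas.

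\textbf{Positive semidefiniteness.} For any $u\in\mathbb R^m$,
\[
u^\top\bigl[\diag(p)-pp^\top\bigr]u=\sum_j p_ju_j^2-\Bigl(\sum_jp_ju_j\Bigr)^2=\Var_{p}(u)\ge0,
\]
where $\Var_p(u)$ is the variance of $u_J$ with $J\sim p$. This holds for each stage, so the sum is $\succeq0$.

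\textbf{Null space.} Take $u$ with $u^\top\nabla^2\mathcal L_S\,u=0$.

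\begin{itemize}
\item Every stage term is nonnegative, so each $\Var_{p^{(i)}}(u)$ vanishes.
\item Since $f$ is finite, $p^{(i)}$ has full support on $A_i$. Vanishing variance therefore means $u$ is constant on $A_i$ for every $i=1,\dots,m-1$.
\item Because $A_1=\{1,\dots,m\}$, this already forces $u\in\mathrm{span}\{\1\}$.
\item Conversely, $\diag(p)\1=p=pp^\top\1$ since $\1^\top p=1$, so $\1$ lies in the kernel of each stage term.
\end{itemize}

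For a PSD matrix $H$, $u^\top Hu=0$ if and only if $Hu=0$. The null space is therefore exactly $\mathrm{span}\{\1\}$, and the rank is $m-1$ by rank--nullity.

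\textbf{Main obstacle.} The only step needing care is the null-space argument. It relies on two facts: full support of each softmax, which is where finiteness of $f$ is used, and the equivalence between a vanishing quadratic form and kernel membership for PSD matrices. Everything else is routine calculus.
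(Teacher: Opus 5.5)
Your proof is correct and follows the paper's argument essentially step for step. You obtain the gradient and Hessian by differentiating log-sum-exp, show semidefiniteness through $u^\top[\diag(p^{(i)})-p^{(i)}p^{(i)\top}]u=\Var_{p^{(i)}}(u)\ge 0$, and use full support of the first-stage softmax on $A_1=\{1,\dots,m\}$ to force the null space to be $\mathrm{span}\{\mathbf{1}\}$. Your explicit remark that $u^\top H u=0$ if and only if $Hu=0$ for positive semidefinite $H$ fills in a step the paper leaves implicit when it passes from the vanishing quadratic form to the null space and rank.
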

\end{mdframed}

\clearpage
\section{Proofs}
\label{app:proofs}

Proofs are ordered by dependency. All expectations below are
under the stated base or tilted laws, and all assertions
about conditional moments hold almost surely.

\paragraph{Lemma~\ref{lem:invariance} (level-set invariance).}
For a measurable event $A$, conditional change of measure gives
\[
\E_{q_g}[\mathbf1_A\mid\mathbf R]
=
\frac{
\E_{p_1}[\omega_{g\circ\mathbf R}\mathbf1_A\mid\mathbf R]
}{
\E_{p_1}[\omega_{g\circ\mathbf R}\mid\mathbf R]
}
=\E_{p_1}[\mathbf1_A\mid\mathbf R].
\]
The density ratio is positive and $\mathbf R$-measurable, so
it cancels. The base and tilted surrogate marginals have the
same null sets. No division by the probability of a level
set is used.\hfill$\square$

\paragraph{Lemma~\ref{lem:err2kl} (reward error controls steering).}
Let
$Z_g=\E_{p_1}[e^{\beta g\circ\mathbf R}]$
and $Z^*=\E_{p_1}[e^{\beta R^*}]$.
The density ratio gives
\[
\KL(q^*\|q_g)
=\beta\E_{q^*}[R^*-g\circ\mathbf R]
+\log Z_g-\log Z^*.
\]
Assumption~\ref{ass:bounded} justifies the Taylor expansions:
\begin{align*}
\log Z_g
&=\beta\E_{p_1}[g\circ\mathbf R]
+\tfrac{\beta^2}{2}\Var_{p_1}(g\circ\mathbf R)
+O(\beta^3),\\
\log Z^*
&=\beta\E_{p_1}[R^*]
+\tfrac{\beta^2}{2}\Var_{p_1}(R^*)
+O(\beta^3),\\
\E_{q^*}[R^*-g\circ\mathbf R]
&=\E_{p_1}[R^*-g\circ\mathbf R]
+\beta\Cov_{p_1}(R^*,R^*-g\circ\mathbf R)
+O(\beta^2).
\end{align*}
The linear terms cancel, leaving
\[
\tfrac{\beta^2}{2}
\left[
\Var_{p_1}(R^*)
-2\Cov_{p_1}(R^*,g\circ\mathbf R)
+\Var_{p_1}(g\circ\mathbf R)
\right]+O(\beta^3),
\]
which is the claimed variance expansion.\hfill$\square$

\begin{mdframed}[style=MyFrame2]
\thmvalue*
\end{mdframed}

\begin{proof}[Proof of Theorem~\ref{thm:value}]
\emph{(iv), using Assumption~\ref{ass:moments}.}
Write $\varepsilon=R^*-\mu$. Then
$\E_{p_1}[\varepsilon\mid\mathbf R]=0$ and
$\E_{p_1}[\varepsilon^2]=\bar v$.
Thus $\varepsilon$ is orthogonal in $L^2(p_1)$ to every
square-integrable function of $\mathbf R$.
Expanding the square gives
\[
\mathrm{MSE}(g)
=\E_{p_1}
[(\varepsilon+\mu-g\circ\mathbf R)^2]
=\bar v+\|g\circ\mathbf R-\mu\|_{L^2(p_1)}^2.
\]
Equality with $\bar v$ holds exactly when
$g\circ\mathbf R=\mu$. Taking the constant score
$\E_{p_1}[R^*]$ proves
$\Var_{p_1}(R^*)=\Var_{p_1}(\mu)+\bar v$.

\emph{(i), using Assumptions~\ref{ass:bounded}
and~\ref{ass:means}.}
The same orthogonality yields
\[
\Var_{p_1}(R^*-g\circ\mathbf R)
=\bar v+\Var_{p_1}(g\circ\mathbf R-\mu).
\]
By mean alignment the error has mean zero, so this equals
$\mathrm{MSE}(g)$. Lemma~\ref{lem:err2kl} proves (i).

\emph{(ii).}
For $R\in L^\infty(p_1)$ and $R^*\in L^2(p_1)$, the density
ratio satisfies $\E_{p_1}[\omega_R]=1$, giving the exact identity
\[
\E_{q_R}[R^*]-\E_{p_1}[R^*]
=
\E_{p_1}[R^*\omega_R]
-\E_{p_1}[R^*]\E_{p_1}[\omega_R]
=\Cov_{p_1}(R^*,\omega_R).
\]
This identity also appears in~\citet{paes2026theoretical}.
For a composition, $\omega_{g\circ\mathbf R}$ is
$\mathbf R$-measurable, so the covariance equals
\[
\Cov_{p_1}(\mu,\omega_{g\circ\mathbf R}).
\]
Under Assumption~\ref{ass:bounded}, expanding the density ratio
gives
\[
\E_{q_g}[R^*]-\E_{p_1}[R^*]
=\beta\Cov_{p_1}(\mu,g\circ\mathbf R)+O(\beta^2).
\]
Part~(iv) implies
\[
\Var_{p_1}(g\circ\mathbf R-\mu)
\le\mathrm{MSE}(g)-\bar v,
\]
with equality under Assumption~\ref{ass:means}.
Consequently,
\[
|\Cov_{p_1}(\mu,\mu-g\circ\mathbf R)|
\le
\sqrt{
\Var_{p_1}(\mu)(\mathrm{MSE}(g)-\bar v)
}
\]
by Cauchy--Schwarz. Substituting
\[
\Cov_{p_1}(\mu,g\circ\mathbf R)
=\Var_{p_1}(\mu)
-\Cov_{p_1}(\mu,\mu-g\circ\mathbf R)
\]
proves (ii).

\emph{(iii), using Assumption~\ref{ass:bounded}.}
The oracle tilt's gain is
$\beta\Var_{p_1}(R^*)+O(\beta^2)$, so
\begin{equation}
\label{eq:reward-leading}
\E_{q^*}[R^*]-\E_{q_g}[R^*]
=
\beta\left[
\bar v+\Cov_{p_1}(\mu,\mu-g\circ\mathbf R)
\right]+O(\beta^2).
\end{equation}
The same Cauchy--Schwarz bound proves (iii).
\end{proof}

The conditional mean minimizes recovery error and the
leading KL coefficient. It need not maximize expected
reward over arbitrary score scalings: replacing $\mu$
by $a\mu$ changes the leading gain to
$a\Var_{p_1}(\mu)$.

\begin{proof}[Proof of Corollary~\ref{cor:positive-gain}]
By Theorem~\ref{thm:value}(iv),
\[
\mathrm{MSE}(g)-\bar v<\Var_{p_1}(\mu).
\]
Consequently, the coefficient of $\beta$ in the gain lower
bound of Theorem~\ref{thm:value}(ii) is strictly positive.
For fixed $g$, this term dominates the $O(\beta^2)$ remainder
for all sufficiently small positive $\beta$.
\end{proof}

\paragraph{Proposition~\ref{prop:plhess} (PL curvature).}
For a fixed query and order,
\[
\mathcal L_S(f)
=\sum_{i=1}^{m-1}
\left[\log\sum_{l\in A_i}e^{f_l}-f_{\pi(i)}\right].
\]
Differentiating log-sum-exp gives the stated gradient
and Hessian. For any $w\in\mathbb R^m$,
\[
w^\top\left[\diag(p^{(i)})-p^{(i)}p^{(i)\top}\right]w
=\Var_{p^{(i)}}(w)\ge0.
\]
The first stage assigns positive probability to every
candidate, so this quadratic form vanishes only for
constant $w$. Every stage vanishes on constant vectors;
hence the summed Hessian has the stated null space
and rank.\hfill$\square$

\paragraph{Lemma~\ref{lem:pl-sc} (likelihood controls recovery).}
Under Assumptions~\ref{ass:moments}, \ref{ass:means},
and~\ref{ass:preferences}, set
$w=(g-g_{\PL})\circ\mathbf R$ and
$g_t=g_{\PL}+t(g-g_{\PL})$, $0\le t\le1$.
The segment is feasible by convexity of $\gG$. Minimality gives
\[
\left.\frac{d}{dt}\PL(g_t)\right|_{t=0}\ge0,
\]
not necessarily equality at a constrained boundary.
For $S=(x_1,\ldots,x_m)$ let $w_{S,j}=w(x_j)$ and
$\bar w_S=m^{-1}\sum_jw_{S,j}$.

Proposition~\ref{prop:plhess} bounds the likelihood's second
directional derivative below by its first-stage variance.
The range bound gives $p_j^{(1)}\ge e^{-\Delta}/m$, and hence
\[
\Var_{p^{(1)}}(w_S)
=\min_c\sum_jp_j^{(1)}(w_{S,j}-c)^2
\ge
\frac{e^{-\Delta}}m
\sum_j(w_{S,j}-\bar w_S)^2.
\]
For independent items,
\[
\E_S\left[
m^{-1}\sum_j(w_{S,j}-\bar w_S)^2
\right]
=(1-1/m)\Var_{p_1}(w).
\]
Taking expectations over both $S$ and $\pi$ gives
\[
\frac{d^2}{dt^2}\PL(g_t)
\ge c_\Delta\Var_{p_1}(w).
\]
The bounded scores and directions justify differentiating
under these expectations. Taylor's integral formula then yields
\begin{align*}
\PL(g)-\PL(g_{\PL})
&=
\left.\frac{d}{dt}\PL(g_t)\right|_{t=0}
+\int_0^1(1-t)\frac{d^2}{dt^2}\PL(g_t)\,dt\\
&\ge
\tfrac{c_\Delta}{2}\Var_{p_1}(w),
\end{align*}
proving the claim.\hfill$\square$

\begin{mdframed}[style=MyFrame2]
\thmpref*
\end{mdframed}

\begin{proof}[Proof of Theorem~\ref{thm:pref}]
First use Assumptions~\ref{ass:moments}, \ref{ass:means},
and~\ref{ass:preferences}.
Theorem~\ref{thm:value}(iv), the $L^2$ triangle inequality,
mean alignment, and Lemma~\ref{lem:pl-sc} give
\begin{align*}
\sqrt{\mathrm{MSE}(g)-\bar v}
&=\|g\circ\mathbf R-\mu\|_{L^2(p_1)}\\
&\le
\|g\circ\mathbf R-g_{\PL}\circ\mathbf R\|_{L^2(p_1)}
+\|g_{\PL}\circ\mathbf R-\mu\|_{L^2(p_1)}\\
&\le
\sqrt{
\frac{2}{c_\Delta}
\bigl(\PL(g)-\PL(g_{\PL})\bigr)
}
+\sqrt{\Var_{p_1}(g_{\PL}\circ\mathbf R-\mu)}.
\end{align*}
Squaring and adding $\bar v$ proves
$\mathrm{MSE}(g)\le B(g)$.
The nonnegative excess PL risk is zero at $g_{\PL}$, so
$B$ is minimized there, with
\[
B(g_{\PL})
=\bar v+\Var_{p_1}(g_{\PL}\circ\mathbf R-\mu)
=\mathrm{MSE}(g_{\PL}).
\]
This proves (iv), not minimization of the actual quantities
bounded by $B$.

For (i)--(iii), additionally assume the oracle bound in
Assumption~\ref{ass:bounded}. The common finite mean and range
width already imply
\[
|g\circ\mathbf R|
\le|\E_{p_1}[R^*]|+\Delta
\]
almost surely for every $g\in\gG$.
Theorem~\ref{thm:value} therefore applies to each fixed $g$.
Substituting $\mathrm{MSE}(g)\le B(g)$ into its bounds proves
(i)--(iii): the gain lower bound decreases, and the KL and gap
upper bounds increase, with recovery error.
\end{proof}

The discrepancy need not vanish under conditional oracle
variation: mixing PL laws over oracle values need not produce
PL probabilities with utility $\mu$. A sufficient realizable
case is $\bar v=0$ and a conditional-mean composition in
$\gG$. Then $R^*=\mu$ almost surely, and the excess conditional
cross-entropy over that score is a KL divergence between ranking
laws. Thus $\mu$ minimizes PL risk;
Lemma~\ref{lem:pl-sc} and mean alignment give
$g_{\PL}\circ\mathbf R=\mu$.

\paragraph{Proposition~\ref{prop:klopt} (exact KL floor).}
The KL chain rule and Lemma~\ref{lem:invariance} give
\[
\KL(q^*\|q_g)
=
\KL(q^*(\mathbf R)\|q_g(\mathbf R))
+
\E_{\mathbf r\sim q^*(\mathbf R)}
\left[
\KL\bigl(
q^*(\cdot\mid\mathbf r)\|p_1(\cdot\mid\mathbf r)
\bigr)
\right].
\]
The second term is $\mathcal K_{\min}$.
The surrogate marginals have density ratios relative to
$p_1(\mathbf R)$ proportional to
$\E_{p_1}[e^{\beta R^*}\mid\mathbf R]$ and
$e^{\beta g\circ\mathbf R}$, respectively.
They agree exactly for the displayed $\gopt$, up to an
additive constant, proving optimality.

The conditional density ratio of $q^*$ relative to $p_1$ is
\[
\frac{
e^{\beta R^*}
}{
\E_{p_1}[e^{\beta R^*}\mid\mathbf R]
}.
\]
Therefore $\mathcal K_{\min}=0$ if and only if this ratio is
one almost surely, equivalently $R^*$ is
$\mathbf R$-measurable, or $\bar v=0$.

Assumption~\ref{ass:bounded} gives the uniform conditional
cumulant expansion
\[
\log\E_{p_1}[e^{\beta R^*}\mid\mathbf R]
=
\beta\mu+\tfrac{\beta^2}{2}v+O(\beta^3).
\]
Thus $\gopt\circ\mathbf R=\mu+O(\beta)$.
Since
$|\gopt\circ\mathbf R|\le\|R^*\|_\infty$,
the constants in the Taylor remainder bounds of
Lemma~\ref{lem:err2kl} are independent of $\beta$, so that
expansion also applies to $\gopt$.
Orthogonality gives
\[
\Var_{p_1}(R^*-\gopt\circ\mathbf R)
=\bar v+O(\beta^2),
\]
so
$\mathcal K_{\min}=\beta^2\bar v/2+O(\beta^3)$.
For a fixed $g$, the same orthogonal decomposition and
Lemma~\ref{lem:err2kl} give the final expansion.
\hfill$\square$

\paragraph{Corollary~\ref{cor:mu} (fourth-order excess).}
Dividing the conditional cumulant expansion by $\beta$ proves
\[
\gopt\circ\mathbf R
=\mu+\beta v/2+O(\beta^2)
\]
uniformly. The exact decomposition, equality of the surrogate
marginals of $q^*$ and $q_{\gopt}$, and level-set invariance
for $q_{\gopt}$ and $q_\mu$ give
\[
\KL(q^*\|q_\mu)-\mathcal K_{\min}
=\KL(q_{\gopt}\|q_\mu).
\]
Twice differentiating the log normalizer along the
interpolating score, and integrating, yields the exact identity
\[
\KL(q_{\gopt}\|q_\mu)
=
\beta^2\int_0^1 t\,
\Var_{q_{(1-t)\mu+t(\gopt\circ\mathbf R)}}
(\gopt\circ\mathbf R-\mu)\,dt.
\]
For completeness, if the log normalizer along this path is
$F(t)$, the KL is
\[
F'(1)-F(1)+F(0)=\int_0^1tF''(t)\,dt,
\]
and $F''$ is $\beta^2$ times the displayed variance.
The tilted density is $1+O(\beta)$ relative to $p_1$
uniformly in $t$. Since
$\gopt\circ\mathbf R-\mu=\beta v/2+O(\beta^2)$
uniformly, the integrand's variance equals
\[
\beta^2\Var_{p_1}(v)/4+O(\beta^3).
\]
Integration gives
$\beta^4\Var_{p_1}(v)/8+O(\beta^5)$.
This argument controls the remainder for the shrinking score
difference; the generic $O(\beta^3)$ remainder of
Lemma~\ref{lem:err2kl} alone would not suffice.
\hfill$\square$

\paragraph{Corollary~\ref{cor:corr} (correlation ceiling).}
Conditional expectation gives
\[
\Cov_{p_1}(g\circ\mathbf R,R^*)
=\Cov_{p_1}(g\circ\mathbf R,\mu).
\]
Cauchy--Schwarz and total variance imply
\[
|\PCC_{p_1}(g\circ\mathbf R,R^*)|
\le
\sqrt{
\frac{\Var_{p_1}(\mu)}{\Var_{p_1}(R^*)}
}
=
\sqrt{
1-\frac{\bar v}{\Var_{p_1}(R^*)}
}.
\]
If $\mu$ is nonconstant, its own score attains equality.
If it is constant, the covariance vanishes for every
admissible nonconstant score. The single-surrogate ceiling
follows by the same argument conditioning on $R_j$.
\hfill$\square$

\paragraph{Corollary~\ref{cor:standardized}.}
Lemma~\ref{lem:err2kl} applied to the standardized scores
gives the expansion, since
$\Var_{p_1}(z_{R^*}-z_f)=2[1-\PCC_{p_1}(R^*,f)]$.
Optimality follows from Corollary~\ref{cor:corr}.
Conditional expectation gives
\[
\rho_{\mathrm{ceil}}-\PCC_{p_1}(R^*,f)
=\frac{\rho_{\mathrm{ceil}}}{2}
\|z_f-z_\mu\|_{L^2(p_1)}^2.
\]
By mean alignment and the reverse triangle inequality,
\[
\|z_f-z_\mu\|_{L^2(p_1)}
\le \frac{2}{s_\mu}\|f-\mu\|_{L^2(p_1)}.
\]
The recovery decomposition in Theorem~\ref{thm:value}
completes the bound.

\paragraph{Corollary~\ref{cor:mono}.}
Every composition of $\mathbf R_S$ is also a composition of
$\mathbf R$. The full-vector population MSE infimum therefore
cannot be larger; the two infima are $\bar v$ and $\bar v_S$
by Theorem~\ref{thm:value}(iv).
The same inclusion for the exact KL infima in
Proposition~\ref{prop:klopt} proves
$\mathcal K_{\min}\le\mathcal K_{\min}(S)$
for every $\beta>0$.
The remaining weak comparisons follow by substitution in
the correlation ceiling and the conditional-mean reward
coefficients.\hfill$\square$

\clearpage
\section{Toy Examples}
\label{app:toy}

\subsection{Value feedback}

\looseness=-1
\xhdr{Setup}
We use a two-dimensional example to isolate the benefit
of a nonlinear composition. Let $x=(u,v)$,
$p_1=\mathcal N(0,4I)$, $p_0=\mathcal N(0,I)$,
and $R_1(x)=u$, $R_2(x)=v$. The oracle is
$R^*(x)=|u|-|v|$, giving
\begin{equation}
q^*(x)\propto p_1(x)e^{\beta R^*(x)},\qquad \beta=1.
\label{eq:toy-qstar}
\end{equation}
The target is bimodal along the $u$-axis, and no affine
composition of the surrogates represents $R^*$ exactly.
Although the oracle is unbounded, the Gaussian prior
gives a finite tilt normalizer. This example therefore
extends beyond the bounded-reward assumptions of the
main results.

\looseness=-1
\xhdr{Fitting}
We use $n=400$ prior samples to fit linear and MLP
compositions with the value loss in~\cref{eq:mse-loss}.
The linear composition uses ridge regression on
standardized surrogates. The MLP has two hidden layers
of $64$ GELU units and is trained with Adam, using a
held-out quarter of the feedback set for early stopping.
The predicted reward fields are compared with the
oracle in~\cref{fig:toy-recovery}.

\begin{figure}[htbp]
  \centering
  \includegraphics[width=\linewidth]
    {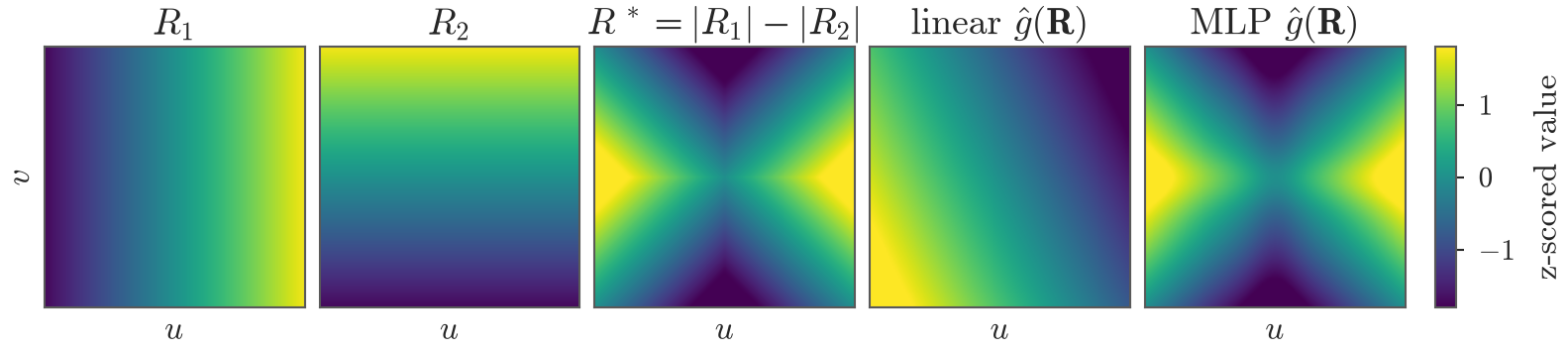}
  \caption{\small\looseness=-1
  \textbf{Value-toy reward recovery.}
  Surrogate fields, oracle reward, and fitted linear
  and MLP compositions.}
  \label{fig:toy-recovery}
\end{figure}

\looseness=-1
\xhdr{Steering}
We steer the distilled meta flow map with MFM using
each fitted composition or an individual surrogate.
The MLP composition gives lower estimated $W_1$ and KL
to $q^*$ ($0.51$ and $0.05$) than the linear composition
($1.49$ and $0.60$), $R_1$ ($1.85$ and $1.06$), or
$R_2$ ($2.32$ and $0.91$), consistent with the
distributions in~\cref{fig:toy-steer}.

\begin{figure}[htbp]
  \centering
  \includegraphics[width=\linewidth]
    {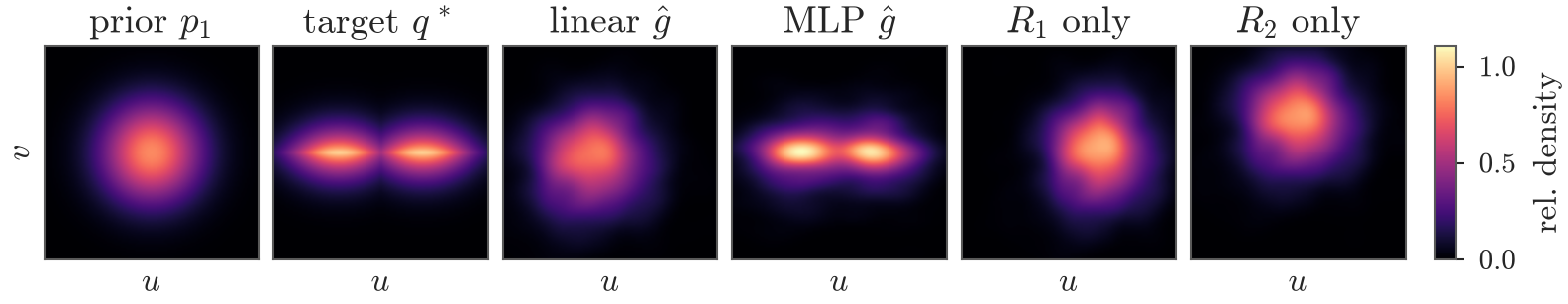}
  \caption{\small\looseness=-1
  \textbf{Value-toy steering with MFM.}
  Prior and target distributions, followed by samples
  generated using the linear composition, MLP composition,
  and individual surrogates.}
  \label{fig:toy-steer}
\end{figure}

\subsection{Preference feedback}

\looseness=-1
We next learn from PL rankings for a multimodal oracle
and compare FKS, FMRG, MFM, and It\^o steering.

\looseness=-1
\xhdr{Setup}
On $\mathcal X=\mathbb R^2$, with coordinates $x=(u,v)$
and prior $p_1=\mathcal N(0,4I)$, the oracle combines
three Gaussian bumps:
\[
R^*(x)=\sum_{c=1}^{3}h_c\,\phi(x;\mu_c,w),
\]
where $\mu_c$ are the centers, $h_c$ control their
amplitudes, and $w$ is the width parameter.
The resulting target
$q^*(x)\propto p_1(x)\exp(\beta R^*(x))$ at $\beta=1$
is multimodal~(\cref{fig:gmm-context}).
Reference samples are obtained by rejection sampling.
The three surrogates $R_1,R_2,R_3$ are corrupted views
of the oracle, with shifted centers, rescaled heights,
and noise localized around the modes.

\begin{figure}[htbp]
  \centering
  \includegraphics[width=\linewidth]
    {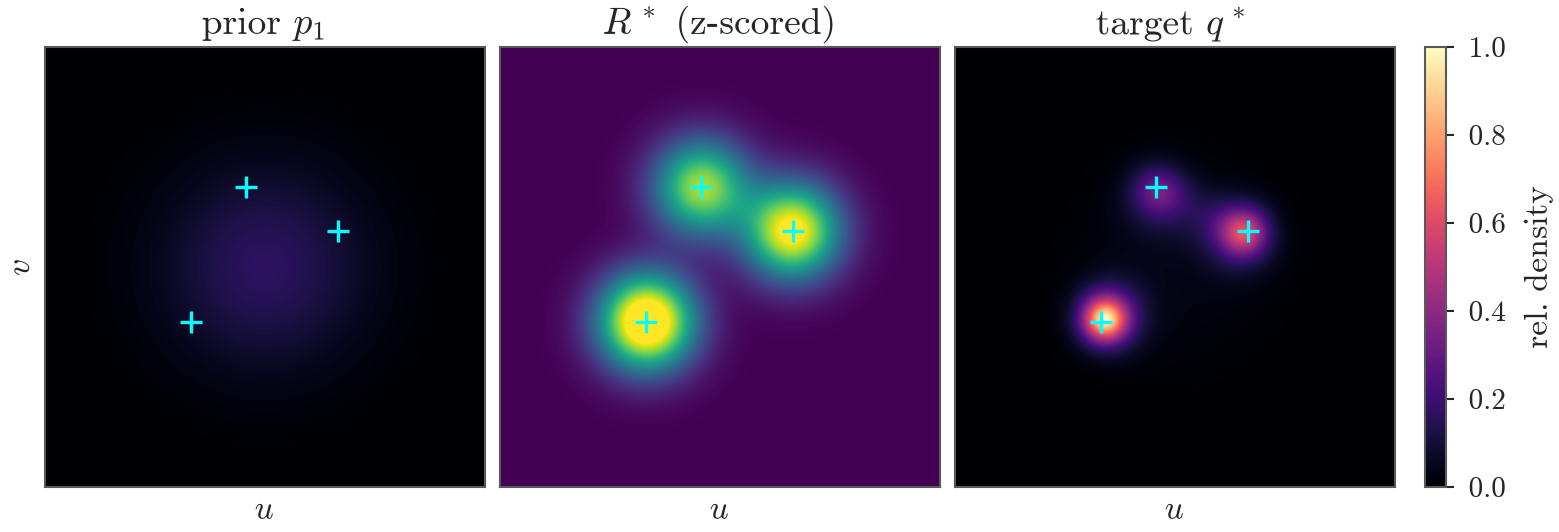}
  \caption{\small\looseness=-1
  \textbf{Preference-toy setup.}
  Prior, oracle reward, and target tilt.
  Reward-mode centers are marked ``$+$''.
  Density panels here and in~\cref{fig:gmm-steered,fig:gmm-paired}
  use a common scale relative to the peak of $q^*$.}
  \label{fig:gmm-context}
\end{figure}

\looseness=-1
\xhdr{Fitting}
Preference feedback consists of a PL ranking of
$m=50$ independent prior samples, generated with
utilities $\lambda R^*(x)$ and $\lambda=10$.
Linear and MLP compositions are fitted using the
PL loss in~\cref{eq:pl-loss}; correlation weights are
computed from the observed ranking.
Fitting uses these rankings rather than numerical
oracle rewards. The fitted fields are shown
in~\cref{fig:gmm-recovery}.

The fitted PL utility is on the $\lambda R^*$ scale.
To express the corresponding composition in oracle-reward
units, it is divided by $\lambda$; we denote this
calibrated composition by $\hat g$.
We repeat the experiment over five ranking draws.

\begin{figure}[htbp]
  \centering
  \includegraphics[width=\linewidth]
    {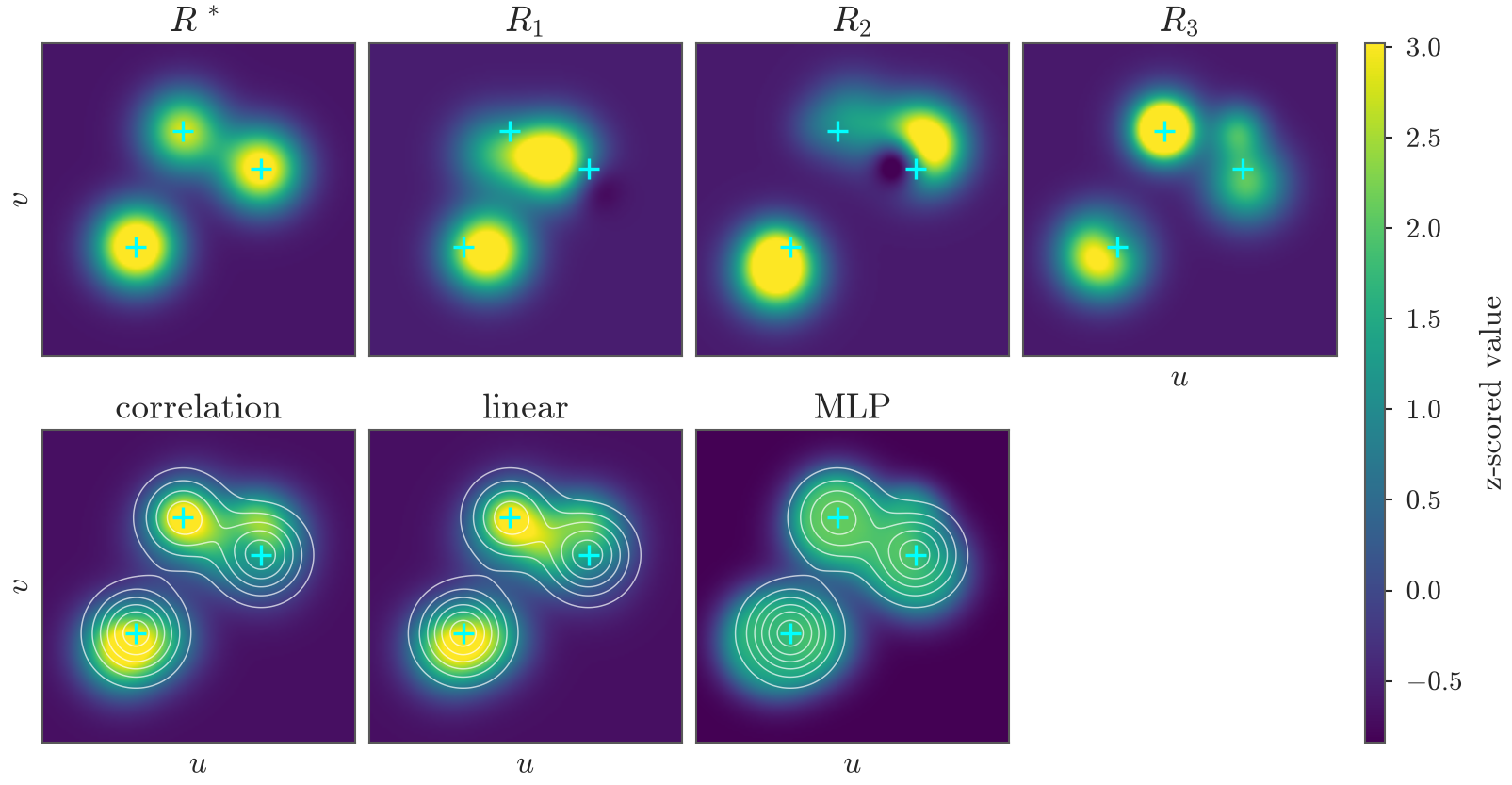}
  \caption{\small\looseness=-1
  \textbf{Preference-toy reward recovery.}
  Oracle and surrogate fields, together with correlation,
  linear, and MLP compositions. Oracle contours are
  overlaid on the fitted fields.}
  \label{fig:gmm-recovery}
\end{figure}

\looseness=-1
\xhdr{Steering}
We supply the same fitted composition to FKS, FMRG,
MFM, and It\^o steering~(\cref{fig:gmm-steered}).
Its ideal tilt $q_{\hat g}$ can differ from $q^*$,
while look-ahead, discretization, and particle
approximations introduce additional sampling error.
FMRG, MFM, and It\^o are reported at the evaluated
setting with the lowest reference-based KL; this is
retrospective sweep selection. FKS uses a fixed
configuration. The target remains fixed at $\beta=1$
throughout the guidance sweeps.

The metrics rank the steerers differently
in~\cref{tab:gmm-steer}. FMRG and achieves the lowest
KL ($0.21$), while FKS achieves the lowest $W_1$
($0.55$). FMRG also gives the largest mean oracle
reward ($0.55$), which alone does not establish
closer agreement with the full target distribution.

\begin{figure*}[htbp]
  \centering
  \includegraphics[width=\linewidth]
    {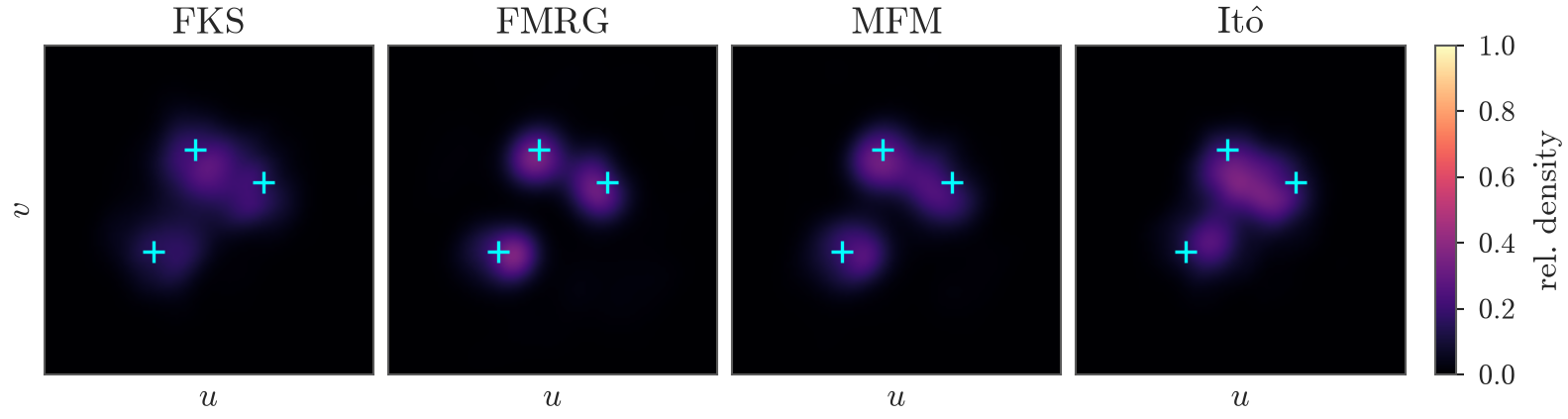}
  \caption{\small\looseness=-1
  \textbf{Preference-toy steering.}
  FKS, FMRG, MFM, and It\^o outputs under the fitted
  composition, using the density scale
  of~\cref{fig:gmm-context}.}
  \label{fig:gmm-steered}
\end{figure*}

\begin{table}[htbp]
  \centering
  \caption{\small\looseness=-1
  \textbf{Preference-toy steering.}
  Estimated distances to $q^*$ and mean oracle reward,
  averaged over five ranking draws.
  Bold marks the best value in each column.}
  \label{tab:gmm-steer}
  \begin{tabular}{lccc}
    \toprule
    Steerer
    & $W_1\downarrow$
    & $\KL(q^*\|\cdot)\downarrow$
    & Mean $R^*\uparrow$ \\
    \midrule
    FKS   & $\mathbf{0.55}$ & $0.32$          & $0.44$ \\
    FMRG  & $0.69$          & $\mathbf{0.21}$ & $\mathbf{0.55}$ \\
    MFM   & $0.57$          & $0.26$ & $0.49$ \\
    It\^o & $0.68$          & $0.43$          & $0.48$ \\
    \bottomrule
  \end{tabular}
\end{table}

\looseness=-1
\xhdr{Comparison with individual surrogates}
Using FMRG, the fitted composition achieves lower
$W_1$ and KL than any individual surrogate
in~\cref{fig:gmm-paired,tab:gmm-compose}.
Each reward uses its evaluated KL-minimizing guidance
strength, with the oracle target held fixed.
This comparison demonstrates the benefit of composition
within the evaluated sweeps.

\begin{figure}[htbp]
  \centering
  \includegraphics[width=\linewidth]
    {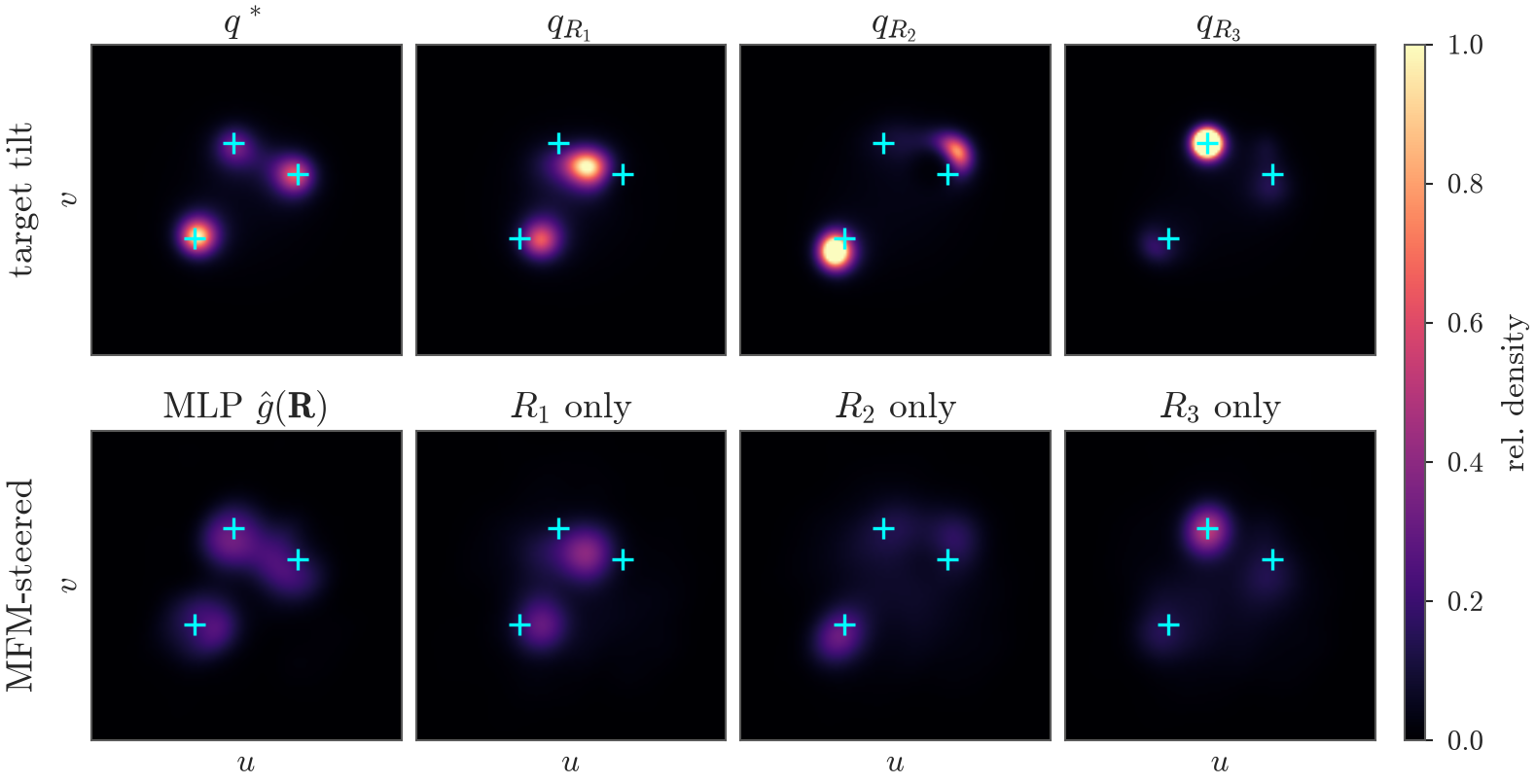}
  \caption{\small\looseness=-1
  \textbf{Composition and individual surrogates.}
  Top: oracle and single-surrogate tilts.
  Bottom: FMRG outputs using the fitted composition
  and each surrogate separately.
  Densities use the scale of~\cref{fig:gmm-context}.}
  \label{fig:gmm-paired}
\end{figure}

\begin{table}[htbp]
  \centering
  \caption{\small\looseness=-1
  \textbf{Preference-toy reward comparison.}
  FMRG using the fitted composition or an individual
  surrogate, averaged over five ranking draws.
  Lower is better; bold marks the best result.}
  \label{tab:gmm-compose}
  \begin{tabular}{lcc}
    \toprule
    Steering reward
    & $W_1\downarrow$
    & $\KL(q^*\|\cdot)\downarrow$ \\
    \midrule
    $\hat g(\mathbf R)$
    & $\mathbf{0.69}$ & $\mathbf{0.21}$ \\
    $R_1$                             & $0.79$          & $0.52$          \\
    $R_2$                              & $0.73$          & $0.47$          \\
    $R_3$                             & $0.99$          & $0.46$          \\
    \bottomrule
  \end{tabular}
\end{table}

\clearpage
\section{Qualitative examples}
\label{app:qual}

\subsection{Language Generation}

\newcommand{\iss}[1]{\textcolor{red}{#1}}
\begin{table}[h!]
\centering
\footnotesize
\setlength{\tabcolsep}{6pt}
\renewcommand{\arraystretch}{1.3}
\begin{tabular}{@{}lp{0.855\linewidth}@{}}
\toprule
\textbf{Method} & \textbf{Generated text} \\
\midrule

prior $p_{1,\phi}$
& [CLS] june. [CLS] for the moment, some muslims in america will be told to \iss{thinker} the
\iss{united states, the united states} also of course for the same for being 4, 000. [CLS] set a
hands - on date for one local resident after this three of each college that, in spades, she is
the only way when i move it to show\ldots \\
\midrule

BoN
& \iss{[CLS] [CLS]} nine officers involved in the trade said they expected 7 online, up from
their peak at \pounds 25, 500 for up to 1, 000 people to be arrested in 2006. [CLS] in the same,
it was spent about 10 minutes in the previous week\ldots \\
\addlinespace

sBoN
& [CLS] are going though not to get " women to help out " is always just thousands of childcare
- - and questions about whether orwell should not have set up but if there are simply no social
issues, we co - authors of thousands\ldots \\
\midrule

uniform
& [CLS] to \iss{[CLS] [CLS]} this \iss{[CLS]} ' s better earnings \iss{[CLS]}, or \$ 1, 000
\iss{[CLS]}. to match \iss{[CLS]}. [CLS] and will he be \iss{[CLS]} to \iss{[CLS]} once this
weekend, which will \iss{[CLS]} that it along further. [CLS] japan ' s idea \iss{[CLS]} than
\iss{[CLS]} in the \iss{[CLS] [CLS]} city \iss{withinus}\ldots \\
\addlinespace

BSS
& [CLS] leaders. [CLS] in the senate, the 2010 parliament road will be political by long, the
united states, the gas and also of announced for the public - private distribution of food.
[CLS] i last emailed on thursday on new zealand for being " six of report " that, in
\iss{bads}, she is the only survivor when i move it to li\ldots \\
\addlinespace

SLOP \emph{(draw 1)}
& \iss{[CLS] himself up at the masters again to show a time and can work on training this week.
[CLS] the end of the recession, he signed up to cutting 2, 000 jobs from 2030 by \$ 2. 1. 5.
[CLS] so might actually see your loan based on interest rates\ldots} \\
\addlinespace[2pt]

SLOP \emph{(draw 2)}
& \iss{[CLS] himself up at the masters again to show a time and can work on training this week.
[CLS] the end of the recession, he signed up to cutting 2, 000 jobs from 2030 by \$ 2. 1. 5.
[CLS] so might actually see your loan based on interest rates\ldots} \\
\addlinespace

ParetoFlow
& [CLS] a year. [CLS] " \iss{there is none of} the very how they, " she will. [CLS]
\iss{there is none of} this one, but the fact that that 100 lobby phone ' s office is not going
back to war, what did, have that of the president? [CLS] about 19s out of the then back of two -
time champion\ldots \\
\midrule

MARTINI
& [CLS] parents. [CLS] for the times, some parents in staff will be told to do, the has
investment, the act and also of out for the world for last 4 of england. [CLS] ( did this
blacks ) this one is a almost and industry of the college that, in spades, she is the only
business when i move it to off\ldots \\
\addlinespace

\bottomrule
\end{tabular}
\caption{\small\textbf{Language-generation examples.}
Red highlights repetitions and malformed spans in the
displayed excerpts. The two SLOP outputs are identical;
ParetoFlow repeats a phrase within a sample.}
\label{tab:lang-qualitative}
\end{table}

\subsection{ImageNet}

\label{app:qual-imagenet}
\Cref{fig:in-qual-c145-011,fig:in-qual-c294-008,fig:in-qual-c347-012,fig:in-qual-c555-016,fig:in-qual-c779-015}
show five of the seven class--prompt pairs in the main table.
Rows compare the unsteered prior with FMRG steering using
\nameshort's linear composition, ReNO weights, and BSS
in the value regime at $n=32$.
Each row contains that method's highest-oracle-reward samples.
This oracle-based selection is used only for illustration;
the grids do not describe typical output quality or an
oracle-free deployment selection rule.

\begin{figure}[h!]
  \centering
  \includegraphics[width=0.85\linewidth]{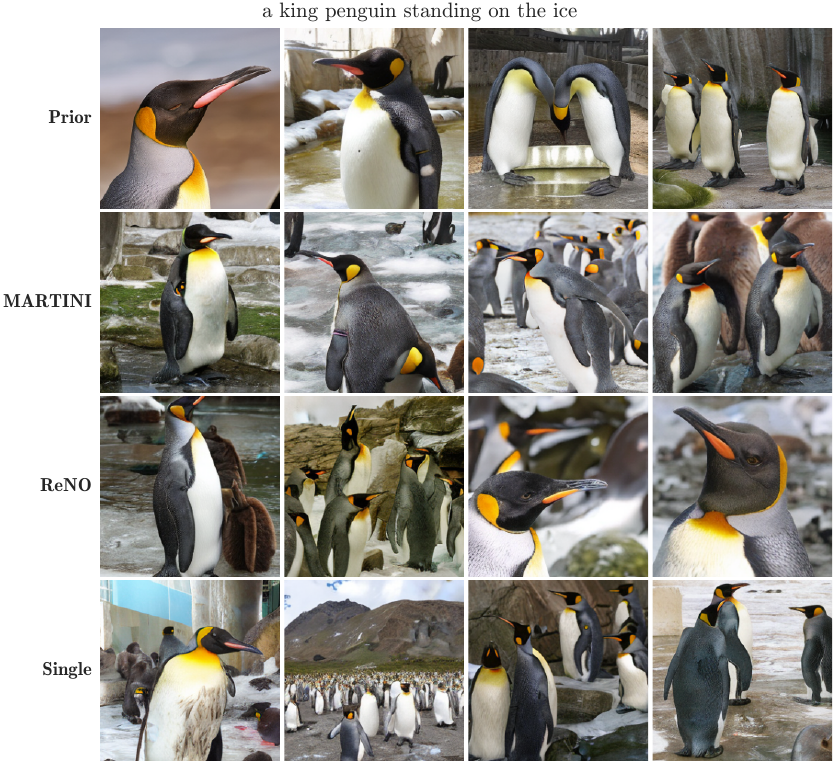}
  \caption{\small\textbf{ImageNet: king penguin.}
Prompt: ``a king penguin standing on the ice.''}
  \label{fig:in-qual-c145-011}
\end{figure}

\begin{figure}[h!]
  \centering
  \includegraphics[width=0.85\linewidth]{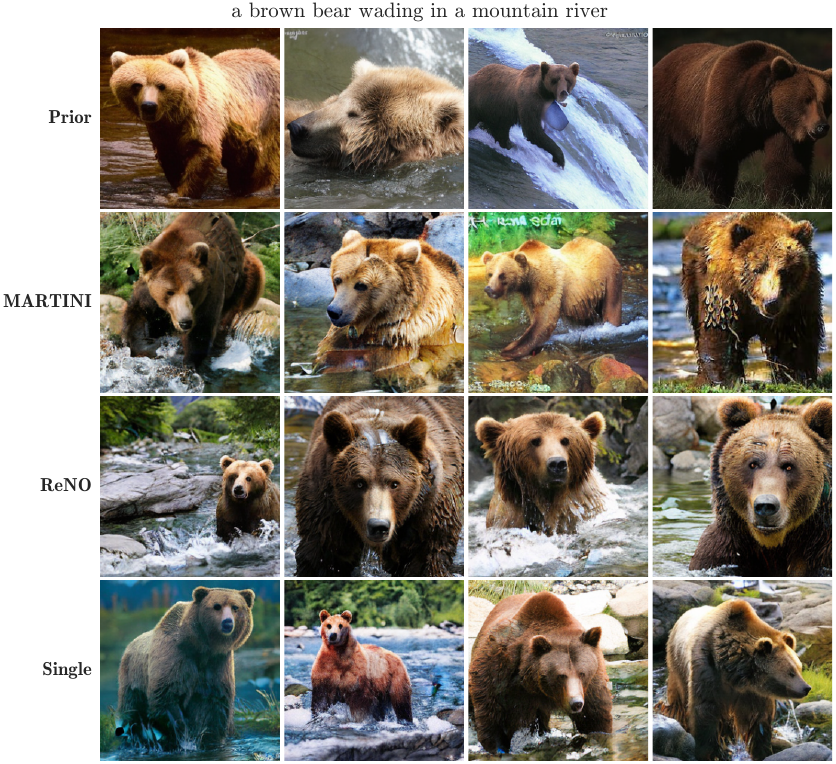}
  \caption{\small\textbf{ImageNet: brown bear.}
Prompt: ``a brown bear wading in a mountain river.''}
  \label{fig:in-qual-c294-008}
\end{figure}

\begin{figure}[h!]
  \centering
  \includegraphics[width=0.85\linewidth]{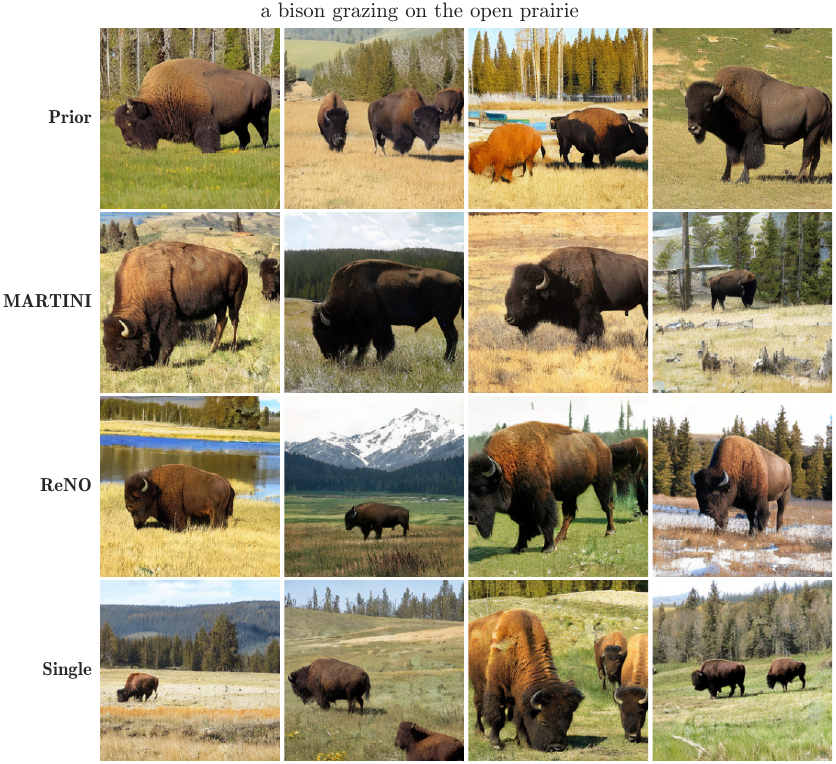}
  \caption{\small\textbf{ImageNet: bison.}
Prompt: ``a bison grazing on the open prairie.''}
  \label{fig:in-qual-c347-012}
\end{figure}

\begin{figure}[h!]
  \centering
  \includegraphics[width=0.85\linewidth]{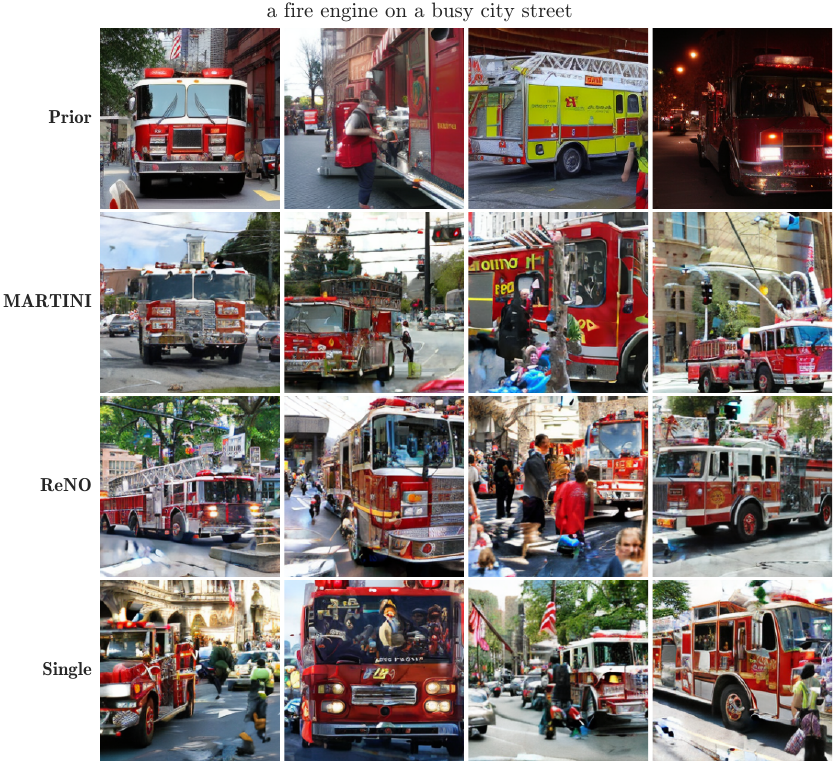}
  \caption{\small\textbf{ImageNet: fire engine.}
Prompt: ``a fire engine on a busy city street.''}
  \label{fig:in-qual-c555-016}
\end{figure}

\begin{figure}[h!]
  \centering
  \includegraphics[width=0.85\linewidth]{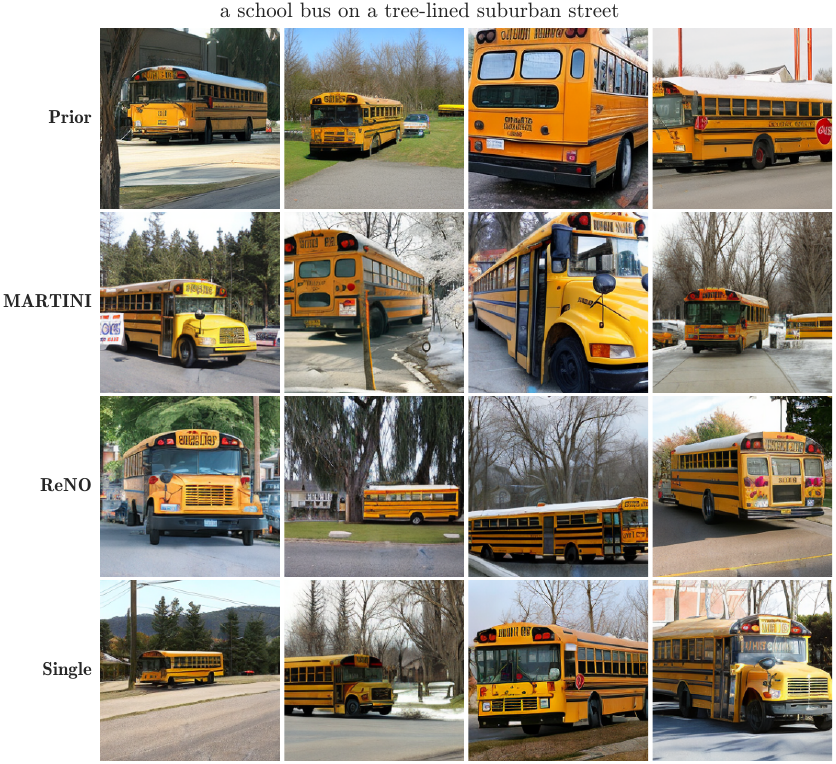}
  \caption{\small\textbf{ImageNet: school bus.}
Prompt: ``a school bus on a tree-lined suburban street.''}
  \label{fig:in-qual-c779-015}
\end{figure}

\clearpage
\subsection{Protein cofolding}

\Cref{fig:prot_gcorr15_top20_barebones} shows 20 complexes, comparing ground truth, ligand crystal, the baseline DeCAF, and the MARTINI-best [gCorrelation]. 

\begin{figure}[h!]
    \centering
    \includegraphics[width=0.98\linewidth]{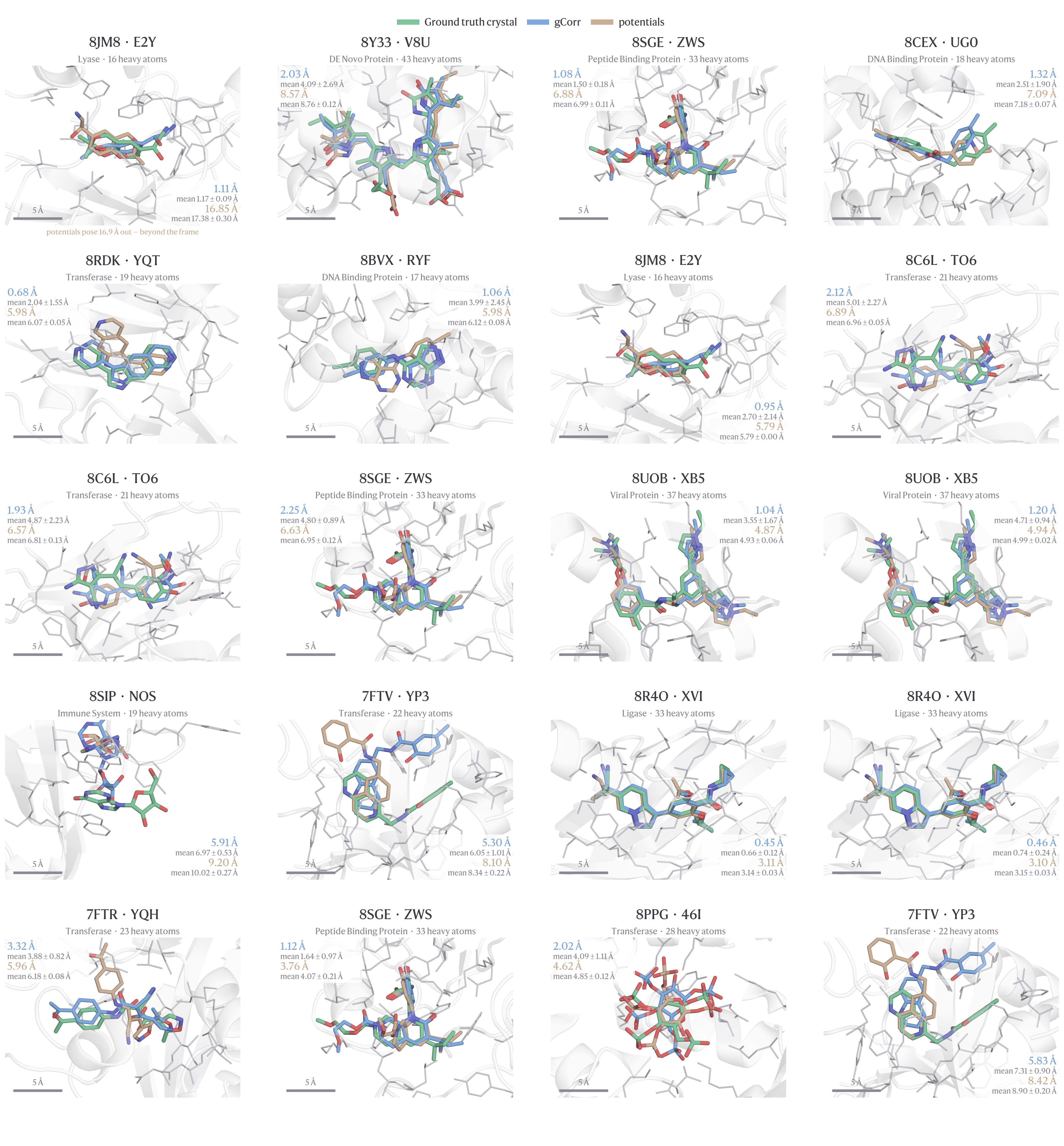}
    \caption{Top 20 complexes that outperform the baseline based on RMSD from the ground truth ligand crystal (green), the baseline DeCAF (orange), and the MARTINI-best [gCorrelation] (blue). The mean and standard deviation of the RMSD for that complex is show across the 5 seeds used to generate the particles. All steering runs here used $k=15$ particles.}
    \label{fig:prot_gcorr15_top20_barebones}
\end{figure}

\clearpage
\section{Relation to multi-objective optimisation}
\label{app:pareto}

Multi-objective optimization treats $R_1,\ldots,R_k$ as
objectives to be balanced. Here they provide imperfect
information about a single oracle reward $R^*$.
We formalize why surrogate Pareto efficiency need not
identify candidates preferred by the oracle.

For rewards to be maximized, $y$ dominates $x$ if
$R_j(y)\ge R_j(x)$ for every $j$, with at least one
strict inequality. The nondominated feasible candidates
form the Pareto set $\mathcal P$; their reward vectors
form the Pareto front $\mathbf R(\mathcal P)$.

\begin{mdframed}[style=MyFrame2]
\begin{lemma}[Increasing compositions are Pareto efficient]
\label{lem:monotone_g}
If $g$ is coordinatewise strictly increasing, every
maximizer of $g\circ\mathbf R$ over the feasible domain
belongs to the surrogate Pareto set $\mathcal P$.
\end{lemma}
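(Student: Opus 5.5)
The argument is a direct contradiction. Let $x^\star$ be a maximizer of $g\circ\mathbf R$ over the feasible domain, and suppose $x^\star\notin\mathcal P$. Then, by the definition of the Pareto set recalled above, some feasible $y$ dominates $x^\star$. That is, $R_j(y)\ge R_j(x^\star)$ for every $j$, and $R_{j_0}(y)>R_{j_0}(x^\star)$ for at least one index $j_0$.

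The key step is to turn this componentwise domination into a strict inequality for $g$. We interpret ``coordinatewise strictly increasing'' as follows: $\mathbf r\le\mathbf r'$ componentwise with $\mathbf r\ne\mathbf r'$ implies $g(\mathbf r)<g(\mathbf r')$. To derive this from the one-coordinate-at-a-time definition, move from $\mathbf R(x^\star)$ to $\mathbf R(y)$ by changing one coordinate at a time. Each step weakly increases $g$, since that coordinate is non-decreasing while the others stay fixed. The step for coordinate $j_0$ strictly increases $g$. Chaining these steps gives $g(\mathbf R(y))>g(\mathbf R(x^\star))$.

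This contradicts the maximality of $x^\star$, so $x^\star\in\mathcal P$.

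The only point needing care is this monotonicity convention. With merely weakly increasing $g$, the conclusion fails, since ties allow dominated maximizers. I would therefore state the strict coordinatewise hypothesis explicitly as above. I would also remark that the lemma makes no claim that a maximizer exists; it only says that any maximizer that does exist lies in $\mathcal P$. No earlier results from the paper are needed.
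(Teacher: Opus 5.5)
Your proof is correct and follows the same contradiction argument as the paper: a dominated maximizer $x^\star$ would give $g(\mathbf R(y))>g(\mathbf R(x^\star))$ by strict coordinatewise monotonicity. Your one-coordinate-at-a-time chaining spells out a step the paper leaves implicit, and it is valid because $g$ is defined on all of $\mathbb R^k$, so every intermediate point lies in its domain.
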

\end{mdframed}

This ordering need not agree with the oracle:

\begin{mdframed}[style=MyFrame2]
\begin{lemma}[The oracle optimum can be dominated]
\label{lem:offront}
There exist a base law and rewards with $\bar v>0$ whose
unique oracle maximizer is strictly dominated in every surrogate.
\end{lemma}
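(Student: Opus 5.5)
The plan is to build an explicit finite counterexample. The base law $p_1$ will be a discrete law on a few atoms in $\mathcal X=\mathbb R^d$, and the surrogates will take only a few values, so every quantity can be computed by hand. Since the statement is existential and needs only $\bar v>0$ plus strict domination, no asymptotics or earlier theorems are required. I only need to design the atoms so that the oracle maximizer sits on a surrogate level set that is shared with a low-oracle point.

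\textbf{Construction.} Take $k=2$ surrogates and three atoms $x_a,x_b,x_c$ with $p_1$-mass $1/3$ each. Set
\[
\mathbf R(x_a)=(0,0),\qquad \mathbf R(x_b)=(1,1),\qquad \mathbf R(x_c)=(0,0),
\]
and
\[
R^*(x_a)=2,\qquad R^*(x_b)=1,\qquad R^*(x_c)=-2 .
\]
The atom $x_c$ is included only so that $R^*$ varies on the level set $\{\mathbf R=(0,0)\}$, which makes the surrogates uninformative there.

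\textbf{Verification.} Checking the three claims is direct.
\begin{enumerate}
\item The oracle maximizer is unique: $x_a$ is the only point with $R^*=2$.
\item $x_a$ is strictly dominated in every surrogate, since $R_j(x_b)=1>0=R_j(x_a)$ for $j=1,2$.
\item $\bar v>0$. Conditional on $\mathbf R=(0,0)$, $R^*$ is uniform on $\{2,-2\}$, so its conditional variance is $4$. Conditional on $\mathbf R=(1,1)$ the variance is $0$. Hence $\bar v=\tfrac23\cdot4=\tfrac83>0$.
\end{enumerate}
It is also worth remarking that $\mu(0,0)=0<1=\mu(1,1)$. So even the conditional mean, and by Lemma~\ref{lem:monotone_g} any coordinatewise increasing composition, ranks $x_b$ above the oracle optimum. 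This is consistent with Figure~\ref{fig:moo-comparison}.

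\textbf{Possible objections.} If the paper's standing assumptions are read as requiring a continuous law, I would replace each atom by a small Gaussian. I would take $R^*$ and $\mathbf R$ continuous and locally constant near each centre, and bounded (for example clipped), so that Assumption~\ref{ass:bounded} holds. The oracle maximizer region would then lie in the dominated cluster, with uniqueness obtained by adding a tiny strictly concave bump to $R^*$ around the centre of $x_a$. This perturbation step is the only mildly delicate part: I need the bump to keep the maximizer unique and dominated while not destroying $\bar v>0$. Both are stable under small perturbations, so I expect no real obstacle.
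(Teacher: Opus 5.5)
Your counterexample is correct, but it uses a different construction from the paper's. You take a three-atom base law in which the oracle maximizer $x_a$ shares the surrogate value $(0,0)$ with the low-oracle atom $x_c$. Then $\bar v=8/3$ comes from one uninformative level set, and strict domination by $x_b$ can be read off directly. Over the support the Pareto set is just $\{x_b\}$, so the oracle optimum is trivially off the front.

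The paper instead takes $p_1=\operatorname{Unif}([-1,1]^2)$ with smooth bounded rewards $R_1=-(a+1)^2-b^2$, $R_2=-(a-1)^2-b^2$, $R^*=-3(a^2+b^2)+2b-3$. The surrogates satisfy $\mathbf R(a,b)=\mathbf R(a,-b)$, so they cannot see the sign of $b$, which the oracle's $2b$ term exploits. This gives $\bar v=4\E_{p_1}[b^2]=4/3$, the unique maximizer $(0,1/3)$, and strict domination by $(0,0)$.

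Your version is more elementary and isolates the mechanism with hand arithmetic. The paper's version shows the phenomenon persists for an absolutely continuous law with smooth rewards. It also characterizes the whole surrogate Pareto set $[-1,1]\times\{0\}$ via a triangle-inequality argument, and it matches Figure~\ref{fig:moo-comparison}.

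Two small points. First, since the paper works on $\mathcal X=\mathbb R^d$, you should say how ``unique oracle maximizer'' is made well posed. Either state that the feasible domain is $\operatorname{supp}p_1=\{x_a,x_b,x_c\}$, or extend the rewards off the atoms with $R^*<2$ there; this is a $p_1$-null change that leaves $\bar v$ unchanged. Second, you can drop your Gaussian-smoothing fallback. The standing assumptions only ask for a Borel probability law, the lemma imposes no regularity, and discrete laws are already in scope (the language setup uses a discrete $p_1$ on one-hot embeddings).
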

\end{mdframed}

\xhdr{Example}
Let $x=(a,b)\in[-1,1]^2$ and
$p_1=\operatorname{Unif}([-1,1]^2)$, with
\[
R_1(x)=-(a+1)^2-b^2,\qquad
R_2(x)=-(a-1)^2-b^2,\qquad
R^*(x)=-3(a^2+b^2)+2b-3.
\]
The surrogate Pareto set is the segment
$\mathcal P=[-1,1]\times\{0\}$, whereas the oracle
has the unique maximizer $x^\star=(0,1/3)$.
The point $(0,0)$ strictly dominates $x^\star$ on both
surrogates, but has lower oracle reward:
\[
R_1(0,0)=R_2(0,0)=-1>-10/9
=R_1(x^\star)=R_2(x^\star),
\]
\[
R^*(0,0)=-3<-8/3=R^*(x^\star).
\]
\Cref{fig:moo-comparison} illustrates the distinction.

Removing the monotonicity restriction allows compositions
to favor points outside the surrogate Pareto set, but does
not recover missing information. Here
$\mathbf R(a,b)=\mathbf R(a,-b)$, so every composition
assigns the same score to $x^\star$ and $(0,-1/3)$ even
though their oracle rewards differ.

\subsection{Proofs}
\paragraph{Lemma~\ref{lem:monotone_g}.}
If a maximizer $x$ were dominated by $y$, coordinatewise
strict increase would give
$g(\mathbf R(y))>g(\mathbf R(x))$, a contradiction.
\hfill$\square$

\paragraph{Lemma~\ref{lem:offront}.}
In the example, the surrogate values determine
\[
a=(R_2-R_1)/4,\qquad
b^2=-(R_1+R_2)/2-1-a^2.
\]
Under the uniform base law, the sign of $b$ is conditionally
symmetric. Hence
\[
\mu(x)=-3(a^2+b^2)-3,\qquad
R^*(x)-\mu(x)=2b,\qquad
\bar v=4\E_{p_1}[b^2]=4/3>0.
\]
Any point with $b\ne0$ is strictly dominated by $(a,0)$.
For a point on the segment, its distances to $(-1,0)$
and $(1,0)$ sum to $2$. A dominating point would have
a strictly smaller sum, contradicting the triangle
inequality. Thus $\mathcal P$ is exactly this segment.
Finally,
\[
R^*(a,b)=-3a^2-3(b-1/3)^2-8/3
\]
has the unique maximizer stated above, which $(0,0)$
strictly dominates in both surrogates.\hfill$\square$


\end{document}